%% file: main.tex
\documentclass[]{bytedance_seed}

\input{math_commands.tex}

\usepackage{float} 

\usepackage{url}
\usepackage{amsmath,amssymb,amsthm,bm}
\graphicspath{{figures/}{Figures/}{Appendix/Figures/}}
\usepackage{placeins}
\usepackage{etoc}
\usepackage{colortbl}
\input{cleveref_setup}
\crefname{assumption}{Assumption}{Assumptions}
\Crefname{assumption}{Assumption}{Assumptions}
\usepackage{CJKutf8}

\newtheorem{theorem}{Theorem}

\newtheorem{lemma}[theorem]{Lemma}

\newtheorem{assumption}[theorem]{Assumption}

\AddToHook{env/table/begin}{\setlength{\belowcaptionskip}{6pt}}
\AddToHook{env/table*/begin}{\setlength{\belowcaptionskip}{6pt}}
\colorlet{CorrectShade}{green!15}
\colorlet{WrongShade}{red!15}
\newcommand{\CorrectOrder}[1]{\cellcolor{CorrectShade}#1}
\newcommand{\WrongOrder}[1]{\cellcolor{WrongShade}#1}
\newcommand{\SeqOk}[1]{{\setlength{\fboxsep}{0.6pt}\colorbox{CorrectShade}{\texttt{#1}\vphantom{Ag}}}}
\newcommand{\SeqBad}[1]{{\setlength{\fboxsep}{0.6pt}\colorbox{WrongShade}{\texttt{#1}\vphantom{Ag}}}}
\newcommand{\NegDelta}[1]{\begingroup\setlength{\fboxsep}{1pt}\kern-\fboxsep\colorbox{orange!25}{#1}\kern-\fboxsep\endgroup}

\newcommand{\cL}{\mathcal L}
\newcommand{\Coracle}{\mathcal G_{\mathrm{oracle}}}
\newcommand{\Clinear}{\mathcal G_{\mathrm{linear}}}

\newcommand{\name}{phase sensitivity}
\newcommand{\Name}{Phase Sensitivity}

\fancypagestyle{firststyle}{%
  \fancyhead[L]{\vskip 4mm\includegraphics[width=53.3mm,height=6mm]{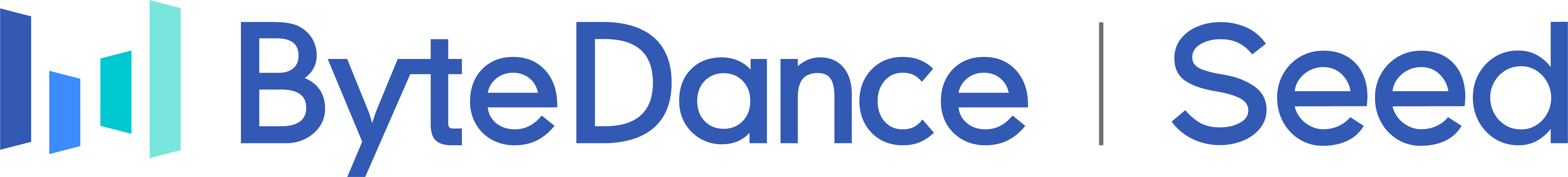}}%
  \fancyhead[R]{\vskip 4mm\raisebox{-1.025mm}{\includegraphics[height=8.05mm,trim=0 1.044 0 0.726,clip]{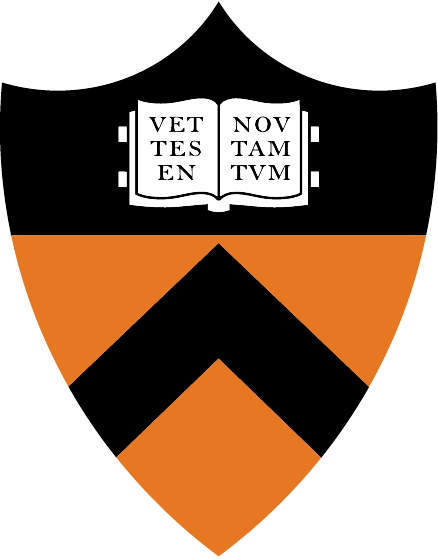}}\hspace{3.857mm}%
    \raisebox{-1.025mm}{\includegraphics[height=8.05mm,trim=1.4 1.22 0.34 1.7,clip]{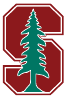}}\hspace{3.006mm}%
    \raisebox{-1.025mm}{\includegraphics[height=8.05mm,trim=0 0.738 0 0,clip]{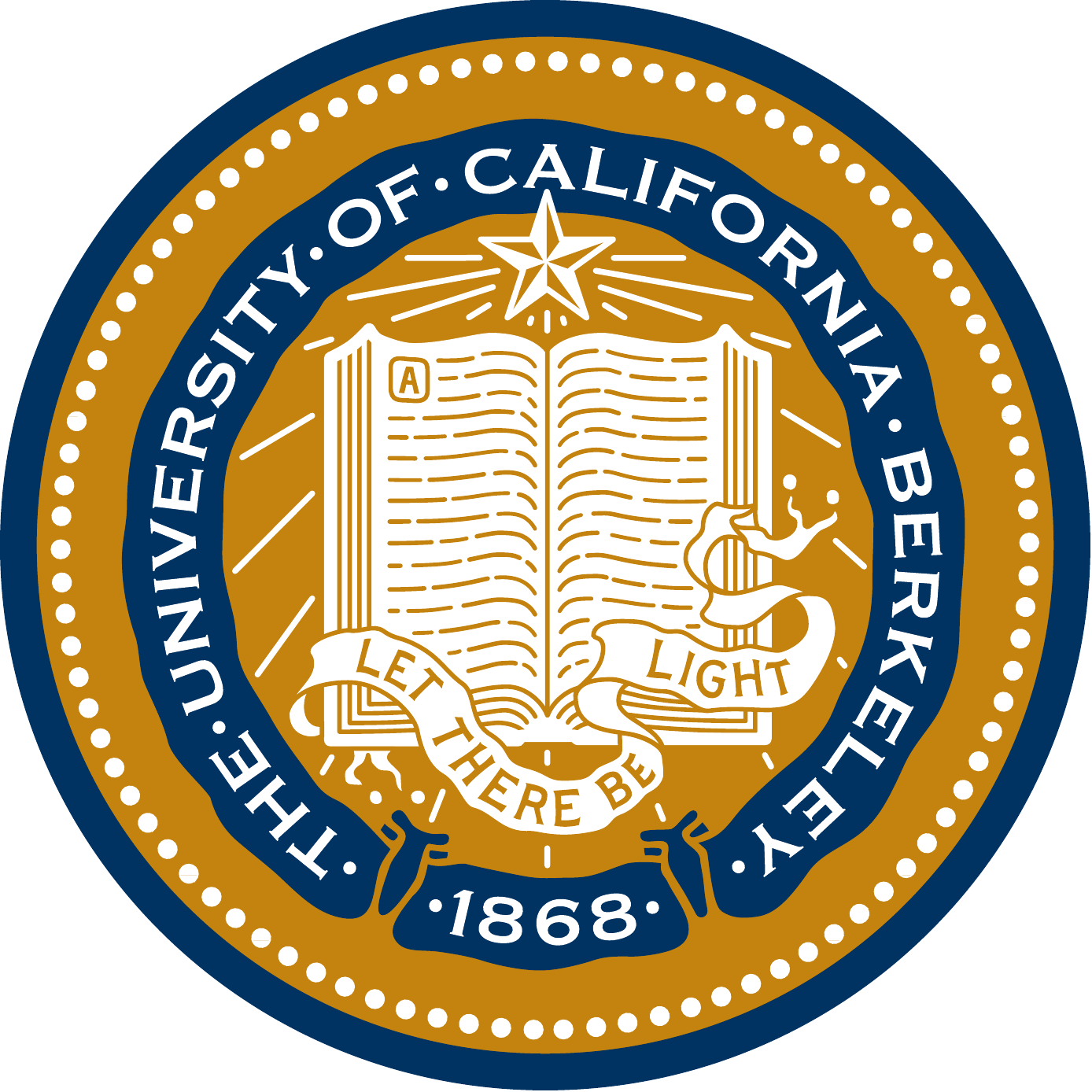}}}%
}

\title{Periodic Weak Spots:\\ Phase Sensitivity from Chunked KV-Cache Compression}

\author[1,2]{Xingyu Zhu*}
\author[1,3]{Pu (Luke) Yi*}
\author[1,4]{Ziheng Cheng*}
\author[1]{Ang Lv*}
\author[1]{Jing Liu}
\author[1]{Lexing Ying}
\author[1,\dagger]{Yiyuan Ma}
\author[1,\dagger]{Xin Dong}

\affiliation[1]{ByteDance Seed}
\affiliation[2]{Princeton University}
\affiliation[3]{Stanford University}
\affiliation[4]{University of California, Berkeley}

\makeatletter
\renewcommand\authorlist{%
  \authorformat[1,2]{Xingyu Zhu*}, \authorformat[1,3]{Pu (Luke) Yi*}, \authorformat[1,4]{Ziheng Cheng*}, \authorformat[1]{Ang Lv*},\\[1mm]
  \authorformat[1]{Jing Liu}, \authorformat[1]{Lexing Ying}, \authorformat[1,\dagger]{Yiyuan Ma}, \authorformat[1,\dagger]{Xin Dong}}
\renewcommand\affiliationlist{%
  \affiliationformat[1]{ByteDance Seed}\\[0.5mm]
  \affiliationformat[2]{Princeton University}, \affiliationformat[3]{Stanford University}, \affiliationformat[4]{University of California, Berkeley}}
\makeatother

\contribution[*]{Core contributors. Work done during internship at ByteDance Seed}
\contribution[\dagger]{Corresponding authors}

\abstract{\input{Sections/abstract}}

\date{\today}
\correspondence{\begin{tabular}[t]{@{}l@{}}Yiyuan Ma at \email{mayiyuan.unicorn@bytedance.com}\\ Xin Dong at \email{xin.dong@bytedance.com}\end{tabular}}

\begin{document}

\maketitle

\etocdepthtag.toc{main}
\input{Sections/introduction}
\input{Sections/mod_sensitivity}
\input{Sections/controlled_pretraining}
\input{Sections/specialization}
\input{Sections/theory}
\input{Sections/related_work}
\input{Sections/conclusion}

\clearpage

\bibliographystyle{plainnat}
\bibliography{references}

\clearpage

\beginappendix
\etocdepthtag.toc{appendix}
\input{Appendix/contents}
\input{Appendix/Sections/induction_head}
\input{Appendix/Sections/mod_sensitivity_details}
\input{Appendix/Sections/small_model_niah}
\input{Appendix/Sections/controlled_pretraining_details}
\input{Appendix/Sections/specialization_details}

\input{Appendix/Sections/complete_sweep}

\input{Appendix/Sections/gate_knockout_comparison}
\input{Appendix/Sections/gate_cycling}
\input{Appendix/Sections/extended_related_work}

\end{document}

%% file: math_commands.tex
\usepackage{amsmath,amsfonts,bm}

\def\1{\bm{1}}

\def\vmu{{\bm{\mu}}}

\def\valpha{{\bm{\alpha}}}
\def\va{{\bm{a}}}
\def\vb{{\bm{b}}}
\def\vc{{\bm{c}}}

\def\ve{{\bm{e}}}

\def\vh{{\bm{h}}}

\def\vk{{\bm{k}}}

\def\vo{{\bm{o}}}
\def\vp{{\bm{p}}}
\def\vq{{\bm{q}}}
\def\vr{{\bm{r}}}
\def\vs{{\bm{s}}}

\def\vv{{\bm{v}}}

\def\vx{{\bm{x}}}
\def\vy{{\bm{y}}}
\def\vz{{\bm{z}}}

\def\mC{{\bm{C}}}

\def\mE{{\bm{E}}}

\def\mW{{\bm{W}}}
\def\mX{{\bm{X}}}

\def\mZ{{\bm{Z}}}

\def\vO{{\bm{O}}}

\DeclareMathAlphabet{\mathsfit}{\encodingdefault}{\sfdefault}{m}{sl}
\SetMathAlphabet{\mathsfit}{bold}{\encodingdefault}{\sfdefault}{bx}{n}

\newcommand{\E}{\mathbb{E}}

\newcommand{\R}{\mathbb{R}}

%% file: cleveref_setup.tex
\usepackage[nameinlink,noabbrev]{cleveref}
\usepackage{etoolbox}

\AtBeginEnvironment{proposition}{\crefalias{theorem}{proposition}}
\AtBeginEnvironment{lemma}{\crefalias{theorem}{lemma}}
\AtBeginEnvironment{corollary}{\crefalias{theorem}{corollary}}
\AtBeginEnvironment{assumption}{\crefalias{theorem}{assumption}}
\AtBeginEnvironment{remark}{\crefalias{theorem}{remark}}

\AddToHook{cmd/appendix/after}{%
  \crefalias{section}{appendix}%
  \crefalias{subsection}{subappendix}%
  \crefalias{subsubsection}{subsubappendix}%
}

%% file: Sections/abstract.tex
Chunked KV-cache compression reduces the memory and attention costs of long-context inference by compressing windows of consecutive tokens into fewer cache entries at a fixed stride. Such compression also introduces a new positional coordinate: a token's phase, or its position relative to compression-window boundaries. We uncover a systematic asymmetry in models using such compression: the same information can be easy to retrieve at one phase and difficult at another. We call this periodic variation in retrieval performance phase sensitivity. In large open-weight models with such compression, long-context retrieval accuracy can differ by up to 40 percentage points across phases, revealing periodic weak spots that average benchmark scores can conceal.

\vspace{0.1in}

To investigate this behavior, we pretrain a family of transformers from scratch across multiple KV-compression designs, reproducing phase sensitivity across the variants. Mechanistic analysis using causal interventions in these models reveals phase specialization: different attention components contribute asymmetrically to retrieving information at different source phases. We further analyze idealized retrieval models, showing how gradient flow dynamics may favor sharp phase specialization. Evaluating models with chunked KV-cache compression thus requires measuring across compression phases: high average accuracy can coexist with systematic positional failures.

%% file: Sections/introduction.tex
\vspace{0.2in}
\section{Introduction}\label{sec:intro}

Long-context inference in transformers can be bottlenecked by the KV cache, whose memory and attention costs grow with context length. A practical remedy, used in models such as DeepSeek-V4~\citep{deepseek2026v4}, is \emph{chunked KV-cache compression}: the model groups consecutive tokens into fixed-size windows, compresses each window into fewer cache entries, and uses these summaries for long-context retrieval~\citep{rae2020compressive,wang2024loma}. The compression happens as context is processed, before a later query specifies what information will need to be recalled. Each summary is therefore query-agnostic and must preserve information that may be needed later.

This design imposes a periodic structure on the context. A new compression window begins every $S$ tokens, where $S$ is the compression stride. Each token thus has a new  coordinate relative to these boundaries beyond its absolute position. We call this coordinate its \emph{phase}, defined as its position modulo $S$. Shifting the input by a few tokens can change which tokens are compressed together without changing their content or relative positions. The same information may thus be compressed differently depending on its phase, and hence retained with different fidelity.

In LLMs trained with chunked KV-cache compression, we find that retrieval performance varies sharply and periodically with the phase of the source information, a failure mode we term \emph{\name} (\Cref{sec:modsens}). A motivating example comes from asking DeepSeek-V4 to complete its own inference code (\Cref{fig:headline}). When we prepend a decorative comment and vary only its length, the model's preferred completion of one token flips with a period of four tokens, matching its compression stride. We then measure retrieval directly through a controlled needle-in-a-haystack evaluation (\Cref{fig:v4_niah}). At 128K context, retrieval accuracy in the DeepSeek-V4 family can differ by up to 40 percentage points across phases. To the best of our knowledge, this is the first report of such internal periodic variation, which average benchmark scores can conceal.

To test whether chunked KV compression is the source of such variation, we pretrain from scratch a broad family of transformers with the Qwen3-0.6B architecture~\citep{qwen3technicalreport} as backbone and KV compression as the only substantial architectural modification (\Cref{sec:controlled}). Every compressed model we evaluate shows periodicity matching the compression stride, including models that simply average token representations within each window. By contrast, full-attention baselines have no comparable periodicity. A natural first explanation is the window boundary itself. In our needle-in-a-haystack task, each key is immediately followed by its value, so with non-overlapping windows a key--value pair either falls inside one window or is split across two, and a split pair is plausibly harder to retrieve. However, boundaries alone do not account for the full periodic pattern: accuracy also varies widely across phases that keep the key and its value within a single window. This suggests a deeper look into how the model writes information into compressed memory and reads it back.

Causal interventions on KV heads and layers support this view: retrieval at different phases depends differently on different heads, a systematic pattern we call \emph{phase specialization} (\Cref{sec:head-knockouts}). The pattern appears in both our pretrained models and DeepSeek-V4. To examine how such specialization can develop, we study compression gates, which determine how much each within-window position contributes to a compressed entry (\Cref{sec:qwen3-spec}). In a pretrained model whose gates can learn separate positional preferences, some heads develop persistent preferences during pretraining that recur across inputs. Their intervention effects are strongest near the phases they favor, supporting a division of retrieval work among the examined heads.

The emergence of these preferences suggests that training can favor selective retention even when future queries are unknown. We examine this possibility in an idealized induction task with compressed attention (\Cref{sec:theory}), where mixing information in the compression window can weaken the match between a compressed key and a later query. We show that the optimal compressor favors particular phases, and gradient flow drives gates toward persistent preferences. Together, these findings show that average accuracy is not enough to evaluate models with chunked KV-cache compression. Their reliability should also be measured across phases, for example by shifting the input by a few tokens while keeping the queried information fixed.

%% file: Sections/mod_sensitivity.tex
\section{Stride-Periodic Behavior in DeepSeek-V4 and DeepSeek-V4.1}\label{sec:modsens}
\newcommand{\CodeIn}[1]{{\small\textbf{\texttt{#1}}}}
\newcommand{\LeaveOut}[1]{}

\begin{figure}[tb]
    \centering
    \includegraphics[width=0.98\linewidth]{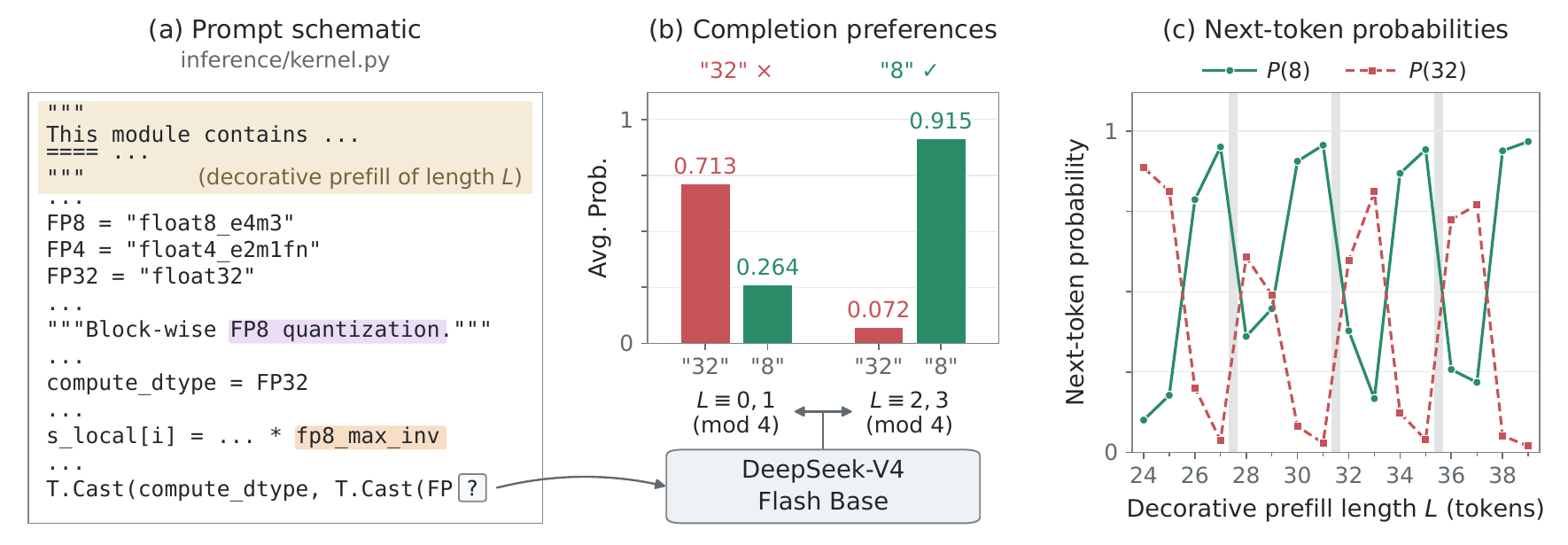}
    \caption{
    \textbf{Next-token predictions by DeepSeek-V4-Flash-Base on a code snippet.} The prediction flip periodically as a variable-length decorative filler shifts the input.
    (a) The decorative filler is a docstring whose length $L$ varies by the number of  ``\CodeIn{=}'' characters.
    The ground-truth completion is ``\CodeIn{8}'', supported by the highlighted components in the context.
    (b) Averaged over the eight lengths in each group, the model places higher probability on the incorrect ``\CodeIn{32}'' when $L\bmod4\in\{0,1\}$ and on the correct ``\CodeIn{8}'' when $L\bmod4\in\{2,3\}$.
    (c) The preferred completion switches every two tokens and the pattern repeats every four, matching DeepSeek-V4's compression stride.
    }
    \label{fig:headline}
\end{figure}

\subsection{A Periodically Flipping Code-Completion Example}\label{sec:v4-code}

We begin by asking DeepSeek-V4-Flash-Base to complete DeepSeek-V4's official inference code (\Cref{fig:headline}a). The prompt ends at \CodeIn{FP} inside a type cast in an FP8 quantization function, and the original source continues with ``\CodeIn{8}''. The surrounding code favors this continuation. The function quantizes to FP8, while the outer cast already converts to \CodeIn{compute\_dtype}, which is FP32. Completing ``\CodeIn{32}'' would duplicate the outer cast and skip the rounding step.

To test whether this prediction depends on position, we prepend a decorative docstring and vary its length by changing the number of repeated ``\CodeIn{=}'' characters on its second line. The code is unchanged, and each additional filler token shifts the code and the completion position by one. Across 16 filler lengths $L$ from 24 to 39 tokens, the top-1 prediction flips periodically (\Cref{fig:headline}b,c). In particular, the model predicts \CodeIn{8} when $L\bmod4\in\{2,3\}$ and \CodeIn{32} at the remaining eight lengths, and $P(\text{\CodeIn{8}})-P(\text{\CodeIn{32}})$ ranges from $-0.79$ to $0.95$. Changing only the docstring length thus reverses the preferred continuation. The reversals repeat every four tokens, matching the stride $S=4$ of DeepSeek-V4's compressed sparse attention (CSA).

This match suggests that the prediction depends on where the code falls relative to the compression windows. The reversals are not limited to one filler. In all four filler families we tested, most predictions follow the same four-token pattern (\Cref{app:v4-code-details}), and the post-trained DeepSeek-V4-Flash-0731 and DeepSeek-V4.1-Flash~\citep{deepseek2026v41} also show periodic reversals under several filler sweeps, matching their compression strides ($S=4$ and $S=2$ respectively). In contrast, DeepSeek-V3.1-Base, without chunked KV compression, ranks \CodeIn{32} above \CodeIn{8} at only 4 of the same 64 inputs, with very small margins (\Cref{app:v4-code-posttrained}).
We give the exact inputs, results for the other models, and two further code-completion examples in \Cref{app:v4}.

\subsection{Measuring Phase Sensitivity in Retrieval}\label{sec:v4-niah}

To measure this effect more systematically, we turn to a controlled key--value based retrieval task in which we can measure the model's capability of retrieving the same piece of information, but at different relative positions with respect to the compression boundaries. For compression stride $S$, we call $t\bmod S$ the \emph{phase} of a key-value pair starting at position $t$, and we say that retrieval is \emph{phase-sensitive} when certain retrieval metric, such as accuracy or the probability assigned to the correct answer, changes systematically with the phase of the source.

We evaluate the base and post-trained versions of DeepSeek-V4-Flash and DeepSeek-V4-Pro, as well as the post-trained DeepSeek-V4.1-Flash, on single-token needle-in-a-haystack (NIAH) retrieval. Each 128K-token prompt holds 16k key--value pairs and asks for the value of a source key near its middle (\Cref{fig:v4_niah}a).
We sort the prompts into eight residue groups by the source key's position modulo eight, which covers two stride cycles for DeepSeek-V4 and four for DeepSeek-V4.1. All groups share the same prompt length, query position, and key--value bindings, and we choose their filler lengths so that the mean and variance of the source key's absolute position are the same in every group. We defer details about the construction of the retrieval dataset to \Cref{app:v4-niah-details}.

\begin{figure}[t]
    \centering
    \includegraphics[width=\linewidth]{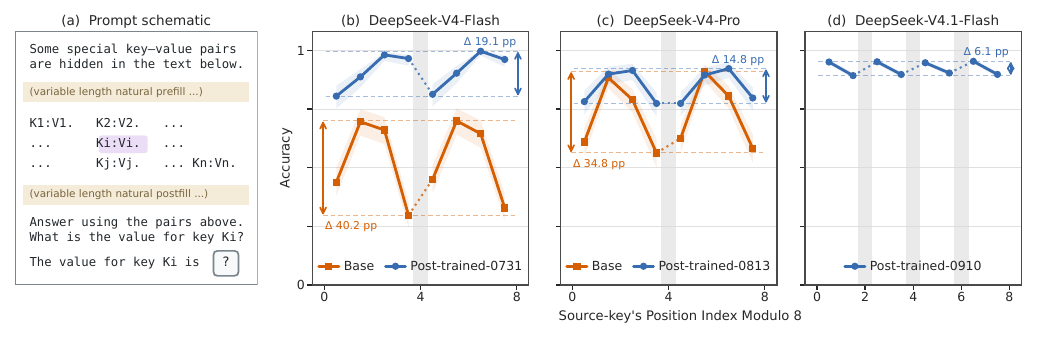}
    \caption{
    \textbf{NIAH retrieval accuracy of DeepSeek-V4 family for key-value pairs with different offsets.}
    All five models exhibit \emph{\name}, with a period of 2 or 4 matching the CSA stride.
    An answer counts as correct if, after stripping whitespace, it begins with the gold value.
    The marker for residue $r$ of the target key's position modulo 8 sits between ticks $r$ and $r{+}1$.
    Gray bands mark stride boundaries, shading shows two-sided 95\% Wilson intervals, and $\Delta$ is the best-to-worst gap.
    }
    \label{fig:v4_niah}
\end{figure}

In \Cref{fig:v4_niah}, we can see that the base checkpoints are sharply phase-sensitive, with maximum gaps of up to 40.2 percentage points. Post-training raises accuracy and narrows these gaps, but the gaps of the post-trained DeepSeek-V4 models remain large. DeepSeek-V4.1-Flash has the smallest gap, yet all four of its even residue groups remain above all four odd ones. Average accuracy alone can conceal these differences across phases.

%% file: Sections/controlled_pretraining.tex
\section{Controlled Pretraining Studies}\label{sec:controlled}

DeepSeek-V4 differs from a standard transformer in many ways besides chunked KV compression, and we cannot retrain it without compression to see whether the periodicity disappears. To test whether chunked compression is the source of \name, we pretrain a Qwen3-derived family of models from scratch in which chunked compression is the only substantial architectural change, together with full-attention baselines trained the same way. We also vary how compression is performed to see which aspects of \name, such as its period, depend on that design. These models also support the mechanistic analyses in \Cref{sec:specialization}.

\subsection{A Family of KV Compression Kernels}\label{sec:setup}

Each kernel compresses a window into one key and one value per KV head. A \emph{payload map} sets what each token contributes, and a \emph{compression gate} sets how strongly it enters the summary.

For one head, let \(\vh_{j,i}\in\mathbb R^d\) be the hidden state at offset \(0\leq i<W\) in a window \(j\) of size \(W\). For branch \(B\in\{K,V\}\), the payload map, gate map, and positional bias have shapes \(\mC_i^B\in\mathbb R^{d_h\times d}\), \(\mZ_i^B\in\mathbb R^{d_g\times d}\), and \(\vb_i^B\in\mathbb R^{d_g}\). Here \(d\) is the hidden width, \(d_h\) is the head dimension, and \(d_g\) is the gate dimension. For the key branch, we compute gate logits
\begin{equation}\label{eq:gate-logit}
\vs_{j,i}^K=\mZ_i^K\vh_{j,i}+\vb_i^K.
\end{equation}
A softmax over the window turns these logits into gate scores, which weight the payloads:
\begin{equation}\label{eq:controlled-compression}
\valpha_{j,i}^K
=\frac{\exp(\vs_{j,i}^K)}{\sum_{u=0}^{W-1}\exp(\vs_{j,u}^K)},
\qquad
\widehat K_j
=\sum_{i=0}^{W-1}\valpha_{j,i}^K\odot\left(\mC_i^K\vh_{j,i}\right).
\end{equation}
Scalar gates (\(d_g=1\)) weight all payload dimensions together, while vector gates (\(d_g=d_h\)) weight them separately. Softmax acts along the offset dimension, and \(\odot\) broadcasts scalar scores or multiplies element-wise. The value branch is analogous.

We vary this family to test candidate explanations of \name. Our reference model has one KV head per layer and uses non-overlapping eight-token windows with scalar gates. Window size \(W\) sets how many tokens an entry summarizes, and stride \(S\) sets how often an entry is produced, so the reference is W8/S8. Varying them separately shows whether the period follows the window or the stride, and whether overlapping windows (\(S<W\)), in which a token appears at several offsets while its phase stays fixed, remove the effect. Replacing learned gates with uniform averaging, which fixes each gate score at \(1/W\), tests whether \name\ requires learned gates. We describe parameter-sharing variants in \Cref{app:scratch}. The family also includes DeepSeek-V4's CSA kernel~\citep{deepseek2026v4}, which tests whether the kernel studied in \Cref{sec:modsens} produces \name while being used on the Qwen backbone.

\begin{figure}[b]
\centering
\includegraphics[width=\linewidth]{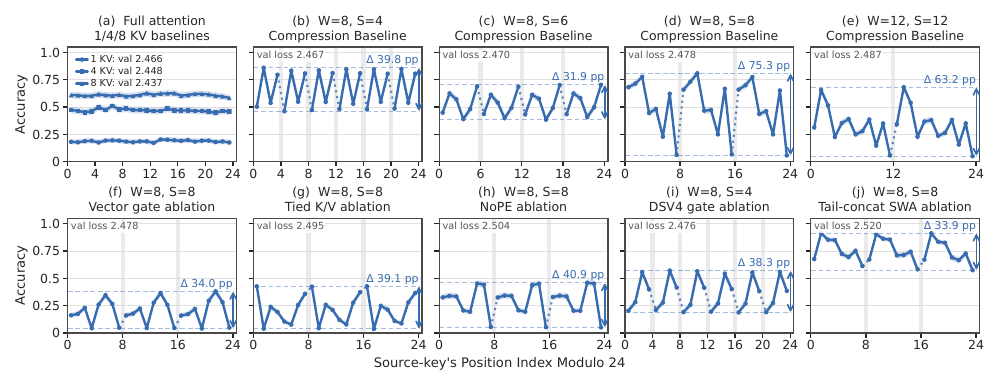}
\caption{
\textbf{Phase-sensitive retrieval accuracy by source-key position modulo 24.}
(a) shows full-attention baselines with one, four, and eight KV heads per layer.
(b)--(e) show W8/S4, W8/S6, W8/S8, and W12/S12 models that otherwise match the one-KV-head reference.
Ablations (f) uses vector gates and (g)--(j) vary K/V tying, positional encoding, compression kernel, and local-memory policy. Phase sensitivity persists across the compressed-model panels.
}
\label{fig:phase-accuracy}
\end{figure}
To measure \name\ in these models, we adapt the NIAH task of \Cref{sec:v4-niah} to 64 random key--value pairs with adjacent single-token keys and values, and report full-vocabulary next-token accuracy. \emph{Prefix Padding} follows the construction in \Cref{sec:v4-niah}, shifting all records together with outer fillers. Because the records sit back to back, this shift changes every distractor's phase along with the target's, so a phase effect could in principle come from the distractors rather than the target. We therefore add \emph{In-sequence Padding}, which varies the gaps between records so that distractor phases no longer move in lockstep with the target. Within each protocol, we fix the query position and match the mean and variance of the target-position sampling distributions across residue groups of source-key position modulo 24, a common multiple of the tested strides (\Cref{app:phase-balanced-evaluation}).

\subsection{\Name\ across the Kernel Family}\label{sec:phase-accuracy}

If \name\ arises from chunked compression, compressed models should show it and full-attention baselines should not. We see this contrast in \Cref{fig:phase-accuracy}. Every compressed model varies periodically with the source-key phase, including the DeepSeek-style kernel (\Cref{fig:phase-accuracy}i), while the full-attention baselines stay flat (\Cref{fig:phase-accuracy}a). The same holds for every compressed run in \Cref{tab:controlled-sweep-results}, with Prefix Padding gaps of up to 78 percentage points against at most 6.1 for the baselines.

Note that summary metrics conceal much of this difference. With eight KV heads, the W8/S8 compressed model nearly matches its full-attention baseline in mean accuracy and validation loss, yet its worst residue is only 9.9\%, against 58.4\% for the baseline. At one and four KV heads, the compressed models reach higher mean accuracy than the baselines at comparable validation loss.

Varying window and stride separately shows that the period follows the stride in the tested geometries, whether or not windows overlap. At \(W=8\), strides \(S\in\{4,6,8\}\) give periods of about four, six, and eight tokens (\Cref{fig:phase-accuracy}b--d). W4/S4 shares W8/S4's four-token period despite its smaller window (\Cref{fig:controlled-group-compression-geometry}), and W12/S12 produces an approximately twelve-token period (\Cref{fig:phase-accuracy}e).
In these experiments, \name\ needs neither rotary position embeddings (RoPE) nor learned gates: it persists without RoPE (\Cref{fig:phase-accuracy}h) and when uniform averaging replaces the gates (\Cref{fig:controlled-group-compression-operator}). In \Cref{app:complete-controlled-sweep}, we extend these results across head counts, random seeds, and other variants, and to In-sequence Padding, so the pattern does not require all records to shift together.

A natural first explanation is the window boundary, since adjacent keys and values can fall in separate windows. In the W8/S8 reference, only a key at phase seven has its value in the next window. Yet accuracy varies by more than 50 percentage points among phases zero through six, whose pairs share a window (\Cref{fig:phase-accuracy}d). The boundary does not capture the full accuracy trend. This remaining variation suggests that the answer lies in how the model writes to and reads from compressed memory, so we next ask how retrieval at each phase depends on individual KV heads (\Cref{sec:specialization}).

%% file: Sections/specialization.tex
\section{Phase Specialization}\label{sec:specialization}

Retrieval at each phase runs through the attention heads that write and read compressed entries, and a head may serve some phases more reliably than others. We call this \emph{phase specialization}: the contributions of some heads to retrieval depend systematically on phase, so different heads contribute asymmetrically to information at different phases. The definition presupposes no cause, whether boundary straddling or within-window differences.

We measure this asymmetry with knockouts in our pretrained models and DeepSeek-V4-Flash-Base (\Cref{sec:head-knockouts}), and then examine a sharper form in one model, where a head's preference for some phases is also visible in how it compresses (\Cref{sec:qwen3-spec}).

\subsection{Probing Phase Specialization through KV-Head Knockouts}
\label{sec:head-knockouts}

To find which heads matter for retrieval at each phase, we knock out one KV head at a time~\citep{olsson2022induction,wu2025retrieval} and see where accuracy drops. In particular, we use \emph{mean replacement}~\citep{wang2023interpretability}, which replaces its output at the intervened positions with its average over many other prompts, so that output no longer carries anything specific to the current prompt, such as which value belongs to the queried key. A drop in accuracy at a phase is evidence that the head's prompt-specific output contributes causally to retrieval at that phase.

\begin{figure}[b]
\centering
\includegraphics[width=\linewidth]{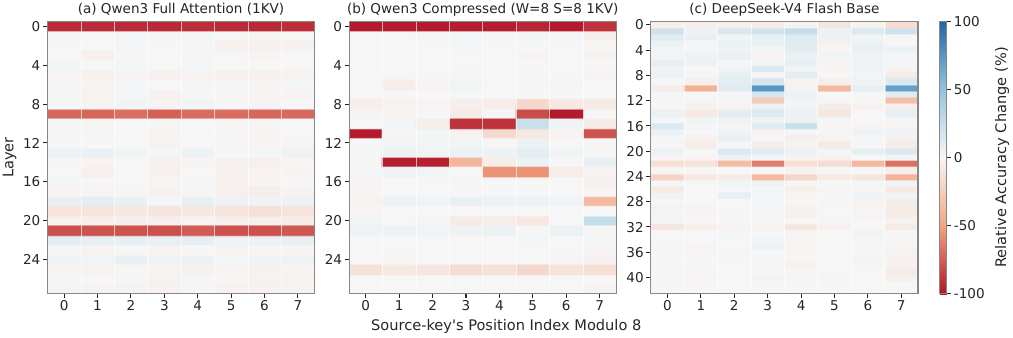}
\caption{\textbf{Retrieval contributions differ across phases.}
(a) Qwen3 full attention, (b) Qwen3 with W8/S8 compression, and (c) DeepSeek-V4-Flash-Base, all with one KV head per layer. Each row knocks out one layer's KV head, and columns group source positions modulo eight. Colors show accuracy change divided by clean accuracy at each phase. The model without chunked KV compression (a) shows uniform horizontal stripes across residue groups, indicating that each layer contributes similarly at every source position, while the models with chunked KV compression (b,c) show heterogeneous patterns across the residues of the target key. In (c), the pattern also repeats with period four, matching the CSA stride, and the odd layers, which use heavily compressed attention (HCA) with compression rate 128, contribute little to retrieval.}
\label{fig:phase-knockouts}
\end{figure}

In our pretrained models, we run these knockouts on the same evaluation sets used in \Cref{sec:controlled}, replacing the outputs of the query heads that read the KV head at the final position with their means over a disjoint set of calibration prompts. The computation at other token positions stays unchanged. We adopt a similar knockout strategy for DeepSeek-V4-Flash-Base and replace a whole layer's attention output at a band of query suffix, with 256 evaluation prompts per residue group (see \Cref{app:patching-deepseek} for details).
We visualize the effect of knocking out each head on retrieval performance for information at each phase in \Cref{fig:phase-knockouts}. The knockout maps differ between full attention and compression. In the full-attention baseline, the layers with large losses reduce accuracy by similar amounts at every position (\Cref{fig:phase-knockouts}a). In the W8/S8 compressed model, some heads have losses concentrated on a few phases, and different heads have different phase-dependent effects (\Cref{fig:phase-knockouts}b). In DeepSeek-V4-Flash-Base, the largest losses affect every phase unevenly and a few earlier layers affect only some phases, repeating every four positions, its compression stride (\Cref{fig:phase-knockouts}c).

This pattern is not specific to one model. In \Cref{app:complete-controlled-sweep}, we give knockout maps for every compressed model in \Cref{tab:controlled-sweep-results} under both padding protocols. In most compressed models, some heads have losses concentrated on a few phases that repeat every stride, and in the others, losses spread over all phases with periodic variation. With several KV heads, some individual heads still show losses concentrated on one or two phases, so the asymmetry is not only a property of layers (\Cref{fig:controlled-group-native-kv-head-count}).

\subsection{A Case Study of Emerging Gate Preferences}
\label{sec:qwen3-spec}

Knockouts show which heads matter at which phases, but not why. A natural candidate for the source of cause is the compression gate, which sets each offset's weight in a compressed entry. A head whose gate persistently favors some offsets should matter most at the corresponding phases. We test this in the W8/S8 reference model, whose scalar key and value gates have separate parameters for each offset and whose windows do not overlap ($W=S$), so offset $i$ is phase $i$.

Because gate scores depend on the input, a gate can concentrate its scores among offsets in two ways. \emph{Static concentration} favors the same offsets in every window, whereas \emph{input-dependent concentration} favors offsets that vary with content. Only static concentration ties a head to fixed phases. We measure both with the \emph{effective number of offsets}, defined as $\exp(H(\cdot))$, where $H$ is Shannon entropy. This metric equals 1 for a one-hot distribution and $W$ for a uniform one. With $\valpha\in\R^W$ the gate scores of a window and expectations over held-out natural-language text, we measure static concentration by $\exp(H(\mathbb E[\valpha]))$ and the average per-window concentration by $\mathbb E[\exp(H(\valpha))]$. A gate that applies the same scores to every window would be near the diagonal of \Cref{fig:phase-emergence}a, since expectation would commute when $\valpha$ is approximately constant.

\begin{figure}[b]
\centering
\includegraphics[width=\textwidth]{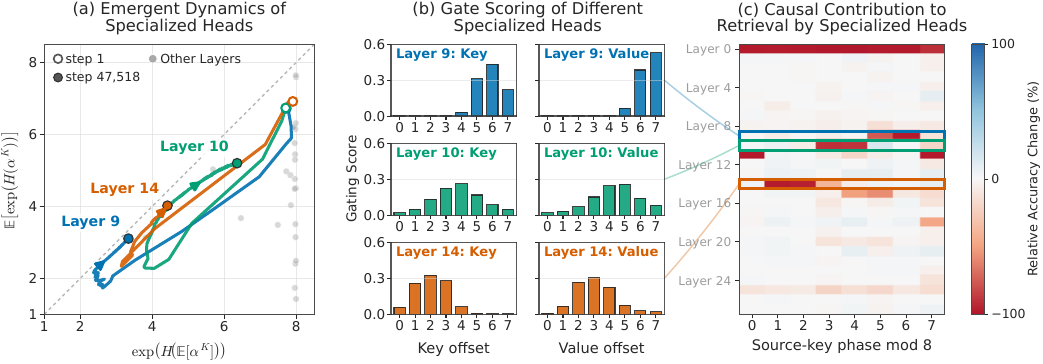}
\caption{\textbf{Emergence and retrieval roles of gate preferences.}
(a) Key-gate trajectories during pretraining: some heads drift along the diagonal and become statically concentrated (as highlighted), while others remain unbiased in expectation with $\exp(H(\mathbb E[\valpha]))\approx 8 $. (b) The highlighted layers 14, 10, and 9 develop static preferences for early, middle, and late offsets in the compression window. (c) For the statically concentrated heads, their peaking phases directly match where they are causally important for retrieval. This is the same figure as \Cref{fig:phase-knockouts}b for Qwen3 with W8/S8 compression.}
\label{fig:phase-emergence}
\end{figure}

As shown in \Cref{fig:phase-emergence}, all gates appear to be near uniform around initialization, showing neither kind of concentration (open circles in \Cref{fig:phase-emergence}a, close to the upper right diagonal). During pretraining, some gates, such as those in layers 9, 10, and 14, move toward the lower left along the diagonal, forming static concentration. The other key gates end near the right edge, with different levels of input-dependent concentration but no consistent bias (gray points). In \Cref{fig:phase-emergence}b we visualize the mean gate scores of these heads. Each key gate peaks at a few adjacent offsets, early in the window for layer 14, in the middle for layer 10, and late for layer 9, and each value gate peaks one offset later, so each head favors a token and its successor, as in a bigram key--value pair.

These peaks match the phases each head specializes in as observed in \Cref{sec:head-knockouts}: its knockout losses concentrate where its key gate favors the queried token and its value gate favors the successor (\Cref{fig:phase-emergence}c). Apart from layer 0, the largest knockout losses come from statically concentrated heads. With more heads per layer, static concentration becomes sharper: in the W8/S8 model with four KV heads, many heads peak at a single offset and lose accuracy at a single phase (\Cref{fig:gate-knockout-overlays-heads}). We defer further results across KV-head counts, random seeds, and compression configurations, including the final gate concentration of every head (\Cref{fig:gate-concentration-all}), to \Cref{app:gate-knockout-comparison}.

If learned gate preferences help determine which phases are retrieved well, cycling the offset-specific gate parameters by $k$ offsets should also approximately shift the accuracy pattern by $k$ phases. In \Cref{app:phase-cycling}, we test this prediction in five models with gates exhibiting sharp static concentration (including the W8/S8 reference model). For cycles of one to three offsets in either direction, within-window accuracy patterns largely shift as predicted. These interventions support a causal contribution of learned gate preferences to within-window retrieval asymmetry in these models.

We conjecture that the observed \name\ in accuracy reflects both boundary and within-window asymmetries. The first distinguishes key--value pairs that share a window from pairs that straddle one. The second concerns differences among phases within a window, with static gate concentration as one mechanism that can shape which phases each head retains. Accuracy in the reference model varies by more than 50 percentage points among phases whose key--value pairs all share a window (\Cref{sec:phase-accuracy}), while uniform averaging also exhibits within-window variation, showing that this second asymmetry can arise without learned gates.

%% file: Sections/theory.tex
\section{Theory: Emergence of Phase Specialization in Compression}
\label{sec:theory}

In \Cref{sec:qwen3-spec}, some gates develop persistent preferences for the same offsets across windows, and these heads matter most at the corresponding phases. In an idealized induction model, we ask why compression should concentrate when the future query is unknown, why the concentration becomes static, and whether heads divide the phases among themselves.

\subsection{Concentration in Optimal Solutions}
\label{sec:theory-optimal}

Consider a context $\mX=(\vx_{\ell,r})_{\ell\leq L,\,r\leq S}$ of $LS$ distinct token embeddings, sampled uniformly without replacement from a vocabulary $\mathcal V\subset\mathbb R^d$ of size $N$ and arranged in $L\geq4$ non-overlapping chunks of stride $S\geq3$. Position $r$ within a chunk is offset $r-1$ and thus a single phase. After compression, a query $\vq$ repeats a token sampled uniformly from the $L(S-1)$ nonfinal chunk positions and requests its successor $\vy$ in the same chunk~\citep{olsson2022induction,bietti2023birth}.

In particular, each of $1\leq H<S$ heads compresses chunk $\ell$ into
\begin{equation}
 \vk_{h,\ell}=\sum_{r=1}^S \vp_{h,\ell}^{K}(r)\vx_{\ell,r}\in\R^d,
 \qquad
 \vv_{h,\ell}=\sum_{r=1}^S \vp_{h,\ell}^{V}(r)\vx_{\ell,r}\in\R^d,
 \label{eq:main-compressed-states}
\end{equation}
where the compression coefficients $\vp_{h,\ell}^{K},\vp_{h,\ell}^{V}$ are probability vectors in $\R^S$ and are chosen before the query is revealed. For cleaner analysis, we fix the embeddings during training and set the query, key, and value maps to identities.

Each head weights its compressed values by a softmax over the chunk scores $\gamma\langle\vq,\vk_{h,\ell}\rangle$, with $\gamma\in(0,1]$, and a tied unembedding of scale $\beta>0$ maps the summed head outputs to next-token logits. We write $\mathcal L$ for the resulting population cross-entropy loss of predicting $\vy$ (\Cref{app:full-vector-theory}).

We assume an idealized embedding geometry with a common component $\kappa>0$ and error $\varepsilon\geq0$:
\begin{equation}
 \left|\vx^\top\vx'-\kappa-\mathbf1\{\vx=\vx'\}\right|\leq\varepsilon
 \qquad\forall\vx,\vx'\in\mathcal V.
 \label{eq:main-incoherent-gram}
\end{equation}
The common component reflects the anisotropy of token representations~\citep{gao2019representation,razzhigaev2024shape,queipo2026attention}, and exact geometry means $\varepsilon=0$.

Under exact geometry, a head's target-specific logit is proportional to its attention on the correct chunk times its value weight on the answer. We say that a head \emph{selects} position $r$ when it puts all its key weight on $r$ and all its value weight on $r+1$, the most concentrated compression. Selection strengthens both factors for queries at $r$ but gives up direct evidence for queries elsewhere.

To see what the loss itself favors, we minimize $\mathcal L$ over an oracle class $\Coracle$, whose compression coefficients may depend arbitrarily on the context but not on the query, without imposing a gate parameterization. Writing $\ve_r$ for the one-hot vector at position $r$, we obtain the following result.
\begin{theorem}[Informal: optimal compression is concentrated]
\label{thm:incoherent-hard}
For sufficiently large $N$ and sufficiently small $\varepsilon$, every global minimizer over $\Coracle$ sets $\vp_{h,\ell}^{K}=\ve_r$ and $\vp_{h,\ell}^{V}=\ve_{r+1}$ in every context, chunk $\ell$, and head $h$, where $r\in\{1,\ldots,S-1\}$ may depend on all three. When $\varepsilon=0$, the selected positions are distinct across heads within each context and chunk.
\end{theorem}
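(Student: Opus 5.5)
Because the compression coefficients in $\Coracle$ may depend arbitrarily on the context but not on the query, the population loss decomposes as an average over contexts of a conditional loss. Each context term can then be minimized separately over the finite-dimensional product of simplices $\{\vp_{h,\ell}^K,\vp_{h,\ell}^V\}_{h,\ell}$. Fix a context and treat the case $\varepsilon=0$ first.

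With exact geometry and identity maps, the score of head $h$ on chunk $\ell$ for a query $\vq=\vx_{\ell^\ast,r^\ast}$ is
\[
\gamma\langle\vq,\vk_{h,\ell}\rangle=\gamma\bigl(\kappa+\vp_{h,\ell}^K(r^\ast)\mathbf 1\{\ell=\ell^\ast\}\bigr).
\]
The $\kappa$ part cancels in the softmax. Hence the attention on the correct chunk is
\[
a_h=\frac{e^{\gamma p}}{e^{\gamma p}+L-1},\qquad p=\vp_{h,\ell^\ast}^K(r^\ast).
\]
The head output is $\sum_\ell(\text{attn})\vv_{h,\ell}$. Its inner product with a vocabulary token $\vx$ equals $\kappa$ plus the total value weight placed on $\vx$. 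In-context tokens therefore receive logit contributions $\beta\sum_h a_h\vp^V_{h,\ell}(r)$ or the analogous off-chunk terms. All $N-LS$ out-of-context tokens share the same baseline logit.

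For large $N$ the partition function is dominated by these $N$ baseline terms. The cross-entropy for target $\vy=\vx_{\ell^\ast,r^\ast+1}$ is then, to leading order, $-\beta z_{\vy}+\log N+O(LS\,e^{O(\beta)}/N)$, where $z_{\vy}=\sum_h a_h\vp^V_{h,\ell^\ast}(r^\ast+1)$ up to off-chunk corrections. The context loss thus reduces to maximizing
\[
\Phi=\frac1{L(S-1)}\sum_{\ell,r\le S-1}\sum_h \frac{e^{\gamma \vp_{h,\ell}^K(r)}}{e^{\gamma \vp_{h,\ell}^K(r)}+L-1}\,\vp_{h,\ell}^V(r+1),
\]
minus small terms that are negative for wrongly placed mass. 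This separates across chunks, so I fix one $\ell$.

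The key structural step is that, for each head, the map $(\vp^K,\vp^V)\mapsto\sum_r g(\vp^K(r))\vp^V(r+1)$ with $g(p)=e^{\gamma p}/(e^{\gamma p}+L-1)$ is maximized only at vertices. I would show two things. First, for fixed $\vp^K$ the expression is linear in $\vp^V$, so it concentrates $\vp^V$ on $\arg\max_r g(\vp^K(r))$. Second, given $\vp^V=\ve_{r+1}$, it maximizes $g(\vp^K(r))$ by taking $\vp^K=\ve_r$. The difficulty is that a joint spread solution could in principle compete, because $g$ is not convex everywhere. I would check that $g$ is convex on $[0,1]$ whenever $e^{\gamma}\le L-1$, which holds since $\gamma\le1$ and $L\ge4$. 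A convex $g$ together with the bilinear structure makes every extreme-point comparison strict: $\sum_r g(p_r)w_r\le\max_r g(p_r)\le g(1)$, with equality only at selection. Because $g(0)>0$, a head's unselected positions still receive evidence $g(0)\cdot 0=0$ through $\vp^V$, so there is no benefit from spreading the value weight.

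For multiple heads, the objective per chunk is a sum over heads, but the cross-entropy is not exactly linear in $z_{\vy}$. Two heads selecting the same $r$ give diminishing returns through the log-sum-exp, since the softmax over $\vy$ saturates. At the same time they abandon a distinct $r'$ whose queries would otherwise gain the $g(1)-g(0)$ improvement. I would compare these directly: with $H<S$ and per-query loss strictly convex in the target logit, moving a duplicate head to an unused position strictly lowers the loss. This gives distinctness when $\varepsilon=0$.

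For $\varepsilon>0$, I would use a perturbation argument. The selecting configurations are strict, isolated maximizers with a margin bounded below uniformly in context (finitely many configurations, the gap depends only on $\gamma,\beta,L,S$). The $\varepsilon$-errors perturb every logit by $O(\beta\varepsilon)$, which preserves the vertex structure but may break the ties between permutations, so distinctness is claimed only at $\varepsilon=0$.

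I expect the main obstacle to be the rigorous large-$N$ reduction combined with strictness. The baseline mass is large but finite, and the exact first-order condition must show that any interior mass strictly increases the loss by an amount that is not swamped by the $O(1/N)$ and $O(\varepsilon)$ corrections. I would handle this with an explicit Lipschitz bound on the loss over the simplex product and a quantitative convexity gap for $g$.
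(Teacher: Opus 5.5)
Your selection argument follows the paper's outline: the target term dominates, each head's contribution $\sum_r g(\vp^K(r))\vp^V(r+1)$ is uniquely maximized at adjacent vertices, and the normalizer is an $O(1/N)$ perturbation. The distinctness step, however, rests on a gain that does not exist.

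\textbf{Distinctness.} In your own reduced objective $\Phi$, a head with any adjacent one-hot route contributes exactly $g(1)=R_1$ per chunk. Hence every one-hot adjacent assignment, collided or injective, has $\Phi=HR_1/(S-1)$, the same expected target logit. Moving a duplicate head from $r$ to an unused $r'$ gains $R_1$ of target logit at queries $(\ell,r')$ and loses exactly $R_1$ at $(\ell,r)$. There is no first-order ``$g(1)-g(0)$'' benefit, since a non-matching head contributes $0$, not $g(0)$, to the target.

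So collisions are invisible at the order your reduction keeps. Distinctness is decided entirely by the $O(1/N)$ normalizer terms you set aside. At that order, the strict convexity you invoke is really the convexity of $e^{\beta z_{\vy}}$ inside the normalizer, with curvature $\beta^2\pi_{\vy}(1-\pi_{\vy})$ where $\pi_{\vy}=O(1/N)$ is the target's output probability. It competes with other changes of the same order. When the moved head changes matching status, its off-chunk weights switch between $Q_1=(1-R_1)/(L-1)$ and $1/L$. Off-target coordinates shared by colliding heads also carry their own pair products. You would have to show that the net order-$1/N$ change is negative and beats the $O(H/N^2)$ remainder.

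The paper does this globally rather than by a local move:
\begin{itemize}
\item For an injection, $Z=Z_0+k\,d_{\mathrm{inj}}$ exactly, with $Z_0=N-LH+LHe^{\beta/L}$.
\item Any collision adds at least $d_{\mathrm{col}}=(e^{\beta Q_1}-1)^2$ to the normalizer at \emph{every} query, because both contributions to the shared coordinate are at least $Q_1$.
\item Since $\E k=H/(S-1)$ for every assignment, $\E\log Z_{\mathrm{col}}-\E\log Z_{\mathrm{inj}}\geq d_{\mathrm{col}}/Z_0-O(Hd_{\mathrm{col}}/Z_0^2)>0$.
\end{itemize}

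\textbf{Transfer to $\varepsilon>0$.} Your argument via ``strict, isolated maximizers with a margin'' plus a uniform $O(\beta\varepsilon)$ logit perturbation does not yield exactly one-hot minimizers. Competitors include coefficient collections arbitrarily close to a vertex, where the margin vanishes. A $C^0$ bound therefore only places minimizers within $O(\varepsilon)$ of vertices.

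What is needed is a gain over the projected vertex $\widehat{\vp}$ that is linear in $\Delta=\|\vp-\widehat{\vp}\|_1$, together with an $O(\varepsilon\Delta)$ bound on the change of $\mathcal L-\mathcal L_{\mathrm{sym}}$ between $\widehat{\vp}$ and $\vp$.
\begin{itemize}
\item The paper gets the first from $R_1-\sum_r R(\vp^K(r))\vp^V(r+1)\geq\frac{\zeta}{4}(\|\vp^K-\ve_{\hat r}\|_1+\|\vp^V-\ve_{\hat r+1}\|_1)$, where $R=g$ and $\zeta=\min\{R_1,\min_{[0,1]}R'\}$. This uses only monotonicity, so your convexity condition $e^\gamma\leq L-1$ is unnecessary.
\item It gets the second by bounding the path derivative of $\mathcal L-\mathcal L_{\mathrm{sym}}$ by $C\varepsilon\sum\|\dot{\vp}\|_1$ via softmax-Jacobian estimates and integrating along the segment.
\end{itemize}
Comparing with the nearby $\widehat{\vp}$, rather than with a global optimum, is also what makes the normalizer change scale like $\Delta/N$. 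Your closing remark about a Lipschitz bound points in this direction, but the margin argument as written fails.
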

Although the query is unknown, optimal compression is maximally concentrated, with one effective offset per head and per K/V branch in each chunk. The selected position may vary across contexts and chunks, however, so optimality requires concentration in each chunk but not static concentration.

\subsection{Static Concentration under Gradient Flow}
\label{sec:theory-dynamics}

To study when concentration becomes static, we restrict the oracle class to $\Clinear$, a simplified form of the gates in \Cref{eq:gate-logit}. Each head has a key and a value gate vector $\vc_{h,r}^{B}\in\mathbb R^d$ for every position $r$, shared across contexts and chunks, which are the only parameters we train:
\begin{equation}
 \vp_{h,\ell}^{B}(r)=
 \frac{\exp\!\left(\langle\vc_{h,r}^{B},\vx_{\ell,r}\rangle\right)}
 {\sum_{j=1}^S\exp\!\left(\langle\vc_{h,j}^{B},\vx_{\ell,j}\rangle\right)},
 \qquad B\in\{K,V\}.
 \label{eq:main-log-linear-gate}
\end{equation}
Under exact geometry, the component of $\vc_{h,r}^{B}$ along the embedding mean acts as an input-independent bias for position $r$, so these gates can express static concentration, as in \Cref{sec:qwen3-spec}, although the parameterization alone does not force it. We analyze gradient flow on $\mathcal L$, the continuous-time limit of gradient descent, from a small random initialization.
\begin{samepage}
\begin{theorem}[Informal: gradient flow yields static concentration]
\label{thm:local-hardening}
Assume $\varepsilon=0$ and uniform initialization in a Euclidean ball centered at zero of sufficiently small radius $\rho$, in the effective gate-parameter space of \Cref{ass:gradient-flow-setting}. For fixed model scales and sufficiently large $N$ relative to $\log(e/\rho)$, with high probability over the initialization, gradient flow on $\mathcal L$ yields, for each head $h$, a position $r_h\in\{1,\ldots,S-1\}$ such that $\vp_{h,\ell}^{K}(r_h),\vp_{h,\ell}^{V}(r_h+1)\to1$ as training time tends to infinity, uniformly over contexts and chunks. Different heads may select the same $r_h$.
\end{theorem}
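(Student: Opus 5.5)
Under exact geometry ($\varepsilon=0$), we first reduce gradient flow over $\Clinear$ to a low-dimensional system. Write $\vx_{\ell,r}=\sqrt{\kappa}\,\bar{\vmu}+\vz_{\ell,r}$, where $\bar{\vmu}$ is the unit common direction and the $\vz$'s are orthonormal and orthogonal to $\bar{\vmu}$. Then each gate logit splits as $\langle\vc_{h,r}^B,\vx_{\ell,r}\rangle=\sqrt\kappa\,b_{h,r}^B+\langle\vc_{h,r}^B,\vz_{\ell,r}\rangle$. The first term is a static bias; the second is input-dependent, and its size is controlled by the component of $\vc_{h,r}^B$ in the token subspace.

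Because tokens are drawn uniformly without replacement from $N$ vocabulary items, the population gradient of the input-dependent part is spread over $N$ directions. Each of its per-token components is $O(1/N)$, so for large $N$ relative to $\log(e/\rho)$ this part stays small along the trajectory. Concretely, I would show by a Grönwall/bootstrap argument that $\max_{\ell,r}|\langle\vc_{h,r}^B,\vz_{\ell,r}\rangle|$ remains $o(1)$ uniformly in time. The dynamics are then, up to controlled errors, gradient flow on the biases $b_h^B\in\R^S$ alone, with static gates $\vp_h^K=\mathrm{softmax}(\sqrt\kappa\,b_h^K)$ and $\vp_h^V=\mathrm{softmax}(\sqrt\kappa\,b_h^V)$. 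This is the \emph{effective gate-parameter space} of \Cref{ass:gradient-flow-setting}.

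In this reduced system I would compute $\mathcal L$ explicitly. For $\varepsilon=0$, a query at position $r$ gives a target logit for head $h$ of roughly $\beta\cdot a_h(r)\,\vp_h^V(r+1)$. Here the attention on the correct chunk is
\[
a_h(r)=\frac{e^{\gamma\vp_h^K(r)}}{e^{\gamma\vp_h^K(r)}+L-1}
\]
after the common $\kappa$ terms cancel in the softmax. The loss is then an explicit smooth function of $(\vp_h^K,\vp_h^V)_h$.

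The next step is to analyze the flow near the uniform initialization, where all $b$'s are near zero.
\begin{enumerate}
\item Show that the uniform point is a saddle and that the bias dynamics are self-reinforcing. Increasing $\vp_h^K(r)$ raises the benefit of increasing $\vp_h^V(r+1)$, and conversely, since the target logit contains the product $a_h(r)\vp_h^V(r+1)$. The linearized dynamics couple $b_h^K(r)$ with $b_h^V(r+1)$.
\item Show that under a random initialization in the ball of radius $\rho$, generically a single coupled pair $(r_h,r_h+1)$ has the largest initial advantage, and that this ordering is preserved. The argument is a "rich get richer" comparison: the ratio $\vp_h^K(r_h)/\vp_h^K(r)$ is nondecreasing once the leading pair dominates.
\item Show that the loss keeps decreasing in the direction $b_h^K(r_h),b_h^V(r_h+1)\to\infty$ with no finite stationary point beyond the vertex. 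This uses the monotonicity of the relevant partial derivatives of $\mathcal L$ along the face selecting $r_h$. It would give $\vp_h^K(r_h),\vp_h^V(r_h+1)\to1$.
\end{enumerate}
Uniformity over contexts and chunks follows because the input-dependent logits stay bounded, while the bias gaps diverge. Heads interact only through the summed logits. Since the flow is local to each head's initial winner, nothing forces distinct $r_h$, which is consistent with the theorem's final remark.

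The main obstacle is step~2 together with the interaction between heads: proving that the trajectory actually escapes the near-uniform saddle region toward a vertex rather than toward a mixed stationary point. My first attempt would be a Lyapunov-type argument on the log-ratios $b_h^K(r_h)-b_h^K(r)$ and $b_h^V(r_h+1)-b_h^V(r+1)$. I would show their time derivative is positive whenever the pair $(r_h,r_h+1)$ currently leads, using the explicit reduced loss. I would handle the cross-head coupling by bounding it as a lower-order perturbation for small $\rho$ in the early phase.

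If a mixed interior stationary point exists, I would show it is a strict saddle. I would then use the random initialization, with high probability, to avoid its measure-zero stable manifold, quantitatively through $\log(e/\rho)$. This is where the "with high probability" and the $N$-versus-$\rho$ condition enter.
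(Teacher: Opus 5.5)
Your heuristic picture (static biases along the mean direction, a key/successor-value coupling that drives selection, nothing forcing heads apart) matches the paper's. However, two load-bearing steps do not go through as proposed.

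The first is the claim that the input-dependent gate components stay $o(1)$ \emph{uniformly in time} by Gr\"onwall. The gradient field in those directions vanishes on the mean-aligned subspace $\mathcal M$ by vocabulary equivariance, and elsewhere it is of size $C\|u\|/N$ in the token component $u$. Gr\"onwall therefore controls $u$ only on time scales $t\lesssim N$. The flow runs forever and the parameters diverge, so an $O(1/N)$ rate gives no uniform bound, yet your uniformity-over-contexts argument rests on it. The paper splits time instead. It uses Gr\"onwall only up to the exit time $\tau\lesssim\log(e\sqrt N/\rho)$ plus an $N$-independent amplification time; keeping $\tau/N$ small is exactly where $N\gtrsim\log(e/\rho)$ enters. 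It then switches to a mechanism that needs no smallness of the token parts: the terminal cone of \Cref{lem:terminal-cone}. That cone is defined by worst-case margins $\langle\vc^K_{h,r_h},\vx\rangle-\langle\vc^K_{h,j},\vx'\rangle\geq M_0$ over all distinct tokens, and likewise for values. It is forward invariant because for $N\geq\|\vmu\|^{-2}$ every Gram entry is nonnegative, so each term $\psi_s\vx_{\ell,s}^\top\vx-\psi_j\vx_{\ell,j}^\top\vx'$ in a margin's derivative is nonnegative. Every margin then diverges by restricting to contexts of probability $1/(N(N-1))$. Note also that the effective space of \Cref{ass:gradient-flow-setting} is not your bias system but all of $\mathcal C_{\rm eff}$, of dimension $2H(SN-1)$. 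A uniform ball sample therefore has a mean-aligned part of size only $\rho/\sqrt N$ against a token part of size $\rho$. One needs the second-order cancellation $\|P_{\mathcal M}\mathcal F(\bar{\vc}+u)-\mathcal F(\bar{\vc})\|\leq C\|u\|^2/N$ to keep the token part from biasing route selection before exit.

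The second gap is the escape step. The uniform point is not stationary: key phase $S$ and value phase $1$ serve no query, so the symmetric trajectory from zero leaves it, with these dummy coefficients decaying like $(1+t)^{-1}$ and no finite limit. For large $N$ the flow has no finite stationary point at all, so there is no interior saddle to which a strict-saddle or stable-manifold argument could apply. The paper instead linearizes along this non-autonomous path, whose linearization converges to $A_\infty$ only at rate $O((1+t)^{-1})$, and builds an exponential dichotomy. It gets the high-probability statement from a quantitative lower bound on each head's winner--runner-up gap in the projection of $\vc(0)$ onto the growing fiber; that gap must dominate the nonlinear and $O(1/N)$ errors accumulated up to $\tau$.

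Your step 2 also cannot rest on an initial ordering. The key and value contrasts at initialization are unrelated, so their initial winners need not form an adjacent pair, and then at least one ordering must be overturned. Alignment is produced by the dynamics: the $2\times2$ key/successor-value block of $A_\infty$ has negative determinant and a positive eigenvector with same-sign coordinates, which is your step-1 intuition made precise.

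Two smaller corrections. Heads decouple because under $\varepsilon=0$ the target logit is additive over heads and all coupling runs through the log-normalizer, whose derivatives are $O(1/N)$ (\Cref{lem:vocabulary-suppression}). It is not because $\rho$ is small, and $\rho$ plays no role during amplification. Finally, your monotone-gap step (the paper's \Cref{lem:mean-amplification}) needs $\gamma<\log(L-1)$, so that $R'$ is increasing on $[0,1]$.
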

\end{samepage}

For each head, the key gate scores converge to the one-hot vector at $r_h$ and the value gate scores to the one at $r_h+1$ in every context and chunk, so both the static and the per-window effective numbers of offsets tend to one, the lower-left corner of \Cref{fig:phase-emergence}a. Whereas \Cref{thm:incoherent-hard} lets the selected position vary across inputs, gradient flow fixes it for each head. This gives an idealized account of the adjacent key and value preferences of layers 9, 10, and 14 (\Cref{fig:phase-emergence}b), whose gate scores remain softer than the one-hot limit.
We defer the formal theorem statements and proofs to \Cref{app:full-vector-theory}, with relaxed conditions on the number of chunks $L$ and attention scale $\gamma$.

%% file: Sections/related_work.tex
\section{Related Works}\label{sec:related}

Learned compressed memories reduce the cost of retaining long contexts. Compressive Transformer summarizes past activations~\citep{rae2020compressive}, LoMA trains models to use compressed KV representations~\citep{wang2024loma}, Activation Beacon replaces raw activations with the KV entries of regularly spaced learned tokens~\citep{zhang2025activation}, and CAT lets later chunks attend to compressed vectors of earlier chunks~\citep{prakash2025cat}. Sparse attention~\citep{child2019sparse,yuan2025nsa,lu2025moba,deepseek2026v4} and cache eviction~\citep{zhang2023h2o,li2024snapkv} impose related block or token structure on memory. Sparse Transformer already defines a strided head whose visibility depends on position modulo a stride~\citep{child2019sparse}, a precedent for the periodic structure we study. In hybrids that keep an exact branch over distant tokens, such as NSA~\citep{yuan2025nsa}, that branch could mask a failure of the compressed branch alone. DeepSeek-V4 keeps exact tokens only in its local window~\citep{deepseek2026v4}, and we find \name\ in the full models (\Cref{sec:modsens}).

On the failure mode side, the Lost in the Middle effect shows that retrieval quality depends on where relevant information appears in a long context~\citep{liu2024lost}. RULER shows that easy needle tests can overstate usable context length~\citep{hsieh2024ruler}, and \citet{chen2026pitfalls} show that aggregate scores can hide method-specific failures of KV cache compression. In gist-based compression, \citet{deng2025silver} identify degradation near generation-segment boundaries. We instead focus on the phase of the source token, which recurs every compression stride and is distinct from overall context depth or the query's position within a segment.

Mechanistic studies identify retrieval heads whose ablation impairs long-context retrieval~\citep{wu2025retrieval}, and cache policies such as RazorAttention and DuoAttention keep full caches only for such heads~\citep{tang2024razor,xiao2024duo}. These works establish that heads differ in their retrieval roles. We show that in compressed models, a head's retrieval contribution also depends on phase and, in the examined heads, aligns with the offsets its compression gate prefers (\Cref{sec:specialization}).

On the theory side, gradient descent can turn softmax attention into hard token selectors~\citep{tarzanagh2023svm,tarzanagh2023maxmargin}, and multi-head training dynamics can allocate tasks across heads~\citep{chen2024dynamics,yuksel2026incremental}. We do not claim the first theory of attention hardening or head specialization. Our analysis instead concerns compression chosen before the query is known, and asks when concentrated compression is optimal and when training makes it static (\Cref{sec:theory}). We give a more extensive discussion of related work in \Cref{app:related}.

%% file: Sections/conclusion.tex
\section{Conclusion and Discussion}\label{sec:conclusion}

In this paper, we identify \name\ as a systematic failure mode in models with chunk-compressed KV caches: retrieval accuracy depends on the source information's phase relative to the compression windows. The phenomenon appears in DeepSeek-V4, V4.1, and all compressed models we pretrained, with a period following the compression stride. KV-head knockouts in our pretrained models and attention-readout interventions reveal phase-dependent contributions to retrieval, providing causal evidence for phase specialization in these tested models. Finally, our optimization dynamics analysis shows how training can produce static concentration of compression gates in a simplified bigram retrieval model, illustrating a mechanism for phase specialization under the stated assumptions.

Our mechanistic conclusions are strongest in the controlled settings studied here and do not establish a universal cause of \name\ in large, heterogeneous models. In particular, gate cycling does not restore the boundary phase (\Cref{app:phase-cycling}), leaving a unified explanation of boundary and within-window asymmetries unresolved. Whether explicit coordination across heads and layers can make retrieval quality more uniform across phases, without sacrificing average performance or compression efficiency, remains open.

Beyond controlled retrieval, our code-completion examples document periodic errors in DeepSeek-V4 and DeepSeek-V4.1. These observations motivate investigating whether phase sensitivity also affects robustness-relevant behavior. For example, do meaning-preserving shifts of a request within its context alter refusal or compliance, and do any such changes track the compression phase? We leave a systematic study of their potential robustness implications in real application scenarios to future work.

%% file: Appendix/contents.tex
\begingroup
\hypersetup{hidelinks}
\setlength{\parskip}{3pt}
\etocsettagdepth{main}{none}
\etocsettagdepth{appendix}{subsection}
\etocsettocstyle{%
  \pdfbookmark[0]{Appendix contents}{appendix-contents}%
}{\par}
\tableofcontents
\endgroup
\clearpage

%% file: Appendix/Sections/induction_head.tex
\section{\texorpdfstring{Deferred Proofs for Theorem~\ref*{thm:incoherent-hard} and Theorem~\ref*{thm:local-hardening}}{Deferred Proofs for Theorem 1 and Theorem 2}}
\label{app:full-vector-theory}

Here we state and prove formal versions of \Cref{thm:incoherent-hard} and \Cref{thm:local-hardening}. \Cref{thm:incoherent-hard-formal} shows that every minimizer over the oracle class uses hard adjacent selection, with each head placing all key weight on one phase and all value weight on the next. We call such a pair of key and value phases a route. The selected route may change across contexts and chunks. \Cref{thm:local-hardening-formal} shows that gradient flow on linear compressors from a small random initialization reaches with high probability hard adjacent selection with one route per head for all inputs, although different heads may share a route. We state the formal conditions more generally than in \Cref{sec:theory}. The assumptions $L\geq4$ and $\gamma\leq1$ made there imply $\gamma<\log3\leq\log(L-1)$, so the formal conditions hold in the main-text setting.

Both results turn on a tradeoff between averaging and selecting. A head that averages the positions of a chunk keeps some evidence about every position, whereas a head that selects one position keeps that position's evidence at full strength and gives up direct evidence about the others. Selection is favored when the gain at the selected position outweighs the loss elsewhere. A second question is whether several heads divide the positions among themselves, which we call complementary coverage, since sharp heads can serve as many positions as possible only if their selections differ.

In the general model, this tradeoff is hard to see directly. Each head has key and value coefficients over $S$ positions in every chunk and attends to chunks through its own softmax, all heads are coupled through the softmax over the vocabulary, and under gradient flow the coefficients come from gates that depend on the input embeddings. We would therefore like to see the tradeoff in a setting small enough to solve by hand before proving the general results. In \Cref{app:two-slot-theory}, we study a toy example with two heads and two phases, in which each head's compression is set by a single number and the loss is a function of two variables. There we can compute in closed form why a head sharpens its compression and why the two heads prefer to split the phases between them. The toy example is not a special case of the general theorems, since it has $S=2$, keeps only keys, and idealizes the value readout. We use it for intuition, and at its end we explain which parts of its argument carry over to the general model and which do not (\Cref{app:toy-to-general}).

We then return to the general model. In \Cref{sec:model}, we define the model and the two compressor classes. In \Cref{sec:oracle-minimizers}, we prove \Cref{thm:incoherent-hard-formal}, in which sharpening reappears for paired key and value coefficients. In \Cref{sec:gradient-flow}, we prove \Cref{thm:local-hardening-formal}, using the toy example's strategy of starting near a symmetric point and asking along which directions gradient flow leaves it.

\input{Appendix/Sections/two_slot_theory}

\subsection{Model and Compressor Classes}
\label{sec:model}

We now state the general model of \Cref{sec:theory} with full notation. We index the key and value branches by $P\in\{K,V\}$, which \Cref{sec:theory} writes as $B$.

For every positive integer $m$, write $[m]:=\{1,\ldots,m\}$.  Let
$S\geq3$, use $1\leq H\leq S-1$ heads, and use $L\geq2$ chunks.
The chunks are non-overlapping and each contains $S$ positions, so the
compression stride and effective compression ratio are both $S$.
Let $\mathcal V\subset\mathbb R^d$ be a vocabulary of $N$ distinct embedding
vectors, where $N\geq LS$.
A context $\mX$ consists of the $L$ ordered chunks
\begin{equation}
 \mX=\bigl((\vx_{\ell,1},\ldots,\vx_{\ell,S})\bigr)_{\ell=1}^L,
\end{equation}
where the $LS$ embeddings are sampled uniformly without replacement from
$\mathcal V$.  Independently,
$\ell_*\sim\operatorname{Unif}[L]$ and
$r_*\sim\operatorname{Unif}[S-1]$.  The query vector is
$\vq=\vx_{\ell_*,r_*}$ and the target embedding is
$\vy=\vx_{\ell_*,r_*+1}$.
Let $\mathcal X$ be the finite set of all such
ordered contexts with distinct embeddings.

For the first global theorem, we assume that for some $\kappa\in\mathbb R$
and $0\leq\varepsilon<1$,
\begin{equation}
 \left|\vx^\top \vx'-\kappa-\mathbf1\{\vx=\vx'\}\right|
 \leq\varepsilon
 \qquad(\vx,\vx'\in\mathcal V).
 \label{eq:incoherent-gram}
\end{equation}
This is the embedding geometry of \Cref{eq:main-incoherent-gram} in
\Cref{sec:theory}.  For fixed
$\varepsilon>0$, it permits dimension logarithmic in $N$.  The dynamics result
uses its idealized high-dimensional limit $\varepsilon=0$.

Let $\mE\in\mathbb R^{N\times d}$ have row $\vx^\top$ indexed by
$\vx\in\mathcal V$, and fix $0<\gamma,\beta<\infty$.

We consider two compressor classes.  In the oracle class, after observing
$\mX$ and before $(\ell_*,r_*)$ is sampled, a compressor rule
may compute arbitrary vectors
\begin{equation}
 \vp_{h,\ell}^{K}(\mX),\vp_{h,\ell}^{V}(\mX)\in\Delta_S,
 \qquad
 \Delta_S:=\left\{\vp\in\mathbb R^S:
 \vp(r)\geq0,\ \sum_{r=1}^S \vp(r)=1\right\}
\end{equation}
for every $\ell\in[L]$ and $h\in[H]$.  These key and value compression
coefficients may depend arbitrarily on $\mX$.  The class of all such pre-query
compressor rules is
\begin{equation}
 \Coracle
 =\left\{g:\mathcal X\longrightarrow
   (\Delta_S\times\Delta_S)^{LH}\right\},
 \qquad
 g(\mX)=\bigl(\vp_{h,\ell}^{K}(\mX),\vp_{h,\ell}^{V}(\mX)\bigr)_{\ell,h}.
\end{equation}
Thus the information order is $\mX\mapsto g(\mX)$, followed by
$(\ell_*,r_*)\sim\operatorname{Unif}([L]\times[S-1])$.
Here and below, $\ve_r$ is the $r$th vertex of $\Delta_S$.  When a context and
query are fixed, we suppress their dependence in the notation.

Position $r$ is window offset $r-1$ in \Cref{sec:qwen3-spec} and corresponds
to one phase.  Both coefficient vectors use physical phase indices.  No phase is masked, and the
useful adjacent route $r\in[S-1]$ pairs key phase $r$ with value phase $r+1$.
The compressed KV entries are
\begin{equation}
 \vk_{h,\ell}=\sum_{r=1}^{S}\vp_{h,\ell}^{K}(r)\vx_{\ell,r},
 \qquad
 \vv_{h,\ell}=\sum_{r=1}^{S}\vp_{h,\ell}^{V}(r)\vx_{\ell,r}.
\end{equation}
For query $\vq$, the attention weights are
\begin{equation}
 \alpha_{h,\ell}=
 \frac{\exp\{\gamma\langle\vq,\vk_{h,\ell}\rangle\}}
 {\sum_{j=1}^L\exp\{\gamma\langle\vq,\vk_{h,j}\rangle\}}.
\end{equation}
The attention output and logits are
\begin{equation}
 \mathsf{TF}(\mX,\vq)=\sum_{h=1}^{H}\sum_{\ell=1}^L
 \alpha_{h,\ell}\vv_{h,\ell},
 \qquad
 \vO(\mX,\vq)=\beta\mE\,\mathsf{TF}(\mX,\vq).
\end{equation}

For a fixed context $\mX$, let $\vp$ denote the full collection of oracle
coefficients and substitute it into this architecture.  Let
$\vO(\mX,\vq)_{\vx}$ denote the output coordinate indexed by
$\vx\in\mathcal V$.
The fixed-context objective is
\begin{equation}
 \cL_{\mX}(\vp)=\frac{1}{L(S-1)}
 \sum_{\ell_*=1}^L\sum_{r_*=1}^{S-1}
 \left[
  \log\sum_{\vx\in\mathcal V} e^{\vO(\mX,\vq)_{\vx}}
  -\vO(\mX,\vq)_{\vy}
 \right].
\end{equation}
Its population average is
\begin{equation}
 \cL(g)=\frac1{|\mathcal X|}\sum_{\mX\in\mathcal X}
 \cL_{\mX}(g(\mX)).
\end{equation}
The finite product $\Coracle$ is compact, and $\cL$ is continuous on it.

For the training-dynamics result, we restrict the oracle class to linear
compressors, whose gate logits are linear in the embeddings. They use
vectors $\vc_{h,r}^{K},\vc_{h,r}^{V}\in\mathbb R^d$ shared across contexts and
chunks, with no scalar biases. Write
$\vc=(\vc_{h,r}^{P})_{P,h,r}$.  For $P\in\{K,V\}$, its compression coefficients are
\begin{equation}
 \vp_{h,\ell}^{P}(r;\vc,\mX)
 =\frac{\exp\{\langle\vc_{h,r}^{P},\vx_{\ell,r}\rangle\}}
 {\sum_{j=1}^{S}\exp\{\langle\vc_{h,j}^{P},\vx_{\ell,j}\rangle\}}
\end{equation}
and
\begin{equation}
 \vp_{h,\ell}^{P}(\vc;\mX):=
 (\vp_{h,\ell}^{P}(r;\vc,\mX))_{r=1}^S,
 \qquad
 g_{\vc}(\mX)=\bigl(\vp_{h,\ell}^{K}(\vc;\mX),
 \vp_{h,\ell}^{V}(\vc;\mX)\bigr)_{\ell,h}.
\end{equation}
The resulting linear compressor class is
\begin{equation}
 \Clinear
 =\left\{g_{\vc}:
 \vc_{h,r}^{K},\vc_{h,r}^{V}\in\mathbb R^d\ \text{for all }h,r\right\}
 \subset\Coracle,
\end{equation}
and only the vectors $\vc_{h,r}^{K},\vc_{h,r}^{V}$ are trainable.  The inclusion
can be strict because an oracle rule may use the whole context in an
arbitrary way, whereas $g_{\vc}$ uses shared phasewise linear gate logits.

\subsection{\texorpdfstring{Proof of Theorem~\ref*{thm:incoherent-hard}}{Proof of Theorem 1}}
\label{sec:oracle-minimizers}

We first ask what the loss itself favors when compression may depend on the context in any way. As in the toy example, we compare mixed coefficients with one-hot ones, now with keys and values paired.

\paragraph{Symmetric comparison loss.}
We compare the true loss with an auxiliary functional of the same oracle
coefficients that isolates the symmetric part
$\kappa+\mathbf1\{\vx=\vx'\}$ of the Gram matrix.

For $0\leq u\leq1$, define
\begin{equation}
 R(u)=\frac{e^{\gamma u}}{e^{\gamma u}+L-1}.
\end{equation}
For a fixed context, query $(\ell_*,r_*)$, and coefficient collection $\vp$, set
\begin{equation}
 A_\ell^h=
 \begin{cases}
 R(\vp_{h,\ell_*}^{K}(r_*)),&\ell=\ell_*,\\
 \{1-R(\vp_{h,\ell_*}^{K}(r_*))\}/(L-1),&\ell\ne\ell_*.
 \end{cases}
\end{equation}
These are the comparison attention weights.  Using the same value compression coefficients as
the true model, define the unscaled comparison output coordinate at
$\vx\in\mathcal V$ by
\begin{equation}
 F_{\vx}=\sum_{h=1}^{H}\sum_{\ell=1}^L
 A_\ell^h\sum_{r=1}^{S}
 \vp_{h,\ell}^{V}(r)\,\mathbf 1\{\vx_{\ell,r}=\vx\}.
\end{equation}
Since both $A^h$ and $\vp_{h,\ell}^{V}$ have unit mass,
$\sum_{\vx\in\mathcal V}F_{\vx}=H$.  Vocabulary items absent from the
context have $F_{\vx}=0$.  Hence the comparison query loss is
$\log\sum_{\vx\in\mathcal V}e^{\beta F_{\vx}}-\beta F_{\vy}$.
Let $\cL_{{\rm sym},\mX}(\vp)$ be its average over the $L(S-1)$ queries, and
define its population average by
\begin{equation}
 \cL_{\rm sym}(g)=\frac1{|\mathcal X|}
 \sum_{\mX\in\mathcal X}\cL_{{\rm sym},\mX}(g(\mX)).
\end{equation}

When $\varepsilon=0$, the comparison functional equals the true loss.
Indeed, $\vx^\top\vx'=\kappa+\mathbf1\{\vx=\vx'\}$.  For query $\vq$,
the true attention scores are
$\gamma(\kappa+\vp_{h,\ell_*}^{K}(r_*))$ on chunk $\ell_*$ and
$\gamma\kappa$ on every other chunk, so their softmax is $A^h$.  The output
coordinate at $\vx\in\mathcal V$ is $\beta(\kappa H+F_{\vx})$, and the common term
$\beta\kappa H$ cancels from cross-entropy loss.  For $\varepsilon>0$, the proof below controls the change
from $\cL_{\rm sym}$ to $\cL$ along the same coefficient path.

For later use, set $R_1=R(1)$ and $Q_1=(1-R_1)/(L-1)$.

\begin{theorem}[Formal version of \Cref{thm:incoherent-hard}]
\label{thm:incoherent-hard-formal}
Under \Cref{eq:incoherent-gram}, there are constants
$c_{\gamma,\beta},C_{\gamma,\beta}>0$ such that the following holds.  Suppose
\begin{equation}
 N\geq C_{\gamma,\beta}LS
 \exp\left(\frac{C_{\gamma,\beta}H}{L}\right),
 \qquad
 \varepsilon\leq\frac{c_{\gamma,\beta}}{L^2HS}.
\end{equation}
Then every minimizer over $\Coracle$ uses hard adjacent
selection: for every $\mX\in\mathcal X$, $\ell\in[L]$, and $h\in[H]$, there is
$r_{\mX,\ell,h}\in[S-1]$ such that
\begin{equation}
 \vp_{h,\ell}^{K}(\mX)=\ve_{r_{\mX,\ell,h}},
 \qquad
 \vp_{h,\ell}^{V}(\mX)=\ve_{r_{\mX,\ell,h}+1}.
\end{equation}
If $\varepsilon=0$, the minimizers are exactly those assignments for which,
for every $\mX$ and $\ell$, the map $h\mapsto r_{\mX,\ell,h}$ is injective.
\end{theorem}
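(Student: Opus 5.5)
Since $\Coracle$ is a product over contexts, $\cL$ separates as an average of $\cL_{\mX}(g(\mX))$. A global minimizer must therefore minimize $\cL_{\mX}$ pointwise for every $\mX\in\mathcal X$. I would fix a context throughout and work with the finite-dimensional problem over $(\Delta_S\times\Delta_S)^{LH}$. The plan is to solve the symmetric comparison problem $\cL_{{\rm sym},\mX}$ exactly first, and then to show that the perturbation from $\varepsilon>0$ cannot move minimizers off the hard-selection vertices.

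\textbf{Step 1 (exact geometry).} For $\varepsilon=0$ the comparison loss equals the true loss. Under the uniform distribution over the $L(S-1)$ queries, each query's loss depends only on $F_{\vy}$ and on how the remaining mass $H-F_{\vy}$ is spread over the other context tokens. I would first show that, with $F_{\vy}$ fixed, spreading the residual mass can be lower-bounded by a function of $F_{\vy}$ alone. This uses the fact that $N\gg LS$ absent tokens each contribute $e^{0}$, so the log-partition is dominated by $N$ and behaves almost linearly in the context masses. The target score for query $(\ell_*,r_*)$ is
\[
F_{\vy}=\sum_h\Bigl[R\bigl(\vp^K_{h,\ell_*}(r_*)\bigr)\,\vp^V_{h,\ell_*}(r_*+1)+\tfrac{1-R(\vp^K_{h,\ell_*}(r_*))}{L-1}\sum_{\ell\ne\ell_*}\vp^V_{h,\ell}(r_*+1)\Bigr].
\]
Averaging over queries gives an objective of the form $\sum_{\ell_*,r_*}\phi(F_{\vy})$ with $\phi$ decreasing. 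Its per-head, per-chunk contribution is a sum over $r$ of products $R(a_r)b_{r+1}$, where $a=\vp^K$ and $b=\vp^V$. The key observation is that $R$ is convex on $[0,1]$ when $\gamma<\log(L-1)$, because $e^{\gamma u}<L-1$ keeps the logistic curve below its inflection point. The map $(a,b)\mapsto\sum_r R(a_r)b_{r+1}$ is then maximized, and more importantly $\phi$-averaged losses are minimized, only at vertices with $a=\ve_r$ and $b=\ve_{r+1}$.

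Convexity in $b$ for fixed $a$ is linear, so $b$ goes to a vertex. For $a$, I would use a direct comparison: moving mass from a nonselected position to the selected one strictly helps. The obstacle here is that $\phi$ is a nonlinear, convex, decreasing function of a sum over heads, and several heads can cover the same query. I would handle this by linearizing $\phi$ around the regime forced by large $N$. When $N\geq C LS e^{CH/L}$, the partition function is $N+O(LS e^{\beta H})$, so $\phi(F)\approx\log N-\beta F$ up to a correction. That correction is strictly convex but small, and it penalizes two heads covering the same query in the same chunk. Making this quantitative is the main obstacle. The convex correction must be strong enough to force injectivity of $h\mapsto r_{\mX,\ell,h}$, yet weak enough that the linear term still forces hard selection. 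I would prove this by a two-level exchange argument on vertices, showing that a duplicated route strictly loses to reassigning one head to an uncovered route, which exists since $H<S$. Interior points would be ruled out by a one-sided directional derivative computation at any nonvertex, with strict inequality. This would characterize the $\varepsilon=0$ minimizers exactly as the injective hard assignments.

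\textbf{Step 2 (perturbation).} For $\varepsilon>0$, I would bound $|\cL_{\mX}-\cL_{{\rm sym},\mX}|$ and its directional derivatives by $C\varepsilon LH$ uniformly along segments, since score and logit errors are $O(\gamma\varepsilon)$ and $O(\beta H\varepsilon)$. From Step 1 I would extract a quantitative margin: at any non-hard point, some feasible direction toward hard selection decreases $\cL_{\rm sym}$ at rate at least $c/(LS)$. The margin is scaled by $1/L$ and $1/S$ from the query average and by the linear part $\beta/N$-normalized gap. The condition $\varepsilon\leq c/(L^2HS)$ makes the perturbation of the directional derivative smaller than this margin. Every minimizer of $\cL$ is then hard. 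Injectivity is not claimed for $\varepsilon>0$, consistent with the statement.
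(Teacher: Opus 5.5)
Your overall architecture matches the paper's: the target term dominates an $O(1/N)$ normalizer in $\cL_{{\rm sym},\mX}$, and the result then transfers to $\varepsilon>0$. The gap is in how you exclude mixed coefficients. You rely on block-wise vertex arguments (``linear in $b$, so $b$ goes to a vertex''; convexity of $R$ for $a$). You also rely on a pointwise one-sided directional derivative with a uniform strict margin, which Step~2 then compares against the $\varepsilon$-perturbation. Both fail at route-splitting points.

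Put every head of some chunk $\ell$ at $\vp^K_{h,\ell}=(\ve_1+\ve_2)/2$ and $\vp^V_{h,\ell}=(\ve_2+\ve_3)/2$. For $T(a,b)=\sum_{r=1}^{S-1}R(a_r)b_{r+1}$, every feasible first-order variation is $\le 0$. Equality holds along all directions that move key mass within phases $\{1,2\}$ and value mass within $\{2,3\}$. Relabeling key phases $1\leftrightarrow 2$ together with value phases $2\leftrightarrow 3$ in chunk $\ell$, and swapping the queries $r_*=1,2$ there, is a symmetry of $\cL_{{\rm sym},\mX}$ that fixes this point. So the normalizer's first derivative also vanishes along those directions.

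Consequently, no feasible direction has a strictly negative one-sided derivative, and the margin ``at rate at least $c/(LS)$'' does not exist there. The point is a saddle: the key/value Hessian block of $T$ in these directions has negative determinant, so only second-order or global reasoning removes it. The block arguments fail at the same point. For this $a$, the linear term is tied on the edge $b_2+b_3=1$, and the strictly convex normalizer picks its midpoint, so the optimal $b$ is not a vertex. Moreover, convexity of $R$ needs $\gamma<\log(L-1)$, which the theorem does not assume. For example, for $L=2$, $R$ is concave on $[0,1]$, and for this $b$ the target $T$ is maximized at the midpoint $a$.

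The missing idea is a global secant comparison that moves key and value together. For each $(h,\ell)$, choose $\hat r\in[S-1]$ maximizing $\vp^K(\hat r)+\vp^V(\hat r+1)$, and replace the pair by $(\ve_{\hat r},\ve_{\hat r+1})$. Set $R_1=R(1)$ and $\zeta=\min\{R_1,\min_{u\in[0,1]}R'(u)\}$. The mean-value theorem, maximality of $\hat r$, and $uv\le(u+v)^2/4$ give
$R_1-\sum_r R(\vp^K(r))\vp^V(r+1)\ge\zeta\bigl(1-\sum_r\vp^K(r)\vp^V(r+1)\bigr)\ge\frac{\zeta}{4}\bigl(\|\vp^K-\ve_{\hat r}\|_1+\|\vp^V-\ve_{\hat r+1}\|_1\bigr)$.
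This uses only monotonicity of $R$. The expected target gain is therefore linear in the projection distance $\Delta$, even at saddles, and it beats a normalizer change of order $\beta S e^{\beta HR_1}\Delta/N$. For $\varepsilon>0$, integrate your path-derivative bound on $\cL_\mX-\cL_{{\rm sym},\mX}$ along the segment from $\widehat{\vp}$ to $\vp$. Compare the result with this secant gap, not with a derivative at $\vp$.

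Two smaller corrections. First, the $LS$ context tokens are distinct, so $\vy$ occurs only at $(\ell_*,r_*+1)$. Hence $F_{\vy}=\sum_h R(\vp^K_{h,\ell_*}(r_*))\vp^V_{h,\ell_*}(r_*+1)$ has no off-chunk term; those tokens enter only the normalizer. Second, for ``exactly'' you must also check that all injective assignments have equal loss. This holds because each covers a fraction $H/(S-1)$ of the queries, and a query's loss then depends only on whether it is covered. Your exchange argument against collisions is otherwise sound. It keeps $\E F_{\vy}=HR_1/(S-1)$ and removes a pair term of at least $(e^{\beta Q_1}-1)^2$ from the normalizer, where $Q_1=(1-R_1)/(L-1)$.
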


The first lemma rules out mixed coefficients, and the second compares the
remaining one-hot route assignments under exact geometry.

\begin{lemma}[Projection to adjacent vertices]
\label{lem:oracle-vertex-projection}
Under the assumptions of \Cref{thm:incoherent-hard-formal}, for each context
$\mX$ and coefficient collection $\vp$, there is a one-hot adjacent
collection with strictly smaller $\cL_{\mX}$ unless $\vp$ already has this form.
\end{lemma}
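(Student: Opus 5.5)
Fix a context $\mX$ and a collection $\vp$ that is not one-hot adjacent. The plan is to first settle the exact-geometry case, where $\cL_{\mX}=\cL_{{\rm sym},\mX}$, and then transfer strict improvement to $\varepsilon>0$ by a perturbation bound. Under exact geometry, the comparison loss is a sum over queries of $\log\sum_{\vx}e^{\beta F_{\vx}}-\beta F_{\vy}$. Here $F$ is a probability-like vector of total mass $H$, supported on the $LS$ context tokens, and $N-LS$ absent tokens contribute the constant mass $(N-LS)e^0$ to the partition function.

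For large $N$, the log-partition term is $\log(N+O(\sum_{\vx}(e^{\beta F_{\vx}}-1)))$, so to first order in $1/N$ the loss is
\[
-\beta\,\mathbb E[F_{\vy}]+\frac1N\sum_{\vx}\bigl(e^{\beta F_{\vx}}-1\bigr).
\]
The dominant term is therefore the expected target logit. Under exact geometry, $F_{\vy}$ decomposes head by head as
\[
A^h_{\ell_*}\,\vp^V_{h,\ell_*}(r_*+1)+\sum_{\ell\neq\ell_*}A^h_\ell\,\vp^V_{h,\ell}\text{-mass on }\vy .
\]
The second piece vanishes because tokens are distinct, so $\vy$ appears only in chunk $\ell_*$. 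Averaging over $r_*$, head $h$ in chunk $\ell$ contributes
\[
\frac{1}{S-1}\sum_{r\le S-1} R\bigl(\vp^K_{h,\ell}(r)\bigr)\,\vp^V_{h,\ell}(r+1).
\]
The key step is to show this bilinear-in-structure functional is uniquely maximized, over pairs in $\Delta_S\times\Delta_S$, exactly at adjacent vertex pairs $(\ve_r,\ve_{r+1})$, with a quantitative gap. Two facts drive this. First, $R$ is increasing, and for $u\in[0,1]$ it is convex when $\gamma<\log(L-1)$, which is why that condition appears; convexity with $R(0)>0$ gives $R(u)\le R(0)+(R_1-R(0))u$. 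Second, for fixed key weights the sum is linear in $\vp^V$, so it is maximized at a vertex $\ve_{r+1}$, after which it is maximized over $\vp^K$ by putting all mass on $r$. Strictness follows because convexity is strict and the linear coefficients $R_1-R(0)$ and $R(0)$ differ. The gap obtained is of order $\delta:=\operatorname{dist}(\vp_{h,\ell},\text{adjacent vertices})$ times a constant depending on $\gamma,L$.

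Given a non-adjacent $\vp$, the construction replaces each head-chunk pair by an adjacent vertex pair. To handle the $1/N$ correction and, later, the $\varepsilon=0$ injectivity clause, the replacement route should be chosen among those not used by other heads in that chunk; this is possible since $H\le S-1$. The exchange then raises $\mathbb E[F_{\vy}]$ by at least $c\,\delta/(L(S-1))$, while changing the $1/N$ term by at most $O(e^{\beta H}\delta/N)$. The condition $N\ge C LS\exp(CH/L)$ is designed to absorb this error. I expect the main obstacle to be making that error bound depend on $e^{CH/L}$ rather than $e^{\beta H}$. The way around it is to note that each token's $F_{\vx}$ is bounded via the attention weights of order $1/L$ on non-query chunks, so I would bound $F_{\vx}\le H R_1+\ldots$ carefully. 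A further subtlety is that the approximation of the logarithm must be done exactly, using concavity of $\log$ and convexity of $\exp$, rather than only to first order.

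Finally, for $\varepsilon>0$, the true scores differ from the symmetric ones by at most $\gamma\varepsilon$ per chunk, and the logits by at most $\beta H\varepsilon$ per coordinate after removing the common $\kappa$ part. More usefully, the difference $\cL_{\mX}-\cL_{{\rm sym},\mX}$ is Lipschitz in $\vp$ with constant $O(\varepsilon\,\mathrm{poly}(\gamma,\beta)LHS)$, obtained by differentiating along the linear path between $\vp$ and its projection. Since the symmetric improvement is linear in $\delta$ with constant of order $1/(LS)$, choosing $\varepsilon\le c/(L^2HS)$ keeps the total change strictly negative. This yields the lemma; the strict inequality holds unless $\delta=0$.
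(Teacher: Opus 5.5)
Your overall structure matches the paper's. You bound a target-logit gain in the comparison loss $\cL_{{\rm sym},\mX}$, control the log-normalizer change at order $1/N$ through $F_{\vx}\le HR_1\le He^{\gamma}/L$ (which is where $\exp(CH/L)$ comes from), and transfer to $\varepsilon>0$ by integrating a Lipschitz bound on $\cL_{\mX}-\cL_{{\rm sym},\mX}$ along the segment from $\vp$ to its replacement.

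\textbf{The gap is your choice of replacement route.} Both error terms you must beat scale with the \emph{displacement} $\|\vp-\widehat{\vp}\|_1$. The target gain, however, is bounded below only by the distance to the \emph{best} adjacent pair. In fact, after summing over $r_*$, every adjacent vertex pair contributes exactly $R_1$, so the gain is the same whichever route you pick.

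If you send a head to a route unused by the other heads, the displacement need not shrink with your $\delta$. For example, let two heads in one chunk both have $\vp^K=(1-\delta)\ve_1+\delta\ve_2$ and $\vp^V=\ve_2$, and reroute the second to $(\ve_2,\ve_3)$. The gain is $O(\delta)$, but $\E\|F-\widehat F\|_1$ stays bounded away from zero as $\delta\to0$. Your error bounds then give order $1/N$ for the normalizer and order $\varepsilon$ for the transfer, not $O(e^{\beta H}\delta/N)$ and $O(\varepsilon\delta)$, so the comparison does not close. The same happens if some pairs already sit at colliding vertices: rerouting them costs displacement for zero gain.

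At $\varepsilon=0$ you could rescue the conclusion by composing with the collision comparison of \Cref{lem:oracle-distinct-routes}. For $\varepsilon>0$ this does not work. The collision penalty is only $O(1/N)$, while the geometric perturbation is $O(\varepsilon)$ with $\varepsilon$ independent of $N$, so nothing makes an unused route better. This is consistent with \Cref{thm:incoherent-hard-formal} claiming injectivity only when $\varepsilon=0$.

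The fix is to keep injectivity out of this lemma and project each $(\ell,h)$ to the $\hat r$ maximizing $\vp^K(\hat r)+\vp^V(\hat r+1)$, as the paper does. Then $uv\le(u+v)^2/4$ and the maximality of $\hat r$ give $1-\sum_r\vp^K(r)\vp^V(r+1)\ge\frac14\bigl(\|\vp^K-\ve_{\hat r}\|_1+\|\vp^V-\ve_{\hat r+1}\|_1\bigr)$. This is the quantitative step you left as ``of order $\delta$''. It ties the gain to the same normalized distance $\Delta$ that bounds both $\E\|F-\widehat F\|_1$ and the transfer error.

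\textbf{A smaller issue is your use of convexity.} You derive the deficit from convexity of $R$ on $[0,1]$, which requires $\gamma\le\log(L-1)$. That condition is not a hypothesis of \Cref{thm:incoherent-hard-formal}, which allows any $\gamma>0$ and $L\ge2$. For $L=2$, $R(u)=\sigma(\gamma u)$ is concave on $[0,1]$.

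Convexity is not needed. Set $\zeta=\min\{R_1,\min_{[0,1]}R'\}>0$. The mean-value bound $R(u)\le R_1-\zeta(1-u)$, together with $\sum_r\vp^V(r+1)\le1$, gives $R_1-\sum_{r\le S-1}R(\vp^K(r))\vp^V(r+1)\ge\zeta\bigl(1-\sum_r\vp^K(r)\vp^V(r+1)\bigr)$ for every $\gamma$ and $L$. Likewise, uniqueness of the maximizer follows from strict monotonicity of $R$ alone, not from the coefficients $R_1-R(0)$ and $R(0)$ differing.
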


\begin{proof}
Fix a context and write $\E$ for its uniform-query average.  For each $(\ell,h)$,
choose $\hat r_{\ell,h}\in[S-1]$ maximizing
$\vp_{h,\ell}^{K}(\hat r)+\vp_{h,\ell}^{V}(\hat r+1)$, and replace the two coefficient
vectors by $\ve_{\hat r_{\ell,h}}$ and $\ve_{\hat r_{\ell,h}+1}$.  Denote the
resulting one-hot adjacent collection by $\widehat{\vp}$.  Suppressing
$(\ell,h)$, set
$\zeta=\min\{R_1,\min_{u\in[0,1]}R'(u)\}>0$.
The mean-value theorem and maximality of $\hat r$ give
\begin{align*}
 R_1-\sum_{r=1}^{S-1}R(\vp^K(r))\vp^V(r+1)
 &\geq \zeta\left(1-
   \sum_{r=1}^{S-1}\vp^K(r)\vp^V(r+1)\right)\\
 &\geq\frac\zeta4
 \left(\|\vp^K-\ve_{\hat r}\|_1+\|\vp^V-\ve_{\hat r+1}\|_1\right).
\end{align*}
The last step uses $uv\leq(u+v)^2/4$ and the fact that a simplex
point's distance to a vertex is twice its missing vertex mass.

Let the query-normalized projection distance be
\begin{equation}
 \Delta=\frac1{L(S-1)}\sum_{\ell,h}
 \left(\|\vp_{h,\ell}^{K}-\widehat{\vp}_{h,\ell}^{K}\|_1
 +\|\vp_{h,\ell}^{V}-\widehat{\vp}_{h,\ell}^{V}\|_1\right).
\end{equation}
Averaging this per-head gap over the uniform query gives
\begin{equation}
 \E[\widehat F_{\vy}-F_{\vy}]\geq\frac\zeta4\Delta.
 \label{eq:compact-target-gain}
\end{equation}
Inserting the output with projected attention but original values,
using $\E\widehat A_\ell^h=1/L$, and applying the Lipschitz bound for $R$
give
\begin{equation}
 \E\|F-\widehat F\|_1
 \leq \max\{S-1,2\max_{u\in[0,1]}R'(u)\}\Delta.
 \label{eq:compact-logit-change}
\end{equation}

Both outputs have mass $H$ and entries in $[0,HR_1]$.  Along the segment
between them, subtracting $1$ from each exponential in the derivative of
the log-normalizer bounds its change by
$\beta(e^{\beta HR_1}-1)\E\|F-\widehat F\|_1/(2N)$.
Hence \Cref{eq:compact-target-gain,eq:compact-logit-change} give
\begin{equation}
 \cL_{{\rm sym},\mX}(\vp)-\cL_{{\rm sym},\mX}(\widehat{\vp})
 \geq\beta\Delta\left\{\frac\zeta4-
 \frac{\max\{S-1,2\max_{u\in[0,1]}R'(u)\}
 (e^{\beta HR_1}-1)}{2N}\right\}.
 \label{eq:compact-comparison-gap}
\end{equation}
Here $\zeta\geq\gamma/[L(e^\gamma+1)]$ and $R_1\leq e^\gamma/L$.
The theorem's vocabulary bound makes the braces at least $\zeta/8$.

It remains to transfer this gap to the true Gram matrix.  The softmax
Jacobian $J(q)=\operatorname{diag}(q)-qq^\top$ satisfies
$\|J(q)v\|_1\leq2\|v\|_\infty$ and
$\|(J(q)-J(q'))v\|_1\leq3\|q-q'\|_1\|v\|_\infty$.
By \Cref{eq:incoherent-gram}, each attention score changes by at most
$\gamma\varepsilon$, and its path derivative by at most
$\gamma\varepsilon\sum_\ell\|\dot\vp_{h,\ell}^K\|_1$.
Applying the Jacobian bounds to attention and then to output
cross-entropy gives, for any differentiable coefficient path,
\begin{equation}
 \left|\frac{\mathrm d}{\mathrm dt}
 (\cL_{\mX}-\cL_{{\rm sym},\mX})(\vp(t))\right|
 \leq C_{\gamma,\beta}(1+\beta H)\varepsilon
 \sum_{\ell,h,P}\|\dot\vp_{h,\ell}^{P}(t)\|_1.
 \label{eq:compact-loss-stability}
\end{equation}
Indeed, after subtracting the common $\beta\kappa H$ from the output
logits, their true--comparison difference and its path derivative are
bounded respectively by
$O_{\gamma,\beta}(H\varepsilon)$ and
$O_{\gamma,\beta}(\varepsilon\sum_{\ell,h,P}
\|\dot\vp_{h,\ell}^{P}\|_1)$.  The cross-entropy gradient is
$2$-bounded in $\ell_1$ and $2$-Lipschitz from $\ell_\infty$ to $\ell_1$.
Integrating \Cref{eq:compact-loss-stability} along
$\vp(t)=(1-t)\widehat{\vp}+t\vp$ changes the comparison gap by at most
$C_{\gamma,\beta}L(S-1)(1+\beta H)\varepsilon\Delta$.
The theorem's bound on $\varepsilon$ makes this less than
$\beta\zeta\Delta/16$.  With \Cref{eq:compact-comparison-gap}, the true
gap is positive whenever $\Delta>0$.  An oracle can make the replacement
for one context without changing any other context.
\end{proof}

\begin{lemma}[Distinct routes under exact geometry]
\label{lem:oracle-distinct-routes}
Under the assumptions of \Cref{thm:incoherent-hard-formal} with
$\varepsilon=0$, all injective one-hot adjacent assignments have the same
fixed-context loss, and every assignment with a head collision has strictly
larger loss.
\end{lemma}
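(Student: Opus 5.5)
The plan is to write the fixed-context loss for one-hot adjacent assignments in closed form and then run an exchange argument on route collisions. With $\varepsilon=0$ we have $\cL_{\mX}=\cL_{{\rm sym},\mX}$, as noted before \Cref{thm:incoherent-hard-formal}, so it suffices to work with the comparison outputs $F$.

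\textbf{Closed form.} Fix a context and a one-hot adjacent assignment $r_{\ell,h}$. For a query $(\ell_*,r_*)$, head $h$ is \emph{matched} when $r_{\ell_*,h}=r_*$.
\begin{itemize}
\item A matched head has comparison attention $R_1$ on $\ell_*$ and $Q_1$ on every other chunk.
\item An unmatched head has $R(0)=1/L$ on every chunk.
\item Head $h$ deposits its weight on chunk $\ell$ onto the single token $\vx_{\ell,r_{\ell,h}+1}$.
\end{itemize}
Hence $F_{\vy}=m_{\ell_*}(r_*)R_1$, where $m_\ell(r)$ is the number of heads using route $r$ in chunk $\ell$.

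Since $\sum_{r}m_\ell(r)=H$ for every $\ell$, the target term $-\beta F_{\vy}$ averages to $-\beta R_1H/(S-1)$. This value does not depend on the assignment, so only the log-normalizer matters. Write
\[
Z=N+\sum_{\vx}\phi(F_{\vx}),\qquad \phi(a)=e^{\beta a}-1 .
\]
The function $\phi$ is strictly convex with $\phi(0)=0$, and therefore strictly superadditive on positive arguments.

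Now split $Z$ into an \emph{individual part} and a \emph{merging excess}. The individual part is $N+\sum_h\sum_\ell\phi(A^h_\ell)$, computed as if every head deposited on its own token. It equals $N+c_0+m_{\ell_*}(r_*)D$, where
\[
c_0=HL\,\phi(1/L),\qquad D=\phi(R_1)+(L-1)\phi(Q_1)-L\phi(1/L).
\]
The merging excess $E\geq0$ is strictly positive exactly when two heads send mass to the same token. This happens precisely when some chunk has $m_\ell(r)\geq2$, since distinct $(\ell,r)$ give distinct tokens.

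\textbf{Injective case.} In an injective assignment, $E=0$ for every query and $m_{\ell_*}(r_*)\in\{0,1\}$. For each $\ell_*$, exactly $H$ of the $S-1$ values of $r_*$ have $m=1$. The averaged loss is therefore
\[
\tfrac{1}{S-1}\bigl[H\log(N+c_0+D)+(S-1-H)\log(N+c_0)\bigr]
\]
minus the constant target term. This is the same for every injective assignment, which proves the first claim.

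\textbf{Collision case.} Take an assignment with a collision, say $m_\ell(r)\geq2$. Because $H\leq S-1$, some route $r'$ is unused in chunk $\ell$. Move one colliding head from $r$ to $r'$; iterating this move reaches an injective assignment, so it suffices to show each move strictly lowers the loss. The move has two competing effects.
\begin{itemize}
\item It reduces $E$ for queries with $\ell_*=\ell$, $r_*=r$: the merged matched mass drops from $\phi(mR_1)$ to a smaller superadditive split. It also reduces $E$ for every other query, on the merged token in chunk $\ell$ with weights $Q_1$ or $1/L$.
\item It changes the individual parts only by moving one $D$ from the $r$-query to the $r'$-query.
\end{itemize}
By concavity of $\log$, the second effect increases $\sum\log Z$. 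However, its size is only second order in $1/N$, namely $O(D^2/N^2)$. The drop in $E$ is first order: it is at least $\bigl(\phi(2R_1)-2\phi(R_1)\bigr)/(N+O_{\gamma,\beta}(e^{\beta HR_1}))$ on $\log Z$.

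\textbf{Main obstacle.} The hardest step is making this comparison quantitative and uniform. I need the explicit lower bound
\[
\phi(2a)-2\phi(a)\geq\beta^2a^2
\]
at $a=R_1$, and the bound $D\leq O(\beta H R_1 e^{\beta HR_1})$. These must combine with the vocabulary condition $N\geq C_{\gamma,\beta}LS\exp(C_{\gamma,\beta}H/L)$ so that the first-order merging gain dominates the second-order concavity loss. I would first try a second-order Taylor expansion of $\log(N+c_0+x)$ around $x=0$, applied to the $r$- and $r'$-queries simultaneously. Together with $R_1\geq 1/L$, this should yield the required $N$ threshold.
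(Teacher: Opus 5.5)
Your closed form, the constancy of the averaged target term, the split $Z=N+c_0+m_{\ell_*}(r_*)D+E$, and the injective case are correct and agree with the paper. The gap is in the collision step.

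\textbf{The claimed decrease at $(\ell,r)$ fails.} Moving head $h_0$ in chunk $\ell$ from $r$ to an unused $r'$ does more than relocate its deposit inside chunk $\ell$. For queries with $\ell_*=\ell$ it also changes whether $h_0$ is matched, and hence its attention on \emph{every} chunk. At the query $(\ell,r)$ the head becomes unmatched, so its weight on each chunk $\ell'\neq\ell$ rises from $Q_1$ to $1/L$. If it shares a token there with other heads of total weight $s_{\ell'}$, the excess on that token rises by $\phi(s_{\ell'})(e^{\beta/L}-e^{\beta Q_1})$. So at $(\ell,r)$ the change in $E$ is $-\phi((m-1)R_1)\phi(R_1)+\sum_{\ell'\neq\ell}\phi(s_{\ell'})(e^{\beta/L}-e^{\beta Q_1})$, which can be positive and exponentially large in $H/L$. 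A concrete instance: take $L=4$, $\gamma=\beta=1$, $H=10$, $S\geq 11$. Let $h_0,h_1$ share route $r$ in chunk $\ell$, put the other eight heads on distinct routes there, and put all ten heads on one common route in each other chunk. Then the normalizer at $(\ell,r)$ \emph{increases} by about $1.76$ after the move. The compensating decrease sits at $(\ell,r')$, where the moved head's weight on the other chunks falls from $1/L$ to $Q_1$. Your assertion that the move lowers $E$ at $(\ell,r)$ is therefore false, and any quantitative comparison must pair these two queries.

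\textbf{The second-order estimate is also off.} Shifting $D$ between two queries whose normalizers differ costs about $D\,(Z_{(\ell,r)}-Z_{(\ell,r')})/N^2$, not $O(D^2/N^2)$. Likewise, the positive $\Delta Z$ at $(\ell,r)$ must be weighed with $1/N$ against the negative changes weighted with $1/Z_{\max}$. Both steps need \emph{upper} bounds on collided normalizers, which can be of size $e^{\Theta(H/L)}$. This is exactly the step you defer as the main obstacle, so the strict inequality for collisions is not yet proved.

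\textbf{How the paper avoids this.} It does not exchange; it compares an arbitrary collided assignment directly with the common injective value. Every head puts at least $Q_1$ on every chunk, so one colliding pair adds at least $d_{\mathrm{col}}=\phi(Q_1)^2$ to the normalizer at \emph{every} query. This gives, pointwise in the query, $Z_{\mathrm{col}}\geq Z_0+kD+d_{\mathrm{col}}$, with $Z_0=N+c_0$, $k=m_{\ell_*}(r_*)$, and your $D$ equal to the paper's $d_{\mathrm{inj}}$. For any assignment, $\E k=H/(S-1)$ and $\E k^2\leq H$. Applying $\log(1+t)\geq t-t^2/2$ to the collided case and $\log(1+t)\leq t$ to the injective case gives a gap of at least $d_{\mathrm{col}}/Z_0-(D^2H+2Dd_{\mathrm{col}}+d_{\mathrm{col}}^2)/(2Z_0^2)$. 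Only a lower bound on $Z_{\mathrm{col}}$ enters, so the possibly huge merging excess never has to be controlled from above. Two further facts finish it: $D\leq C_{\gamma,\beta}Q_1^2$, since it is a single head's Jensen gap around $1/L$ and much smaller than your $O(\beta HR_1e^{\beta HR_1})$; and $d_{\mathrm{col}}\geq\beta^2Q_1^2$. The requirement then reduces to $N>C_{\gamma,\beta}H$.

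Your exchange route can be repaired by pairing $(\ell,r)$ with $(\ell,r')$, summing first-order gains over all queries, and bounding collided normalizers from above. The direct comparison is shorter and needs no upper bounds at all.
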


\begin{proof}
Fix a context, write $r_{\ell,h}\in[S-1]$ for the
route of head $h$ in chunk $\ell$, and let
$k=\sum_h\mathbf1\{r_{\ell_*,h}=r_*\}$ for a uniform query.  Then
\begin{equation}
 \E k=\frac{H}{S-1}\leq1,
 \qquad \E k^2\leq H.
\end{equation}
Every assignment has the same expected target logit
$\E F_{\vy}=HR_1/(S-1)$, so only the output normalizer matters.
Write $Z_{\rm inj}$ for this normalizer under an injection.  Then
$Z_{\rm inj}=Z_0+k d_{\rm inj}$, where
\begin{equation}
 Z_0:=N-LH+LH e^{\beta/L},
 \qquad
 d_{\rm inj}:=e^{\beta R_1}+(L-1)e^{\beta Q_1}-Le^{\beta/L}>0.
\end{equation}
The $L$ numbers $R_1,Q_1,\ldots,Q_1$ have mean $1/L$ and are not all
equal, so the last inequality follows from strict convexity.  Thus all
injections have the same expected target and normalizer terms, and
\begin{equation}
\E[\log Z_{\rm inj}]
 \leq\log Z_0+\frac{d_{\rm inj}H}{(S-1)Z_0}.
\end{equation}

For a collided assignment, contributions to a shared output coordinate add
before exponentiation.  Writing
$e^{\beta A}=1+(e^{\beta A}-1)$, the pair product from any collision
adds at least $d_{\rm col}:=(e^{\beta Q_1}-1)^2>0$ to the
normalizer, so $Z_{\rm col}\geq Z_0+k d_{\rm inj}+d_{\rm col}$.
Using $\log(1+t)\geq t-t^2/2$ and $\E k^2\leq H$ gives
\begin{equation}
 \E[\log Z_{\rm col}]-\E[\log Z_{\rm inj}]
 \geq\frac{d_{\rm col}}{Z_0}
 -\frac{d_{\rm inj}^2H+2d_{\rm inj}d_{\rm col}
 +d_{\rm col}^2}{2Z_0^2}.
\end{equation}
Taylor's theorem at $1/L$ and $R_1=e^\gamma Q_1$ give
$d_{\rm inj}\leq C_{\gamma,\beta}Q_1^2$, whereas
$d_{\rm col}\geq\beta^2Q_1^2$ and $d_{\rm col}\leq(e^\beta-1)^2$.
The numerator in the error term is therefore at most
$C_{\gamma,\beta}H d_{\rm col}$.  Since $Z_0\geq N$ and the
vocabulary bound can be enlarged to ensure $N>C_{\gamma,\beta}H$,
the displayed gap is positive.
The expected target term is the same for both assignments, so every
collision strictly increases the fixed-context loss.
\end{proof}

\begin{proof}[Proof of \Cref{thm:incoherent-hard-formal}]
The oracle class is a product over contexts, so each fixed-context loss can be
minimized separately.  \Cref{lem:oracle-vertex-projection} makes every minimizer
one-hot and adjacent.  When $\varepsilon=0$,
\Cref{lem:oracle-distinct-routes} shows that precisely the injective route
assignments minimize the loss among them.
\end{proof}

The injection may depend on the context and chunk, so the oracle result does
not assign a fixed route to each named head.

\subsection{\texorpdfstring{Proof of Theorem~\ref*{thm:local-hardening}}{Proof of Theorem 2}}
\label{sec:gradient-flow}

The oracle result says which compressors are optimal but not whether training finds them, or whether a trained head keeps the same selection for every input. As in the toy example, we answer this by starting gradient flow near zero, where all heads agree and no useful phase is preferred over another, and asking along which directions the flow leaves the symmetric path that starts there.

Let $\vmu:=N^{-1}\sum_{\vx\in\mathcal V}\vx$.  We first collect the
assumptions for the dynamics result.  Its effective parameter space is
\begin{equation}
 \mathcal C_{\rm eff}:=\left\{\vc:
 \vc_{h,r}^{P}\in\operatorname{span}(\mathcal V),\quad
 \sum_{r=1}^S\vmu^\top\vc_{h,r}^{P}=0
 \ \text{for every }P\in\{K,V\},\ h\in[H]\right\}.
 \label{eq:effective-space}
\end{equation}
This removes embedding-orthogonal directions and a common mean-logit shift
in each head and branch, which do not change the gates.

\begin{assumption}[Setting for the gradient-flow result]
\label{ass:gradient-flow-setting}
\leavevmode
\begin{enumerate}
\item \textbf{Retrieval distribution.} We use the model of
\Cref{sec:model}, with $N\geq LS$ vocabulary embeddings.  Contexts contain
$L$ non-overlapping chunks of $S$ distinct tokens each, sampled uniformly
without replacement across the entire context.  After compression, the
query is uniform over the $L(S-1)$ nonfinal positions and requests the
next token in the same chunk.
\item \textbf{Fixed model parameters.} We fix $L\geq3$, $S\geq3$,
$1\leq H\leq S-1$, $0<\gamma<\log(L-1)$, and $\beta>0$ as $N$ varies.
\item \textbf{Exact geometry and positive mean.} The embeddings satisfy
\Cref{eq:incoherent-gram} with $\varepsilon=0$, and the mean norm
$\|\vmu\|>0$ is fixed as $N$ varies.
\item \textbf{Gate-only population training.} Only the offset-specific
linear gate vectors in $\Clinear$, shared across contexts and chunks, are
trained; the embeddings and all other maps remain fixed as in
\Cref{sec:model}.  We use Euclidean gradient flow on the population loss:
\begin{equation}
 \dot{\vc}=-\nabla_{\vc}\{\cL(g_{\vc})/\beta\}.
 \label{eq:parameter-flow}
\end{equation}
The factor $1/\beta$ only rescales time.
\item \textbf{Initialization.} For $\rho>0$, $\vc(0)$ is uniform with
respect to Euclidean volume on the radius-$\rho$ ball centered at zero in
$\mathcal C_{\rm eff}$.  The required joint bounds on $\rho$ and $N$ are
specified in \Cref{thm:local-hardening-formal}.
\end{enumerate}
\end{assumption}

This initialization includes input-dependent gate directions.  The proof
below establishes the emergence of static preferences from these starts.
Under exact geometry, averaging the Gram identity gives
$\vx^\top\vmu=\|\vmu\|^2=\kappa+1/N$, and hence
\begin{equation}
 \vmu^\top(\vx-\vmu)=0,
 \qquad
 (\vx-\vmu)^\top(\vx'-\vmu)=\mathbf1\{\vx=\vx'\}-\frac1N
 \quad(\vx,\vx'\in\mathcal V).
\end{equation}
The exact Gram matrix has eigenvalues $1$ on vocabulary contrasts and
$N\|\vmu\|^2>0$ on the constant direction, so its rank is $N$ and this
idealization requires $d\geq N$.
The ambient gradient is tangent to $\mathcal C_{\rm eff}$: each parameter gradient is
a linear combination of embeddings, and the gate-logit gradients in
each row sum to zero.  Since
$\|\vx\|^2=\|\vmu\|^2+1-1/N$ for $\vx\in\mathcal V$, the vector field in
\Cref{eq:parameter-flow} is smooth and globally bounded for each $N$.
Its trajectories therefore exist for all positive time.  Below,
$\vp_{h,\ell}^{P}(r;t,\mX):=
\vp_{h,\ell}^{P}(r;\vc(t),\mX)$.

\begin{theorem}[Formal version of \Cref{thm:local-hardening}]
\label{thm:local-hardening-formal}
Under \Cref{ass:gradient-flow-setting}, for every $0<\delta<1$,
there are $0<c_\delta\leq1$ and
$C_\delta<\infty$ such that the following holds.  If
\begin{equation}
 0<\rho\leq c_\delta,
 \qquad
 N\geq C_\delta\log\frac e\rho,
\end{equation}
then, with probability at least $1-\delta$, there
are $r_1,\ldots,r_H\in[S-1]$ such that
\begin{equation}
 \max_{h\in[H]}\sup_{\mX\in\mathcal X,\,\ell\in[L]}
 \max\left\{
 1-\vp_{h,\ell}^{K}(r_h;t,\mX),
 1-\vp_{h,\ell}^{V}(r_h+1;t,\mX)
 \right\}\longrightarrow0\quad\text{as }t\to\infty.
\end{equation}
The routes $r_h$ need not be distinct.  The constants $c_\delta,C_\delta$
can be chosen uniformly for $\|\vmu\|$ in any compact subset of $(0,\infty)$.
Hence fixing $\kappa>0$ suffices, since then $\|\vmu\|^2=\kappa+1/N\in(\kappa,\kappa+1]$.
\end{theorem}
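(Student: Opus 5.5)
The plan is to follow the toy example's strategy: start from the symmetric point $\vc=0$, find the escape directions of the flow, and show that trajectories from a small random ball pick a single route per head and then harden. Under exact geometry every gate logit decomposes as $\langle\vc_{h,r}^{P},\vx\rangle=\vmu^\top\vc_{h,r}^{P}+(\vx-\vmu)^\top\vc_{h,r}^{P}$. This splits $\mathcal C_{\rm eff}$ into two parts. The first is a static part $b_{h,r}^P:=\vmu^\top\vc_{h,r}^P$, which has zero sum over $r$ and acts as a positional bias. The second is an input-dependent part $\vw_{h,r}^P\in\operatorname{span}\{\vx-\vmu\}$, whose logit contributions are the coordinates $w_{h,r}^P(\vx)$ with $\sum_\vx w=0$. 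Since tokens are sampled uniformly without replacement, the population loss is invariant under vocabulary permutations. I will use this symmetry to show that the $\vw$-gradient averages out to order $1/N$ relative to the $b$-gradient. Concretely, I would derive $\dot b_{h,r}^P=G_{h,r}^P(b)+O(\|\vw\|/N+1/N)$, where $G$ is the gradient of the reduced loss $\bar{\cL}(b)$ obtained from $\cL_{\rm sym}$ with $\vp$ equal to the softmax of $b$ alone. I would also derive $\frac{\mathrm d}{\mathrm dt}\|\vw\|^2\leq -c\|\vw\|^2+O(\|\vw\|/N)$, or at least bounded growth. This is where $N\geq C_\delta\log(e/\rho)$ enters: over the escape time of order $\log(1/\rho)$, the input-dependent component stays $O(\rho+1/N)$ and does not disturb the dynamics, and later it is dominated by diverging biases.

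Next I analyze the reduced $b$-dynamics. At $b=0$ the reduced loss is stationary by symmetry over the interior positions. I would compute the Hessian (Jacobian of $G$) at zero, as in \Cref{app:two-slot-theory}. The key and value biases of each head couple through the adjacent-route product $\sum_r R(p^K(r))p^V(r+1)$. Because $0<\gamma<\log(L-1)$ makes $R$ convex on $[0,1]$, the linearization is expanding along route directions $b^K\propto \ve_r-\frac1S\1$, $b^V\propto\ve_{r+1}-\frac1S\1$. The cross-head terms enter only through the vocabulary normalizer and are $O(1/N)$, so each head's linearization is essentially identical and decoupled. A random start in the $\rho$-ball has, with probability $1-\delta$, a nondegenerate gap between its projections on the competing unstable route directions. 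This excludes a measure-$\delta$ neighborhood of the stable manifolds of the saddles. The flow therefore leaves the $O(\rho)$ neighborhood along a cone around one route $r_h$ per head. Different heads can land on the same route, which matches the statement that the $r_h$ need not be distinct.

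After escape I would prove that the region $\{p^K(r_h)\geq 1-\eta,\ p^V(r_h+1)\geq1-\eta\}$, intersected with small $\|\vw\|$, is forward invariant. On this region the $b_{h,r_h}^K$ and $b_{h,r_h+1}^V$ components have strictly positive drift bounded below by a multiple of the residual mass. This follows from a local version of the target-gain inequality \Cref{eq:compact-target-gain} in \Cref{lem:oracle-vertex-projection}, with the normalizer cost of order $1/N$. So the margins grow like $\log t$, the residual mass tends to $0$, and since $\vw$ stays bounded the convergence is uniform over $\mX$ and $\ell$. I also need to bridge from the escape cone to this region. For that I would use a Lyapunov/monotonicity argument showing that the ordering of route scores $b^K_{h,r}+b^V_{h,r+1}$ is preserved by the reduced flow.

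The main obstacle is the middle stage: controlling the global trajectory of a nonlinear flow between the linear escape regime and the hardened regime, and showing that the initially leading route keeps its lead rather than being overtaken. I would first try to prove that the reduced vector field is cooperative/monotone in the route-score gaps. If that fails, I would fall back on a quantitative comparison with the toy two-slot ODE, treating each head separately because cross-head coupling is $O(1/N)$.
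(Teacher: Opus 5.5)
Your escape analysis rests on a false premise: $b=0$ is not a stationary point of the mean-aligned flow, so there is no saddle and no stable manifold to avoid.

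\emph{Why zero is not stationary.} Uniform gates put mass $1/S$ on key phase $S$ and value phase $1$, which no adjacent route uses. The target term gives these dummy phases zero fitness, while every useful phase has positive fitness. So the gradient at zero is nonzero along the useful-versus-dummy direction; symmetry among the $S-1$ routes removes only the route-contrast component. The trajectory from zero therefore keeps drifting inside $\mathcal M$: dummy coefficients decay like $(1+t)^{-1}$ and the useful-minus-dummy logit gaps diverge (\Cref{lem:symmetric-baseline}).

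\emph{Why your Hessian does not govern selection.} Route selection happens late. The projection of the uniform $\rho$-ball in the $2H(SN-1)$-dimensional $\mathcal C_{\rm eff}$ onto the $2H(S-1)$-dimensional $\mathcal M$ has size of order $\rho/\sqrt N$. The static route contrasts therefore need time of order $\log(\sqrt N/\rho)$, not $\log(1/\rho)$, to reach order one. By then the state is far from zero, and your Hessian at uniform coefficients $1/S$ no longer governs the dynamics.

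\emph{What is needed instead.} You must linearize along the moving symmetric path $\vc_N^0(t)$. This gives a non-autonomous system $\dot u=A_N(t)u$ whose generator converges to the key/successor-value block $A_\infty$ at useful coefficients $1/(S-1)$, but only at the non-integrable rate $C/(1+t)$. You then need three things.
\begin{enumerate}
\item An exponential dichotomy for this system. The paper builds invariant contrast lines after a fixed time $T$, giving rate $a>0$ on the growing fiber $Q_N(t)\otimes\mathcal W$.
\item Gaussian concentration of the initial projection onto that fiber.
\item A Duhamel bootstrap for the deviation $P_{\mathcal M}(\vc(t)-\vc_N^0(t))$ in which every error term is proportional to the deviation.
\end{enumerate}
A forcing not proportional to the deviation, such as the $+1/N$ in your $\dot b=G(b)+O(\|\bm{w}\|/N+1/N)$, would swamp contrasts of size $\rho/\sqrt N$ unless $N\gg\rho^{-2}$. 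The hypothesis $N\geq C_\delta\log(e/\rho)$ does not give that.

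The instability comes from the negative determinant of that block, i.e.\ the $R'$ coupling of key and successor value, not primarily from convexity of $R$. Its same-sign positive eigenvector is what aligns the key winner $r_h$ with the value winner $r_h+1$.

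\emph{The middle stage.} Your proposal leaves this open, but no monotonicity of route-score sums is needed. Suppose the flow is on $\mathcal M$ with aligned winners and all winner--loser gaps at least $\eta$. The condition $\gamma<\log(L-1)$ makes $R'$ increasing on $[0,1]$, which gives each winner a fitness advantage of at least $g_\eta/2$ over every loser. The identity $\frac{\mathrm d}{\mathrm dt}(b_s-b_j)=\|\vmu\|^2[p_j(f_s-f_j)+(p_s-p_j)\sum_ip_i(f_s-f_i)]$ then makes every gap grow to any prescribed level in $N$-independent time (\Cref{lem:mean-amplification}). Gronwall with an $N$-independent Lipschitz constant transfers this to the true trajectory, whose token part is only $O(\rho)$ at that point.

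\emph{The terminal stage.} Your invariant region requires $\|\bm{w}\|$ to stay small for all time. A growth rate of $O(1/N)$ gives no bound uniform in $t$, and you have not established the contraction. The paper avoids bounding $\bm{w}$ altogether. It works with the token-pair margins $\langle\vc_{h,r_h}^K,\vx\rangle-\langle\vc_{h,j}^K,\vx'\rangle$ and their value analogues. It uses that the exact Gram matrix is entrywise nonnegative once $N\geq\|\vmu\|^{-2}$, so every margin derivative is a sum of nonnegative terms. This yields forward invariance. Restricting to contexts that place $\vx,\vx'$ in the relevant phases then shows every margin diverges (\Cref{lem:terminal-cone}).
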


The lemmas below track the dynamics from the symmetric zero-start path to
aligned winners and then to persistent one-hot routing.  Winners are aligned
when each head's key gate favors some phase $r_h\in[S-1]$ and its value gate
favors the next phase $r_h+1$.

In the supporting lemmas, $C$ denotes a positive constant depending only on
$L,S,H,\gamma,\beta$, and the fixed value of $\|\vmu\|$, and its value may
change between displays.

Let $\mathcal M$ be the mean-aligned subspace
\begin{equation}
 \vc_{h,r}^{P}=\frac{b_{h,r}^{P}}{\|\vmu\|^2}\vmu,
 \qquad
 \sum_{r=1}^S b_{h,r}^{P}=0,
 \qquad P\in\{K,V\}.
 \label{eq:mean-space}
\end{equation}
Since $\vmu^\top\vx=\|\vmu\|^2$ for $\vx\in\mathcal V$, $b_{h,r}^{P}$ is the
input-independent gate logit of phase $r$.  Write $P_{\mathcal M}$ for the
orthogonal projection onto $\mathcal M$ and
$\mathcal T:=\mathcal M^\perp\cap\mathcal C_{\rm eff}$.

The linear class raises a question that the toy example cannot pose. There, each head's compression is a single trainable logit with no input, so any preference is automatically the same for every input. With linear gates, the gate logits depend on the embeddings, and a preference could change from one context to another. On $\mathcal M$, however, the gate logit $b^{P}_{h,r}$ of each phase is the same for every input, just as $\theta_h$ is in the toy example, so we track the flow near $\mathcal M$. The flow from zero stays on $\mathcal M$ with all heads identical, which plays the role of the toy example's uniform point (\Cref{lem:symmetric-baseline}). Differences between routes are unstable along this path, and the unstable directions raise a key phase together with the value phase after it. As in the toy example, selection thus comes from an instability of the symmetric state, but here the instability acts on each head separately, since different heads do not couple in the limit. From a small random start, each head acquires such a route with high probability (\Cref{lem:mean-entrance}), which is then amplified and persists for every input (\Cref{lem:mean-amplification,lem:terminal-cone}).

\begin{lemma}[Vocabulary suppression]
\label{lem:vocabulary-suppression}
For fixed context and query, the first three derivatives of the output
log-normalizer divided by $\beta$ are $O(N^{-1})$, uniformly over
$(\Delta_S\times\Delta_S)^{LH}$.  This holds for derivatives in the
compression coefficients, sampled gate logits
$\langle\vc_{h,r}^{P},\vx_{\ell,r}\rangle$, and compressor parameters,
using Euclidean product norms.  In particular, its contributions to
coefficient gradients and the parameter linearization are $O(N^{-1})$.
\end{lemma}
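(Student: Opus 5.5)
Fix a context $\mX$ and a query $(\ell_*,r_*)$. Write the scaled output logits as $\vO_{\vx}=\beta(\kappa H+G_{\vx})$ with $\varepsilon=0$, and write the log-normalizer as
\begin{equation*}
\Phi=\log\sum_{\vx\in\mathcal V}e^{\beta G_{\vx}}.
\end{equation*}
Here $G_{\vx}$ equals the comparison output $F_{\vx}$ from \Cref{sec:oracle-minimizers}, because under exact geometry the attention weights are exactly $A^h$. The plan rests on two structural facts. First, $G$ is supported on at most $LS$ context tokens. Second, $G_{\vx}\in[0,H]$ and $\sum_{\vx}G_{\vx}=H$. We then split the normalizer as
\begin{equation*}
Z=(N-LS)+\sum_{\vx\in\mX}e^{\beta G_{\vx}}.
\end{equation*}
The first piece is constant and at least $N-LS$. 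The second piece, and every derivative of it in the coefficients, is bounded by a constant depending only on $L,S,H,\beta,\gamma$.

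Differentiating $\Phi/\beta$ in the coefficients gives expressions of the form $Z^{-1}\sum_{\vx}\partial(e^{\beta G_{\vx}})$ and products of such terms. Only the context-supported terms depend on the coefficients, so each derivative of $Z$ involves at most $LS$ summands. For the first derivative, we first subtract the constant: $\partial\log Z=\sum_{\vx}\partial(e^{\beta G_{\vx}}-1)/Z$, and this is $O(1)/N$. For the second and third derivatives, Faà di Bruno expresses $\partial^k\log Z$ as a sum of products $\prod_j\partial^{m_j}Z/Z^{p}$. Each product contains at least one factor $\partial^{m}Z/Z$ with $m\ge1$, so each is $O(N^{-1})$ once we also use $Z\ge N-LS\ge N/C$, after enlarging $N$ if needed, together with bounded numerators. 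The numerator bounds require derivatives of $G$ up to third order. $G$ is a polynomial in the value coefficients times $A^h$, and $A^h=R(\vp^K(r_*))$ is a smooth function of $u\in[0,1]$, so all three derivatives are bounded uniformly over $(\Delta_S\times\Delta_S)^{LH}$, with constants depending on $\gamma,L$. This settles the coefficient version, measured in Euclidean product norms.

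Next we transfer the bounds to the sampled gate logits $s=\langle\vc^P_{h,r},\vx_{\ell,r}\rangle$ through the chain rule. Softmax has bounded first three derivatives on all of $\mathbb R^S$: they are polynomials in the probabilities, so the bounds are uniform and independent of the logit values. Composing therefore keeps the $O(N^{-1})$ bound. For compressor parameters $\vc$, one further linear map $\vc\mapsto s$ is needed, with entries given by the embeddings $\vx_{\ell,r}$. Their norms are $\|\vx\|^2=\|\vmu\|^2+1-1/N$, which is bounded when $\|\vmu\|$ lies in a compact set. Higher derivatives of this linear map vanish. The Euclidean operator norm of $\vc\mapsto(s_{h,\ell,r}^P)$ is at most $\sqrt{L}\max\|\vx\|$ per block, so the constant does not grow with $N$ or with $d$. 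Averaging over queries, and over contexts for the population loss, preserves all of these bounds. This gives the claims about coefficient gradients and the parameter linearization, that is, the Hessian terms contributed by $\Phi$.

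The step I expect to need the most care is keeping the constants independent of $N$ through the chain rule in parameter space. The parameter dimension grows with $d\ge N$, so any bound that sums over coordinates would lose uniformity. The fix is to phrase everything as operator norms: each $\vc^P_{h,r}$ enters only through the $L$ inner products with embeddings of bounded norm, which gives a dimension-free Lipschitz constant. A secondary point is handling the mass constraint $\sum G=H$ when differentiating off the simplex. I would avoid this issue by only differentiating along tangent directions, or by extending $G$ polynomially, which is harmless because the bounds come from the explicit formulas.
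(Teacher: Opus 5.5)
Your proposal is correct and matches the paper's proof. Both arguments cancel the common $\beta\kappa H$ logit and write the normalizer as $N$ plus a context-supported term whose first three derivatives are bounded independently of $N$ (through the fixed-dimensional attention and softmax maps). Both then divide by a normalizer of size at least $N$, and both lift to parameters through the linear sampled-logit map, whose operator norm is at most $C\sup_{\vx}\|\vx\|$.

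One small simplification: since every $G_{\vx}\geq 0$, you get $Z\geq N$ directly, so there is no need to enlarge $N$ or use the weaker bound $Z\geq N-LS$.
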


\begin{proof}
When $\varepsilon=0$, the common vocabulary logit cancels and the remaining
logits are $\beta F_{\vx}$, with $F_{\vx}$ as in
\Cref{sec:oracle-minimizers} after substituting the softmax coefficients.
Thus $F_{\vx}=0$ outside the context,
$0\leq F_{\vx}\leq H$, and the normalized log-normalizer is
\begin{equation}
 \frac1\beta\log\left\{
 N+\sum_{\ell=1}^L\sum_{r=1}^S
 (e^{\beta F_{\vx_{\ell,r}}}-1)\right\}.
 \label{eq:shifted-normalizer}
\end{equation}
The first three derivatives of $F_{\vx}$ in sampled gate logits or
compression coefficients are uniformly bounded because they use
fixed-dimensional softmax maps.  The nonconstant part of the expression
inside the logarithm, and its first three derivatives, are bounded
independently of $N$, whereas that expression is at least $N$.
Differentiation gives the $O(N^{-1})$ bounds.  The sampled-logit map is
linear in the compressor parameters and has operator norm at most
$C\sup_{\vx\in\mathcal V}\|\vx\|\leq C$, yielding the parameter bound.
\end{proof}

Define the rowwise route-contrast space
\begin{equation}
 \mathcal W:=\left\{U\in\mathbb R^{H\times(S-1)}:
 \sum_{r=1}^{S-1}U_{h,r}=0\ \text{for every }h\right\}.
 \label{eq:route-contrast-space}
\end{equation}
Let $\mathcal F(\vc)=-\nabla_{\vc}\{\cL(g_{\vc})/\beta\}$, let
$\vc_N^0(t)$ be its trajectory from zero, and let $A_N(t)$ be the
derivative on $\mathcal M$ of the target part of $\mathcal F$ at
$\vc_N^0(t)$.

\begin{lemma}[Symmetric baseline and route-contrast growth]
\label{lem:symmetric-baseline}
For sufficiently large $N$, the trajectory $\vc_N^0(t)$ remains in
$\mathcal M$.  All heads agree, and key phases $1,\ldots,S-1$ and value
phases $2,\ldots,S$ are respectively identical.  Call key phase $S$ and value
phase $1$ the dummy phases, since no useful route uses them and they serve no
query.  If $\omega_K,\omega_V$ denote the dummy-phase coefficients, then
\begin{equation}
 \omega_K(t),\omega_V(t)\asymp(1+t)^{-1},\qquad
 \log\frac{1-\omega_P(t)}{(S-1)\omega_P(t)}\longrightarrow\infty
 \quad(P\in\{K,V\}).
 \label{eq:compact-dummy-decay}
\end{equation}
Moreover, $A_N(t)$ converges to an operator $A_\infty$ that acts on each
route contrast, in the order key then successor value, by $\|\vmu\|^2$
times the block
\begin{equation}
 \begin{pmatrix}
 \dfrac{R''(1/(S-1))}{(S-1)^4}&
 \dfrac{R'(1/(S-1))}{(S-1)^3}\\[6pt]
 \dfrac{R'(1/(S-1))}{(S-1)^3}&0
 \end{pmatrix},
 \label{eq:late-contrast-block}
\end{equation}
and vanishes on the useful-versus-dummy directions, with
\begin{equation}
 \|A_N(t)-A_\infty\|\leq \frac C{1+t}.
\end{equation}
The positive eigenspace of $A_\infty$ is a copy of $\mathcal W$, with
same-sign key and successor-value coordinates.
\end{lemma}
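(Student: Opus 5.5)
The plan is to reduce everything on $\mathcal M$ to finitely many scalar logits and then read off both the baseline trajectory and its linearization from one explicit reduced objective.

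\textbf{Invariance of $\mathcal M$ and of the symmetries.} Under $\varepsilon=0$, every permutation $\pi$ of $\mathcal V$ extends to a linear isometry of $\operatorname{span}(\mathcal V)$, because the Gram matrix $\kappa+\mathbf 1\{\vx=\vx'\}$ is permutation invariant. The context distribution is exchangeable, so $\cL(g_{\vc})$ is invariant under the induced action on $\vc$. The only vectors of $\operatorname{span}(\mathcal V)$ fixed by all such isometries are the multiples of $\vmu$. Hence at any fully invariant point the gradient lies in $\mathcal M$, and uniqueness of ODE solutions keeps $\vc_N^0(t)\in\mathcal M$. Two further symmetries of the loss are also preserved by the flow from zero:
\begin{itemize}
\item permuting the heads;
\item relabeling routes, that is, applying a permutation $\sigma$ of $[S-1]$ simultaneously to key phase $r$ and value phase $r+1$.
\end{itemize}
This gives the claimed agreement of heads and of useful key and value phases. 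The trajectory is then described by two scalar logit gaps, one per branch. Under the constraint $\sum_r b^{P}_{h,r}=0$ they are equivalent to the dummy masses $\omega_K,\omega_V$, with useful masses $(1-\omega_P)/(S-1)$.

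\textbf{Reduced dynamics.} I would split $\cL/\beta$ into the target term $-\E F_{\vy}$ and the normalizer. By \Cref{lem:vocabulary-suppression}, the normalizer contributes only $O(N^{-1})$ to gradients and linearizations. With identical heads, the target term is
\[
H\,R\!\left(\frac{1-\omega_K}{S-1}\right)\frac{1-\omega_V}{S-1}.
\]
Compute its partial derivatives in $\omega_K,\omega_V$, and note that $\partial b/\partial c$ carries a factor $\|\vmu\|^2$. Using the softmax derivative $\partial\omega/\partial(\text{dummy logit})=\omega(1-\omega)$, I expect a system of the form $\dot\omega_P=-\omega_P^2\,\Phi_P(\omega_K,\omega_V)+O(N^{-1})\,\omega_P$. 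Here $\Phi_P$ is bounded above and below by positive constants, which uses $R>0$ and $R'>0$ on $[0,1]$. Comparison with the Riccati equation $\dot\omega=-c\,\omega^2$ gives $\omega_P\asymp(1+t)^{-1}$. The $O(N^{-1})$ part must be shown not to reverse this sign. I would handle it by noting that the normalizer gradient also vanishes at least linearly in $\omega_P$ along this family, or else by absorbing it through ``$N$ sufficiently large''. Then $\omega_P\to0$ gives the divergence of $\log\frac{1-\omega_P}{(S-1)\omega_P}$.

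\textbf{Linearization and its limit.} Next I would differentiate the target part of $\mathcal F$ on $\mathcal M$ at $\vc_N^0(t)$ in the route-contrast directions $(U^K_{h,r},U^V_{h,r+1})$ with $\sum_r U_{h,r}=0$. Each such perturbation changes useful masses through the softmax Jacobian. At the baseline point this Jacobian equals $\frac{1}{S-1}(I-\tfrac{1}{S-1}\mathbf 1\mathbf 1^\top)$ up to $O(\omega)$, and it acts as the scalar $1/(S-1)$ on contrasts. The Hessian of $\E\,R(p^K(r_*))\,p^V(r_*+1)$ is then computed in these coordinates.
\begin{itemize}
\item The $KK$ entry comes from $R''$, multiplied by the value mass $1/(S-1)$ and two Jacobian factors.
\item The $KV$ entries come from $R'$, multiplied by the two Jacobian factors.
\item The $VV$ entry vanishes because the target term is linear in $p^V$.
\end{itemize}
Including the uniform query weight $1/(S-1)$ and the factor $\|\vmu\|^2$ should reproduce \eqref{eq:late-contrast-block}. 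Heads decouple because, with $F_{\vy}$ additive over heads, the target term has no cross-head terms. Useful-versus-dummy directions contribute only through the first-order $\omega$ dependence, so they vanish in the limit. All deviations of $A_N(t)$ from $A_\infty$ are Lipschitz in $(\omega_K,\omega_V)$ and hence $O((1+t)^{-1})$. The corrections of order $O(N^{-1})$ from the normalizer are not covered by this estimate, so I would define $A_N$ through the target part only, as the statement does, and leave the normalizer to the later lemmas. Finally, each $2\times2$ block $\bigl(\begin{smallmatrix}a&b\\ b&0\end{smallmatrix}\bigr)$ with $b>0$ has determinant $-b^2<0$. It therefore has exactly one positive eigenvalue, with eigenvector of same-sign coordinates, so the positive eigenspace is a copy of $\mathcal W$.

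\textbf{Main obstacle.} I expect the main obstacle to be the uniform two-sided rate $\omega_P\asymp(1+t)^{-1}$ together with the $O(1/(1+t))$ operator bound. The coupled $(\omega_K,\omega_V)$ system must be controlled jointly, with constants independent of $N$. My first attempt would be a monotonicity argument: show that both $\omega_P$ decrease, then sandwich each between solutions of $\dot\omega=-c_\pm\omega^2$.
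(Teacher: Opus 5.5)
Your outline follows the paper's proof: symmetry confines the zero-start flow to the two-parameter family $(\omega_K,\omega_V)$ in $\mathcal M$, a Riccati-type equation gives the $(1+t)^{-1}$ decay, and $A_\infty$ is the target Hessian on route contrasts at the limiting coefficients. Two steps do not go through as written.

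\textbf{The dummy-mass equation.} The form $\dot\omega_P=-\omega_P^2\Phi_P+O(N^{-1})\omega_P$ is too lossy. Your fallback, absorbing the error by taking $N$ large, fails. For fixed $N$, an additive term of order $\omega_P/N$ with unknown sign dominates $\omega_P^2$ once $\omega_P\lesssim1/N$. With only that bound, nothing rules out $\omega_P$ stalling at order $1/N$, and you would lose both the two-sided rate and the divergence of $\log\frac{1-\omega_P}{(S-1)\omega_P}$.

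Your other option is the right one, and it holds exactly. The normalizer reaches the gate logits only through the compression coefficients, so the same softmax chain rule applies to the whole loss. This gives $\dot\omega_P=-\|\vmu\|^2\frac{S}{S-1}\,d_P\,\omega_P^2(1-\omega_P)^2$, where $d_P$ is the useful-minus-dummy fitness gap of $\cL/\beta$. Vocabulary suppression then perturbs $d_P$ itself by $O(N^{-1})$. That perturbation is absorbed into the target bounds: $d_V\geq1/[L(S-1)]$ always, and $d_K\geq\min_{[0,1]}R'/[S(S-1)]$ once $\omega_V\leq1/S$.

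This also removes what you call the main obstacle. Since $d_V>0$ for every $\omega$, $\omega_V$ decreases from $1/S$. That supplies the $d_K$ bound, so the two rows are controlled one after the other rather than jointly.

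\textbf{The Hessian.} $A_N(t)$ differentiates in gate logits. Write $T=\frac1{S-1}\sum_r R(p^K(r))p^V(r+1)$ for the per-head target. Besides the pullback $J^\top(D_p^2T)J$ there is a softmax-curvature term $\sum_i\partial_{p_i}T\,D^2p_i[y,y]$. Linearity of $T$ in $p^V$ removes only the pullback part of the $VV$ block, not this term. The same term also enters the $KK$ block with coefficient $R'(p^K(r))p^V(r+1)/(S-1)$.

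It drops out for a different reason. Along the symmetric trajectory, the first-order coefficients $\partial_{p_i}T$ are equal across useful phases and vanish at the dummy phase. The term is therefore a common factor times $\sum_{i\ \mathrm{useful}}D^2p_i=-D^2p_{\mathrm{dummy}}$. This is $O(\omega_P)\|y\|^2$, because every derivative of the dummy coefficient carries a factor $\omega_P$. This is the paper's remark that the softmax second derivatives sum to zero.

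With this correction, the $VV$ entry of $A_N(t)$ is $O(\omega_V)$ rather than zero, which is consistent with the $C/(1+t)$ bound. The rest of your entry counting, the head decoupling, and the determinant argument then go through as you describe.
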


\begin{proof}
Vocabulary-permutation, head, and useful-phase symmetries keep
$\vc_N^0(t)$ in $\mathcal M$ and give useful coefficients
$(1-\omega_K)/(S-1)$ and $(1-\omega_V)/(S-1)$.  For either row,
\begin{equation}
 \dot\omega_P=-\|\vmu\|^2\frac S{S-1}d_P(t)\omega_P^2(1-\omega_P)^2,
 \qquad P\in\{K,V\},
\end{equation}
where $d_P(t)$ is the useful-minus-dummy fitness gap for $\cL/\beta$.  Here
the fitness of a phase is the negative derivative of $\cL/\beta$ with respect
to its compression coefficient, and the fitness gap is the fitness of a useful
phase minus that of the dummy phase in the same row.
The target term gives $d_V\geq1/[L(S-1)]$.  Once $\omega_V\leq1/S$,
it also gives $d_K\geq R'(0)/[S(S-1)]$.  By
\Cref{lem:vocabulary-suppression}, these lower bounds survive for
large $N$, and both gaps are also uniformly bounded above.
Integrating the displayed equation from $\omega_P(0)=1/S$ proves
\Cref{eq:compact-dummy-decay}.

Route symmetry separates useful contrasts from useful--dummy directions.
At the limiting coefficients, softmax sends a useful contrast $y$ to
$y/(S-1)$, and its second derivatives sum to zero.  The target
Hessian on key and successor-value contrasts is therefore
\begin{equation}
 \begin{aligned}
 &D^2\!\left\{\frac1{S-1}\sum_{r=1}^{S-1}
 R(\vp_h^{K}(r))\vp_h^{V}(r+1)\right\}[(y_K,y_V)]^2\\
 &\qquad=\frac{R''(1/(S-1))\|y_K\|^2}{(S-1)^4}
 +\frac{2R'(1/(S-1))\langle y_K,y_V\rangle}{(S-1)^3}.
 \end{aligned}
 \label{eq:limiting-contrast-form}
\end{equation}
This gives \Cref{eq:late-contrast-block}.  The useful--dummy
directions vanish in the limit because the dummy coefficient vanishes, and
different heads do not couple.  The block has negative determinant,
and its positive eigenvector has same-sign coordinates.  Smoothness
and $\omega_P(t)=O((1+t)^{-1})$ give the stated convergence rate.
\end{proof}

\begin{lemma}[Aligned route gaps from random initialization]
\label{lem:mean-entrance}
For every $0<\delta<1$, there are
$\eta_\delta,C_\delta>0$ and $0<c_\delta\leq1$ such that, if
$0<\rho\leq c_\delta$ and
$N\geq C_\delta\log(e/\rho)$, and if $\vc(0)$ is uniform on the
radius-$\rho$ ball in $\mathcal C_{\rm eff}$, then, with probability at least
$1-\delta$, there are a time
\begin{equation}
 \tau\leq C_\delta\log\frac{e\sqrt N}{\rho}
\end{equation}
and a point $\bar{\vc}\in\mathcal M$ satisfying
\begin{equation}
 \|\vc(\tau)-\bar{\vc}\|\leq C_\delta\rho,
 \label{eq:spectral-entrance-distance}
\end{equation}
such that every head has a route $r_h\in[S-1]$ with
\begin{equation}
 b_{h,r_h}^{K}-\max_{j\ne r_h}b_{h,j}^{K}\geq2\eta_\delta,
 \qquad
 b_{h,r_h+1}^{V}-\max_{j\ne r_h+1}b_{h,j}^{V}\geq2\eta_\delta.
 \label{eq:entrance-gaps}
\end{equation}
\end{lemma}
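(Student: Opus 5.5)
We plan to write the flow as a perturbation of the symmetric zero-start trajectory $\vc_N^0(t)$ of \Cref{lem:symmetric-baseline}. We split $\vc(t)=\vc_N^0(t)+\vz(t)$ and decompose $\vz=\vz_{\mathcal M}+\vz_{\mathcal T}$ with $\vz_{\mathcal M}=P_{\mathcal M}\vz$. On $\mathcal M$ the perturbation consists of the phase-logit offsets $b_{h,r}^P$. These split further into three parts: route contrasts of key phases $1,\ldots,S-1$ and value phases $2,\ldots,S$, useful-versus-dummy directions, and cross-branch components.

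The linearization there is $A_N(t)$. By \Cref{lem:symmetric-baseline}, $A_N(t)\to A_\infty$ at rate $C/(1+t)$. By \Cref{lem:vocabulary-suppression}, the normalizer contributes only $O(N^{-1})$ to the full Jacobian. The transverse part $\vz_{\mathcal T}$ consists of directions $\vx-\vmu$. We will show it is driven only through the context-dependent part of the gate logits, and that its linearization is neutral or contracting up to $O(N^{-1})$ plus terms of size $O(\omega_P)$. Consequently $\|\vz_{\mathcal T}(t)\|\leq C\rho$ up to the exit time.

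The key steps, in order, are as follows.

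First, on each head's copy of $\mathcal W$ we use the positive eigenvalue $\lambda>0$ of the block \Cref{eq:late-contrast-block}. Since $\int\|A_N-A_\infty\|$ grows only logarithmically, the propagator of $\dot\vz=A_N\vz$ restricted to this eigenspace grows like $(1+t)^{-C}e^{\lambda t}$. Every other $\mathcal M$-direction grows at most polynomially.

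Second, we run the linear phase until the projection onto the unstable eigenspace reaches a fixed small size $\eta_0$. This takes time $\tau\leq C\log(e\sqrt N/\rho)$, where the $\sqrt N$ absorbs the random scale of the initial projection inside a ball of dimension of order $N$. Over this window we control the quadratic remainder by a Gr\"onwall/bootstrap argument of the form $\|\text{nonlinear}\|\leq C\|\vz\|^2$. The $O(N^{-1})$ terms accumulate to at most $C\tau/N$, which is $\leq C\rho$ once $N\geq C_\delta\log(e/\rho)$. This also yields \Cref{eq:spectral-entrance-distance} with $\bar{\vc}=\vc_N^0(\tau)+\vz_{\mathcal M}(\tau)$.

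Third, we read off the gaps. In the eigenspace, key and successor-value coordinates have the same sign and a fixed ratio. So, per head, the key-contrast vector $u_h\in\mathbb R^{S-1}$ (zero sum) is amplified along with a proportional value-contrast vector. Its argmax $r_h$ then gives key-phase $r_h$ and value-phase $r_h+1$ winners simultaneously. The dummy phases lie strictly below the winners because of \Cref{eq:compact-dummy-decay}, since their logits diverge to $-\infty$.

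The main obstacle is the probabilistic anti-concentration step. Uniform initialization on the ball must, with probability $1-\delta$, give every head's projected contrast $u_h$ two properties: (i) a nonnegligible norm, at least $c_\delta\rho/\sqrt{\dim}$, and (ii) a top-minus-second gap of at least a $c_\delta$ fraction of that norm. Property (ii) is what ensures $2\eta_\delta$ survives both amplification and the $C\rho$ errors. I would prove both via a Gaussian surrogate for the uniform ball, applied to the finite-dimensional projection onto the $H(S-2)$-dimensional unstable space. The projection of a uniform ball vector onto a fixed subspace has an explicit density, so small-ball bounds for norms and for order-statistic gaps of a Gaussian vector give the claims with failure probability $\delta/2$ each. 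Finally, $\eta_\delta$ is chosen as a fixed fraction of $\eta_0$ times this gap fraction.
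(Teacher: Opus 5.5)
Your overall route matches the paper's. You perturb around the symmetric path $\vc_N^0(t)$, separate the growing fiber (a copy of $\mathcal W$ with same-sign key and successor-value coordinates) from the rest of $\mathcal M$, and stop when it reaches a fixed size. You then use a Gaussian surrogate for the norm and top-gap anti-concentration of each head's row.

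The genuine gap is how you handle the off-$\mathcal M$ component $\vz_{\mathcal T}$ and its feedback into $\mathcal M$. Under uniform ball initialization in a space of dimension $2H(SN-1)$, $\|\vz_{\mathcal T}(0)\|\approx\rho$. The signal you amplify has size only $g_0\asymp\rho/\sqrt N$, so the amplification factor up to exit is $\asymp\eta_0\sqrt N/\rho$.
\begin{itemize}
\item A remainder bound of the form $C\|\vz\|^2$ lets a forcing of size $C\rho^2$ act on the unstable direction over the whole window. This contributes an error of order $\eta_0\rho\sqrt N$ at exit, which is not small when $N\gg\rho^{-2}$, a regime that $N\geq C_\delta\log(e/\rho)$ permits.
\item You also do not address the linear $\mathcal T\to\mathcal M$ cross term. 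A generic Cauchy--Schwarz estimate bounds it only by $O(\|\vz_{\mathcal T}\|/\sqrt N)=O(\rho/\sqrt N)$. That is the same order as $g_0$, so its amplified contribution would be comparable to the signal.
\item Your description of the transverse linearization as neutral or contracting ``up to $O(N^{-1})$ plus $O(\omega_P)$'' does not give $\|\vz_{\mathcal T}\|\leq C\rho$ uniformly in $N$. Any rate of unknown sign that is $O(1)$ in $N$, even one decaying like $(1+t)^{-1}$, produces a Gronwall factor $(1+\tau)^C$, and $\tau\gtrsim\log\sqrt N$.
\end{itemize}

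The paper closes this with two structural facts you do not use.

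First, vocabulary-permutation equivariance gives $\mathcal F(\bar{\vc})\in\mathcal M$ and $P_{\mathcal M}D\mathcal F(\bar{\vc})[u]=0$ for $\bar{\vc}\in\mathcal M$ and $u\in\mathcal T$. So the transverse field vanishes on $\mathcal M$, and the linear cross term is exactly zero.

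Second, sampling without replacement gives $\E_{\mX}\|w\|_{\mX}^2=(L/N)\|w\|^2$ for $w\in\mathcal T$: a transverse direction moves only the gate logits of contexts containing particular tokens. Together these yield two bounds.
\begin{itemize}
\item The transverse field satisfies $\|(I-P_{\mathcal M})\mathcal F(\bar{\vc}+u)\|\leq C\|u\|/N$. Gronwall then gives $\|\vz_{\mathcal T}(t)\|\leq\rho e^{Ct/N}\leq A_\delta\rho$ up to exit. This is where $N\geq C_\delta\log(e/\rho)$ enters: it makes $\tau/N$ small.
\item The feedback into $\mathcal M$ is second order and suppressed: $\|P_{\mathcal M}\mathcal F(\bar{\vc}+u)-\mathcal F(\bar{\vc})\|\leq C\|u\|^2/N$. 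In the Duhamel estimate it contributes at most $\rho^2/(Ng_0)\lesssim\rho/\sqrt N$ relative to the growing component.
\end{itemize}
Your claim that $C\tau/N\leq C\rho$ does not hold: at $N\asymp C_\delta\log(e/\rho)$, the ratio $\tau/N$ is of order $1/C_\delta$. It is also unnecessary. The $O(N^{-1})$ normalizer terms are proportional to $\|\vz_{\mathcal M}\|$, so they give only a relative error $\tau/N$. The bootstrap should therefore be run on $\vz_{\mathcal M}$, with $\vz_{\mathcal T}$ entering only through $C\|\vz_{\mathcal T}\|^2/N$.

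Two smaller points.
\begin{itemize}
\item $A_N(t)-A_\infty$ is not integrable, and the eigenspace of $A_\infty$ is not invariant under $A_N(t)$. You need a genuine invariant splitting (the paper's Riccati construction) rather than ``the propagator restricted to the eigenspace''.
\item The vector whose gaps you must control is not an orthogonal projection of $\vc(0)$. It is the dynamics-induced coefficient $\Lambda_N\vc(0)$ along the growing fiber. That map has the form $\alpha_N\otimes I_{\mathcal W}$ with singular values bounded above and below, so your anti-concentration argument still applies to it.
\end{itemize}
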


\begin{proof}
The proof first splits the flow linearized along $\vc_N^0(t)$ into growing
and complementary parts, and then shows that deviations orthogonal to
$\mathcal M$ stay small.  It next bounds the random initialization, defines an
exit time $\tau$ from the linearized flow, and compares the nonlinear flow with
its linearization up to $\tau$.  At that time, the growing part carries
aligned route gaps.

The target is a sum over heads, and simultaneous permutation of useful key
and successor-value indices separates useful contrasts from useful--dummy
directions.  Thus, in the $b$-coordinates, $A_N(t)$ is
$(B_N(t)\otimes I_{\mathcal W})\oplus D_N(t)$, where $B_N(t)$ is
two-dimensional, $B_\infty$ is the block in
\Cref{eq:late-contrast-block}, including its factor $\|\vmu\|^2$, and
$\|B_N(t)-B_\infty\|+\|D_N(t)\|\leq C/(1+t)$.
Let $\lambda_+>0>\lambda_-$ be the eigenvalues of $B_\infty$ and put
$\Delta_\lambda=\lambda_+-\lambda_-$.  In an orthonormal eigenbasis, write
$B_N(t)=\operatorname{diag}(\lambda_+,\lambda_-)+E_N(t)$.
Choose $T$, uniformly in large $N$, so that $\|E_N(t)\|,\|D_N(t)\|\leq\eta_*$
for $t\geq T$, with $\eta_*$ sufficiently small.  Two invariant contrast
lines are $x_-=r(t)x_+$ and $x_+=z(t)x_-$, defined by
\[
 \begin{aligned}
 \dot r&=-\Delta_\lambda r+E_{21}+(E_{22}-E_{11})r-E_{12}r^2,\qquad r(T)=0,\\
 z(t)&=-\int_t^\infty e^{-\Delta_\lambda(u-t)}
 [E_{12}+(E_{11}-E_{22})z-E_{21}z^2](u)\,\mathrm du.
 \end{aligned}
\]
For $\zeta=4\eta_*/\Delta_\lambda<1/2$, the first vector field points inward at
$r=\pm\zeta$, and the second equation is a contraction on $\|z\|_\infty\leq\zeta$.
Hence these lines are uniformly transverse.  The scalar convolution in
the first equation and $E_N(t)\to0$ give $r(t)\to0$, uniformly in large $N$.
Their scalar rates are $\lambda_++E_{11}+E_{12}r$ and
$\lambda_-+E_{22}+E_{21}z$, respectively.  Taking $\eta_*$ small, the
former is at least $a>0$, the latter is negative, and $\eta_*\leq a/2$.
Use these same lines for every head and route contrast, placing the whole
useful--dummy sector in the complement.  This defines invariant projections
$P_+(t)$ and $P_\perp(t)=I-P_+(t)$ with uniformly bounded norms.
Extend them to $[0,T]$ by the propagator $\Phi(t,s)$ of $\dot u=A_N(t)u$;
boundedness of $A_N$ on this fixed interval bounds both transport directions
uniformly in $N$.  Write $\Phi_j(t,s)=P_j(t)\Phi(t,s)P_j(s)$ for
$j\in\{+,\perp\}$.  The preceding scalar rates and
$\|D_N(t)\|\leq\eta_*$ give, after adjusting $C$, for $t\geq s\geq0$,
\begin{equation}
 \begin{aligned}
 \|\Phi_+(t,s)^{-1}\|&\leq Ce^{-a(t-s)},\\
 \|\Phi_+(t,s)\|&\leq Ce^{C(t-s)},\\
 \|\Phi_\perp(t,s)\|&\leq Ce^{a(t-s)/2}.
 \end{aligned}
 \label{eq:dichotomy-bounds}
\end{equation}
The inverse is on the growing fiber; the zero limiting modes are included
in the complementary bound.  All constants are uniform in large $N$.

In the original key/value coordinates, choose the positive unit eigenvector
$e_+$ of $B_\infty$ and let $Q_N(t)$
be the contrast solution with $Q_N(T)=e_+$.  The growing fiber is
$Q_N(t)\otimes\mathcal W$ and $Q_N(t)/\|Q_N(t)\|\to e_+$.
Let $\alpha_N$ take the coefficient of $Q_N(0)$ in the splitting along
the complementary contrast line, and define $\Lambda_N\vc$ by applying
$\alpha_N\otimes I_{\mathcal W}$ to the useful-contrast part of
$P_{\mathcal M}\vc$, expressed in $b$-coordinates.  Fixed-interval transport
bounds $\|Q_N(0)\|$ above and away from zero; transversality and
$\alpha_N Q_N(0)=1$ do the same for $\|\alpha_N\|$.
The useful-contrast projection is an orthogonal surjection, and
$\|b\|=\|\vmu\|\|\vc\|$ on $\mathcal M$, so all nonzero singular values of
$\Lambda_N$ are uniformly bounded above and below.  In particular, every
growing solution is $Q_N(t)\otimes w$ for a fixed $w\in\mathcal W$.

To control deviations from the mean-aligned trajectory, for a parameter
direction $w$ define
\begin{equation}
 \|w\|_{\mX}^2=\sum_{P,h,\ell,r}
 \bigl((w_{h,r}^{P})^\top\vx_{\ell,r}\bigr)^2.
\end{equation}
The $\varepsilon=0$ geometry and sampling without replacement imply
\begin{equation}
 \E_{\mX}\|w\|_{\mX}^2=\frac LN\|w\|^2\quad(w\in\mathcal T),
 \qquad
 \|w\|_{\mX}=\sqrt L\,\|\vmu\|\,\|w\|\quad(w\in\mathcal M).
 \label{eq:compact-score-moments}
\end{equation}
The fixed-context objective is smooth in the sampled gate logits.  Its
target derivatives are bounded, and its normalizer derivatives are
$O(N^{-1})$ by \Cref{lem:vocabulary-suppression}.  Vocabulary
equivariance gives
$\mathcal F(\bar{\vc})\in\mathcal M$ and
$P_{\mathcal M}D\mathcal F(\bar{\vc})[u]=0$ for
$\bar{\vc}\in\mathcal M$, $u\in\mathcal T$.
Applying \Cref{eq:compact-score-moments} to the first two derivatives
and using this cancellation gives, for $u\in\mathcal T$ with $\|u\|\leq1$,
\begin{equation}
 \|(I-P_{\mathcal M})\mathcal F(\bar{\vc}+u)\|
 \leq\frac CN\|u\|,
 \qquad
 \|P_{\mathcal M}\mathcal F(\bar{\vc}+u)-\mathcal F(\bar{\vc})\|
 \leq\frac CN\|u\|^2.
 \label{eq:compact-nonlinear-averaging}
\end{equation}

The effective parameter space has dimension $2H(SN-1)$.  Uniform
radius-$\rho$ ball initialization is an isotropic Gaussian direction
times an independent radial factor.  Gaussian concentration, the
uniform singular-value bounds for $\Lambda_N$, and a finite union
bound over the nondegenerate Gaussian head--route contrasts give
$a_\delta,A_\delta,q_\delta>0$ such
that, with probability at least $1-\delta$,
\begin{equation}
 a_\delta\frac\rho{\sqrt N}\leq
 g_0:=\|\Lambda_N\vc(0)\|\leq A_\delta\frac\rho{\sqrt N},
 \quad \|P_{\mathcal M}\vc(0)\|\leq A_\delta\frac\rho{\sqrt N},
 \quad \min_h\operatorname{gap}((\Lambda_N\vc(0))_h)
 \geq q_\delta g_0.
 \label{eq:initial-norm-event}
\end{equation}
Here $\operatorname{gap}$ is the largest minus second-largest row entry.

On $\mathcal M$, use the Euclidean norm of the $b$-array, which is uniformly
equivalent to the parameter norm because $\|b\|=\|\vmu\|\|\vc\|$
there.  Let $u_{\rm lin}$ solve $\dot u=A_N(t)u$ from
$u_{\rm lin}(0)=P_{\mathcal M}\vc(0)$.  The estimates below hold for all
exit radii in a fixed sufficiently small interval.  Choose $r_\delta$
in that interval below, and let $\tau$ be the first time
$\|P_+(\tau)u_{\rm lin}(\tau)\|=r_\delta$.  From
\Cref{eq:dichotomy-bounds,eq:initial-norm-event},
\begin{equation}
 A_\delta^{-1}\log\frac{r_\delta\sqrt N}{\rho}
 \leq\tau\leq
 A_\delta\log\frac{r_\delta\sqrt N}{\rho}.
 \label{eq:exit-time}
\end{equation}
The lemma's lower bound on $N$, after increasing $C_\delta$, makes
$\tau/N$ arbitrarily small.

Define
\begin{equation}
 u_{\mathcal M}(t):=P_{\mathcal M}(\vc(t)-\vc_N^0(t)),
 \qquad
 u_{\mathcal T}(t):=(I-P_{\mathcal M})\vc(t).
\end{equation}
The first bound in \Cref{eq:compact-nonlinear-averaging} and Gronwall give
\begin{equation}
 \|u_{\mathcal T}(t)\|\leq A_\delta\rho
 \qquad(0\leq t\leq\tau).
 \label{eq:token-tube}
\end{equation}
Shrinking $c_\delta$ closes the assumed unit token tube.  Taylor's theorem,
\Cref{lem:vocabulary-suppression}, and the second bound in
\Cref{eq:compact-nonlinear-averaging} give
\begin{equation}
 \dot u_{\mathcal M}=A_N(t)u_{\mathcal M}+E(t),\qquad
 \|E(t)\|\leq C\left(
 \frac{\|u_{\mathcal M}\|}{N}+\|u_{\mathcal M}\|^2+
 \frac{\|u_{\mathcal T}\|^2}{N}\right).
 \label{eq:projected-perturbation}
\end{equation}

Put $v(t)=P_+(t)u_{\rm lin}(t)$.  By
\Cref{eq:dichotomy-bounds,eq:initial-norm-event} and scalarity on
$\mathcal W$, for $0\leq s\leq t\leq\tau$,
\begin{equation}
 \begin{gathered}
 \|v(s)\|\leq Ce^{-a(t-s)}\|v(t)\|,\qquad
 \|\Phi_+(t,s)\|\|v(s)\|\leq C\|v(t)\|,\\
 \|P_\perp(t)u_{\rm lin}(t)\|
 \leq A_\delta e^{-a t/2}\|v(t)\|.
 \end{gathered}
 \label{eq:linear-exit-bounds}
\end{equation}
In particular, $\|v(t)\|\leq Cr_\delta
e^{-a(\tau-t)}$ before exit.  The final bound is the
relative decay: the inverse growing propagator contributes
$e^{-a t}$ and the complementary propagator contributes $e^{a t/2}$.

Variation of constants for \Cref{eq:projected-perturbation} gives
$u_{\mathcal M}(t)=u_{\rm lin}(t)+
\int_0^t\Phi(t,s)E(s)\,\mathrm ds$.
Scalarity, bounded projections, and \Cref{eq:linear-exit-bounds} give
the full-propagator bounds, for $0\leq s\leq t\leq\tau$,
\[
 \|\Phi(t,s)\|\|v(s)\|\leq C\|v(t)\|,\qquad
 \|\Phi(t,s)\|\leq C\frac{\|v(t)\|}{g_0}e^{-as}.
\]
Indeed, the complementary contribution to the first bound is at most
$Ce^{-a(t-s)/2}\|v(t)\|$, and $\|v(s)\|\geq C^{-1}g_0e^{as}$
gives the second.  On the bootstrap bound
$\|u_{\mathcal M}(s)\|\lesssim_\delta
\|u_{\rm lin}(s)\|+\rho^2\|v(s)\|/(Ng_0)$, the right-hand side is
$O_\delta(\|v(s)\|)$ when $\rho^2/(Ng_0)\leq1$.
Together with $\int_0^t\|v(s)\|\,\mathrm ds\leq C\|v(t)\|\leq Cr_\delta$
and \Cref{eq:token-tube}, these kernel bounds yield, uniformly for $t\leq\tau$,
\begin{equation}
 \begin{aligned}
 \int_0^t\|\Phi(t,s)\|\|u_{\mathcal M}(s)\|^2\,\mathrm ds
 &\lesssim_\delta r_\delta\|v(t)\|,\\
 \frac1N\int_0^t\|\Phi(t,s)\|\|u_{\mathcal M}(s)\|\,\mathrm ds
 &\lesssim_\delta\frac tN\|v(t)\|,\\
 \frac1N\int_0^t\|\Phi(t,s)\|\|u_{\mathcal T}(s)\|^2\,\mathrm ds
 &\lesssim_\delta\frac{\rho^2}{Ng_0}\|v(t)\|.
 \end{aligned}
 \label{eq:compact-duhamel-bounds}
\end{equation}
For small $r_\delta$, $\tau/N$, and $\rho^2/(Ng_0)$, these estimates
improve the bootstrap, so continuity closes it.  Since
$g_0\geq a_\delta\rho/\sqrt N$, the last term at $t=\tau$ is
$O_\delta(r_\delta\rho/\sqrt N)$.  Adding the complementary linear
component gives
\begin{equation}
 \|u_{\mathcal M}(\tau)-v(\tau)\|
 \lesssim_\delta r_\delta^2+\frac{r_\delta\tau}{N}
 +\frac{r_\delta\rho}{\sqrt N}
 +r_\delta e^{-a\tau/2}.
 \label{eq:first-exit-error}
\end{equation}
The growing fiber converges to the positive eigenvector of
\Cref{eq:late-contrast-block}, whose key and value coordinates have
the same positive sign.  Thus \Cref{eq:initial-norm-event} gives the
leading term $v(\tau)$ the same winning route in the key row and the
successor-value row, with every winner--loser gap among useful phases at least a fixed
$\delta$-dependent multiple of $r_\delta$.  Choose $r_\delta$ small,
then $C_\delta$ large and $c_\delta$ small, so
\Cref{eq:first-exit-error} is smaller than these gaps.  The lower
exit-time bound and \Cref{eq:compact-dummy-decay} make the
useful--dummy gaps positive as well.  Hence
$\bar{\vc}:=P_{\mathcal M}\vc(\tau)$ has aligned adjacent winners
with a common gap $2\eta_\delta>0$, while \Cref{eq:token-tube} gives
$\|\vc(\tau)-\bar{\vc}\|\leq A_\delta\rho$.
\end{proof}

\begin{lemma}[Mean amplification]
\label{lem:mean-amplification}
The space $\mathcal M$ is invariant.  Fix $0<\eta<M$.  For fixed head $h$ and
branch $P$, call the phase with the largest gate logit $b_{h,r}^{P}$ the winner
and the other phases losers.  For all sufficiently
large $N$, suppose a trajectory in $\mathcal M$ has, in every head, a key
winner $r_h$, the value winner $r_h+1$, and all corresponding winner--loser
gate-logit gaps at least $\eta$.  These winners persist, every gap reaches $M$ in
time bounded independently of $N$, and every gap then diverges.
\end{lemma}

\begin{proof}
Every vocabulary permutation is represented by an orthogonal map that fixes
$\vmu$ and sends $\vx-\vmu$ to the corresponding permuted residual.  The loss
is invariant and its gradient field is equivariant under this action.  Its
common fixed space in $\mathcal C_{\rm eff}$ is $\mathcal M$, proving
invariance.  The induced flow in the coordinates of
\Cref{eq:mean-space} is the gate-logit flow multiplied by $\|\vmu\|^2$.

On $\mathcal M$, write $\vp_h^{P}(r)$ for the coefficient shared by all chunks.
A compressor row is the coefficient vector of one head and branch.  Within a
compressor row, write its coefficients as $p_i$, its winner as $s$,
and the fitness of phase $i$ as $f_i=-\partial(\cL/\beta)/\partial p_i$.  Differentiating the
target term in \Cref{eq:limiting-contrast-form}, the target
contributions to $f_i$ for key phase $r<S$ and value phase
$r+1$ are
$R'(\vp_h^{K}(r))\vp_h^{V}(r+1)/(S-1)$ and
$R(\vp_h^{K}(r))/(S-1)$, respectively.  The dummy key and value fitnesses are
zero.  If every winner gap is at least $\eta>0$, then
\begin{equation}
 p_s-p_j\geq\frac{1-e^{-\eta}}{1+(S-1)e^{-\eta}}
 \qquad(j\ne s).
\end{equation}
Because $\gamma<\log(L-1)$ makes $R'$ increasing on $[0,1]$, the
target-fitness advantage of either winner is at least
\begin{equation}
 g_\eta:=\frac1{S-1}\min\left\{
 R'(0)\frac{1-e^{-\eta}}{1+(S-1)e^{-\eta}},\frac1L
 \right\}>0.
 \label{eq:mean-fitness-gap}
\end{equation}
By \Cref{lem:vocabulary-suppression}, $f_s-f_j\geq g_\eta/2$
when $N$ is large.

Direct differentiation of the gate-logit flow gives
\begin{equation}
 \frac d{dt}(b_s-b_j)=\|\vmu\|^2\left[
 p_j(f_s-f_j)+(p_s-p_j)
 \left(f_s-\sum_i p_if_i\right)\right].
\end{equation}
The second factor in the last term equals
$\sum_i p_i(f_s-f_i)$, so both terms are nonnegative.  While
$b_s-b_j\leq M$, the losing coefficient is at least $e^{-M}/S$, so
the gap grows at rate at least
$\|\vmu\|^2g_\eta e^{-M}/(2S)$.  This gives an $N$-independent time
bound to reach $M$.  Repeating at successively larger levels proves
divergence.
\end{proof}

\begin{lemma}[Persistence of aligned winners]
\label{lem:terminal-cone}
For every $r_1,\ldots,r_H\in[S-1]$ and all sufficiently large $N$, there is
$M_0<\infty$, independent of $N$ and the route choice, such that the terminal
cone, defined as the region where, for every head and all distinct $\vx,\vx'\in\mathcal V$,
\begin{align}
 \langle\vc_{h,r_h}^{K},\vx\rangle-
 \langle\vc_{h,j}^{K},\vx'\rangle\geq M_0
 &\qquad(j\in[S]\setminus\{r_h\}),\notag\\
 \langle\vc_{h,r_h+1}^{V},\vx\rangle-
 \langle\vc_{h,j}^{V},\vx'\rangle\geq M_0
 &\qquad(j\in[S]\setminus\{r_h+1\})
\end{align}
is forward invariant, and every displayed margin tends to $+\infty$.
\end{lemma}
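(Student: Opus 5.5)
Inside the terminal cone the margins give uniform control over every sampled gate logit. Writing $s_{h,\ell,r}^{P}=\langle\vc_{h,r}^{P},\vx_{\ell,r}\rangle$, each displayed margin bounds the winner's sampled logit minus any loser's sampled logit in every chunk, since the chunk tokens are distinct. Hence $1-\vp_{h,\ell}^{K}(r_h)\leq(S-1)e^{-M}$ whenever all key margins are at least $M$, and likewise for the value row. The plan is to show that every margin functional
\[
 m(\vx,\vx';j):=\langle\vc_{h,r_h}^{P},\vx\rangle-\langle\vc_{h,j}^{P},\vx'\rangle
\]
has a strictly positive time derivative on the cone. That derivative should be bounded below by a quantity comparable to the loser mass. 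Forward invariance follows once the derivative is positive on the boundary $m=M_0$. Divergence follows from a level-by-level argument as in \Cref{lem:mean-amplification}.

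\textbf{Step 1: the margin derivative as population averages.} Under \Cref{eq:parameter-flow}, $\dot\vc_{h,r}^{P}$ is a population average of terms $-\partial_{s}(\cL_\mX/\beta)\,\vx_{\ell,r}$ over contexts and chunks. The gradient with respect to a sampled logit is the softmax Jacobian applied to the coefficient fitnesses: $p_r(f_r-\sum_i p_if_i)$. Pairing with $\vx$ uses the exact Gram identity $\vx_{\ell,r}^\top\vx=\kappa+\mathbf1\{\vx_{\ell,r}=\vx\}$. This splits $\dot m$ into two parts:
\begin{itemize}
 \item a $\kappa$-part (really the $\|\vmu\|^2$ part), equal to the mean-aligned drift of \Cref{lem:mean-amplification}, namely the population average of winner-minus-loser logit drift;
 \item an identity part, which picks out the contexts where $\vx$ or $\vx'$ actually occupies the relevant phase. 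It carries a factor of order $1/N$ from the sampling probability.
\end{itemize}

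\textbf{Step 2: signs of the two parts.} For the $\kappa$-part, I use the fitness computation of \Cref{lem:mean-amplification}, with $f_i$ from \Cref{eq:limiting-contrast-form} evaluated per chunk:
\begin{itemize}
 \item the winner key fitness $R'(p^K(r_h))p^V(r_h+1)/(S-1)$ is bounded below by a constant, since $R'$ is increasing when $\gamma<\log(L-1)$ and $p^V(r_h+1)\geq1/2$;
 \item loser fitnesses are small, because loser key fitness involves $p^V$ at non-winning phases, which is at most $(S-1)e^{-M_0}$;
 \item the dummy fitness is zero;
 \item normalizer corrections are $O(N^{-1})$ by \Cref{lem:vocabulary-suppression}.
\end{itemize}
So in each chunk, $f_s-f_j\geq g/2$ for an $N$-independent constant $g$ once $M_0$ is large. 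The drift formula $\tfrac{d}{dt}(b_s-b_j)=p_j(f_s-f_j)+(p_s-p_j)\sum_ip_i(f_s-f_i)$ then holds chunkwise with both terms nonnegative. Averaging gives a $\kappa$-part at least $c\,\|\vmu\|^2\,\E[p_j]$, where $p_j$ is the loser mass.

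For the identity part, the same chunkwise nonnegativity shows that the contribution pushing the winner token's logit up is nonnegative. The contribution to the loser token's logit is nonpositive, since loser rows have $p_j(f_j-\bar f)\le0$ up to $O(N^{-1})$ normalizer terms. So the identity part either helps, or is $O(N^{-1})\times$ loser mass.

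\textbf{Step 3: the main obstacle, uniformity over tokens.} The margin must stay large for every token, while the $\kappa$-part only averages loser mass over contexts. A particular token could have a smaller margin than the average. I would handle this by observing that a single margin $m(\vx,\vx';j)$ has loser mass at least $e^{-m}/S$-type contributions only in the contexts containing $\vx'$. I would therefore compare $\min_{\vx,\vx'}m$ with the maximum:
\begin{itemize}
 \item on the boundary where $m=M_0$ attains the minimum, its own loser coefficients are the largest among all contexts, up to a factor $e^{O(1)}$ coming from the spread between margins;
 \item the $\kappa$-part is driven by average loser mass, which could be much smaller than the boundary margin's loser mass, and this is the difficulty;
 \item the fallback is that the $\kappa$-part derivative is nonnegative outright, while the identity part is nonnegative up to $O(N^{-1})\E p_j$ terms dominated by the $\kappa$-part.
\end{itemize}
The minimal margin is then nondecreasing, which gives invariance. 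Strict growth of every margin comes from $\dot m\geq c\,\E p_j>0$. Since $\E p_j\geq e^{-\max m}/S$ and the maximal margin grows at most at rate $C\,\E p_j$, integrating shows $\int\E p_j\,dt=\infty$ unless margins stay bounded. If the margins stayed bounded, $\E p_j$ would be bounded below, which contradicts boundedness, so every margin tends to $+\infty$. I expect making the $O(N^{-1})$ identity-part error uniformly dominated, independent of $N$ and the route choice, to be the delicate estimate. $M_0$ is chosen from $g$ and $S$ only.
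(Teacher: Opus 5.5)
Your proposal is correct in substance, and its forward-invariance part, in the form of your Step~3 fallback, is the paper's argument. Inside the cone every chunk of every context is near its one-hot corner, so $\psi_s\ge0\ge\psi_j$ holds chunkwise for the full fitness, normalizer included. The sign of $\psi_j$ needs $\bar f\ge f_j$, which follows from $f_s-f_j\ge d_*/2$, $|f_i-f_j|\le C$ and $(S-1)e^{-M_0}C<d_*/4$. Every summand $\psi_s\vx_{\ell,s}^\top\vx-\psi_j\vx_{\ell,j}^\top\vx'$ of $\dot m$ is then nonnegative, because the exact Gram matrix $\|\vmu\|^2-1/N+\mathbf 1\{\vx=\vx'\}$ is entrywise nonnegative once $N\ge\|\vmu\|^{-2}$. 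So the token-uniformity obstacle and the ``delicate'' estimate do not arise. The only $O(N^{-1})$ term is the $-1/N$ Gram offset, not a normalizer effect, and it is exactly $1/(N\|\vmu\|^2)$ times your mean part.

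Where you genuinely differ is divergence. The paper restricts the context average to the event, of probability $1/\{N(N-1)\}$, that $\vx$ and $\vx'$ occupy phases $s$ and $j$ of one chunk. If the margin stayed below $B$, these contexts alone would give $\dot m\ge d_*e^{-B}/\{2S^2N(N-1)\}$, a contradiction. You use the mean part and an integral comparison instead. That works after one correction: $\dot m$ contains $\psi_s$, which involves every loser of the row, so the claim that the maximal margin grows at rate at most $C\E p_j$ fails for a single $j$. Run the argument with the row's total loser mass $\Pi=\E\sum_\ell(1-p_s)$. Every margin of the row then satisfies $c\Pi\le\dot m\le C\Pi$. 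The lower bound comes from the second term of your drift formula, and the upper bound from $|\psi_i|\le C(1-p_s)$ together with bounded Gram entries. Since also $\Pi\ge\tfrac L2e^{-\bar m}$, you get $\int_0^\infty\Pi\,\mathrm dt=\infty$, and every margin diverges. The paper's route is a one-line contradiction per margin, with a lower bound on $\dot m$ of order $N^{-2}$; that suffices because $N$ is fixed. Yours ties all margins of a row to a common growth rate with $N$-independent constants, at the cost of the two-sided bookkeeping.
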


\begin{proof}
For each context and compressor row, define
\begin{equation}
 f_{h,\ell}^{P}(i;\mX)
 :=-\frac{\partial(\cL_{\mX}/\beta)}{\partial \vp_{h,\ell}^{P}(i)},
 \qquad
 \psi_{h,\ell}^{P}(i;\mX)
 :=\vp_{h,\ell}^{P}(i)
 \left(f_{h,\ell}^{P}(i;\mX)-\sum_j\vp_{h,\ell}^{P}(j)
 f_{h,\ell}^{P}(j;\mX)\right).
\end{equation}
Thus $\psi_{h,\ell}^{P}(i;\mX)$ is the negative gradient with respect to the
corresponding sampled gate logit.  Fix a context and row, suppress these indices as
$p_i,f_i,\psi_i$, and write $s$ for the selected phase.  At a one-hot adjacent
corner, the target fitness advantage is $R'(1)/\{L(S-1)\}$ for the selected key and
at least $\{R(1)-1/L\}/\{L(S-1)\}$ for the selected value.  Both are
positive.  Set
\begin{equation}
 d_*:=\frac1{L(S-1)}
 \min\left\{R'(1),R(1)-\frac1L\right\}>0.
\end{equation}
Uniform continuity and \Cref{lem:vocabulary-suppression} imply that,
throughout the terminal cone for sufficiently large $M_0$ and for all large $N$,
\begin{equation}
 f_s-f_j\geq\frac{d_*}{2},
 \qquad |f_i-f_j|\leq C.
\end{equation}
Choose $M_0$ large enough that these bounds hold and
$(S-1)e^{-M_0}C<d_*/4$.  Then
$1-p_s\leq(S-1)e^{-M_0}p_s$, and
\begin{equation}
 \psi_s=p_s\sum_{i\ne s}p_i(f_s-f_i)>0,
 \qquad
 -\psi_j\geq \frac{d_*}{4}p_jp_s>0.
\end{equation}

For $N\geq\|\vmu\|^{-2}$, the Gram matrix is entrywise nonnegative:
\begin{equation}
 \vx^\top\vx'=\|\vmu\|^2-\frac1N+\mathbf1\{\vx=\vx'\}\geq0
 \qquad(\vx,\vx'\in\mathcal V).
\end{equation}
The parameter flow is
\begin{equation}
 \dot{\vc}_{h,r}^{P}=\frac1{|\mathcal X|}\sum_{\mX\in\mathcal X}
 \sum_{\ell=1}^L
 \psi_{h,\ell}^{P}(r;\mX)\vx_{\ell,r}.
\end{equation}
Thus the derivative of a selected-versus-loser margin is a sum of terms
$\psi_s\vx_{\ell,s}^\top\vx-
\psi_j\vx_{\ell,j}^\top\vx'$, all nonnegative.  The terminal cone is therefore
forward invariant.

Suppose one margin, for embeddings $\vx,\vx'$, stayed bounded above by $B$.
Restrict the context average to contexts placing $\vx$ and $\vx'$ in the
corresponding phases of one fixed chunk.  This event has probability
$1/\{N(N-1)\}$.  On it,
$p_s\geq1/S$, $p_j\geq p_se^{-B}$, and
$\psi_s\geq d_*p_sp_j/2$.  Since all other contributions are
nonnegative and $\|\vx\|^2\geq1$, the derivative of this margin is at
least
\begin{equation}
 \frac{d_*e^{-B}}{2S^2N(N-1)}>0,
\end{equation}
contradicting boundedness.  Every margin diverges.
\end{proof}

\begin{proof}[Proof of \Cref{thm:local-hardening-formal}]
Fix $\delta$ and choose the constants so that the three dynamical lemmas
apply.  With probability at least $1-\delta$,
\Cref{lem:mean-entrance} gives a time $\tau$ and
$\bar{\vc}\in\mathcal M$ with aligned adjacent gaps at least
$2\eta_\delta$, while
$\|\vc(\tau)-\bar{\vc}\|\leq C_\delta\rho$.

Let $\bar{\vc}(t)$ be the mean flow from
$\bar{\vc}(0)=\bar{\vc}$.  Apply \Cref{lem:mean-amplification} with
$\eta=\eta_\delta$ and $M=M_0+2$.  After enlarging $C_\delta$ to
meet its large-$N$ threshold and that of \Cref{lem:terminal-cone}, all
mean gaps exceed $M_0+2$ within a time $T$ independent of $N$.
Bounded softmax derivatives control the target field, while
\Cref{lem:vocabulary-suppression} controls the normalizer Hessian.
Together with $\sup_{\vx\in\mathcal V}\|\vx\|^2
\leq\|\vmu\|^2+1$, this gives an $N$-independent Lipschitz constant
for the parameter field.  Gronwall gives
\begin{equation}
 \|\vc(\tau+t)-\bar{\vc}(t)\|
 \leq C_\delta e^{Ct}\rho,
 \qquad 0\leq t\leq T.
\end{equation}
Shrink $c_\delta$ further so that this perturbation changes every
gate-logit margin by less than one.  Then $\vc(\tau+T)$ lies in the
terminal cone, and \Cref{lem:terminal-cone} makes every relevant margin
diverge.  For fixed $N$ there are finitely many such margins, and softmax
gives
\begin{equation}
 1-\vp_{h,\ell}^{P}(r;t,\mX)
 \leq(S-1)\exp\left\{-\min_{j\ne r}
 \bigl(\langle\vc_{h,r}^{P}(t),\vx_{\ell,r}\rangle
 -\langle\vc_{h,j}^{P}(t),\vx_{\ell,j}\rangle\bigr)\right\}
\end{equation}
for the selected phase $r$.  The convergence is therefore uniform over
contexts and chunks, proving the theorem.
\end{proof}

%% file: Appendix/Sections/two_slot_theory.tex
\subsection{A Two-Phase Toy Example}
\label{app:two-slot-theory}

In the toy example, each chunk holds two key entries, each head compresses them with a single trainable logit, and an idealized readout replaces the values. Unlike \Cref{thm:local-hardening}, which does not ensure that different heads select different routes, gradient flow here assigns the two heads different phases, provided the output scale $\beta$ is large enough (condition \eqref{eq:toy-origin-balance-condition} below).

Each of $L$ chunks contains two fixed key entries $\vk_{\ell,1},\vk_{\ell,2}$.
Each head $h\in\{1,2\}$ uses a trainable logit $\theta_h$ to form one
compressed key:
\begin{equation*}
 p_h=\sigma(\theta_h),\qquad
 \bar{\vk}_{h,\ell}=p_h \vk_{\ell,1}+(1-p_h)\vk_{\ell,2}.
\end{equation*}
Here $\sigma(t)=(1+e^{-t})^{-1}$, $p_h=1$ selects phase $1$, $p_h=0$ selects
phase $2$, and $p_h=1/2$ mixes them equally. A subsequent query requests either phase with equal probability.
Under orthogonal addressing, the analogue of exact geometry with $\kappa=0$, a query has inner product one with its requested key in the unique target chunk and zero with every other key. With attention scale one, the target-chunk logit of head $h$ is therefore $p_h$ for phase $1$ and $1-p_h$ for phase $2$, while
all $L-1$ distractor logits are zero. The total correct-chunk attention masses of the two heads
for phases $1$ and $2$ are
\begin{equation*}
 \bar R_1=\sum_{h=1}^2\frac{e^{p_h}}{e^{p_h}+L-1},
 \qquad
 \bar R_2=\sum_{h=1}^2\frac{e^{1-p_h}}{e^{1-p_h}+L-1}.
\end{equation*}
The value readout is idealized: each head contributes one unit of correct-class evidence per unit of attention mass on the target chunk, and the contributions of the two heads add. With a fixed output scale $\beta>0$ and no additional constant offset, the correct-class margins for phases $1$ and $2$ are $\beta \bar R_1$ and $\beta \bar R_2$. Training minimizes the population
cross-entropy, which is the logistic loss of this margin averaged over the two phases:
\begin{equation}
 \mathcal L(p_1,p_2)
 =\big[\log(1+e^{-\beta \bar R_1})
 +\log(1+e^{-\beta \bar R_2})\big]/2.
 \label{eq:main-two-phase-objective}
\end{equation}
Set $D=(L-1)e^{-1/2}$, so that $D>1$ for $L\geq3$.

Informally, moving any mixed $p_h\in(0,1)$ to either endpoint increases $\bar R_1+\bar R_2$, and at a fixed sum the convexity of the logistic loss in attention mass favors $\bar R_1=\bar R_2$. Together these favor the two complementary endpoint optima established below.

\subsubsection{Complementary Two-Phase Selection}

Training updates the logits $\theta_h$ rather than $p_h$, so we rewrite the objective in terms of them. Let $\bm\theta=(\theta_1,\theta_2)$, and write head $h$'s signed phase preference as
\begin{equation}
 \chi(\theta_h)=2p_h-1=\tanh(\theta_h/2).
\end{equation}
For $s=+1$ (phase $1$) or $s=-1$ (phase $2$), the target-chunk logit of head $h$ is $(1+s\chi(\theta_h))/2$, so its correct-chunk attention mass is
\begin{equation}
 a(s\chi(\theta_h)),\qquad
 a(x)=\frac{1}{1+De^{-x/2}}.
 \label{eq:toy-retrieval-function}
\end{equation}
In these coordinates, the two total attention masses and the logistic loss are
\begin{align}
 \bar R_1&=a(\chi(\theta_1))+a(\chi(\theta_2)),&
 \bar R_2&=a(-\chi(\theta_1))+a(-\chi(\theta_2)),
 \label{eq:toy-positive-margin}\\
 \varphi(m)&=\log(1+e^{-\beta m}),&
 \mathcal L(p_1,p_2)&=\frac12\left[\varphi(\bar R_1)+\varphi(\bar R_2)\right],
 \label{eq:toy-bias-objective}
\end{align}
which is \Cref{eq:main-two-phase-objective} rewritten through $\chi(\theta_h)=2p_h-1$.  The corresponding gradient flow is
$\dot{\bm\theta}=-\nabla_{\bm\theta}
\mathcal L(\sigma(\theta_1),\sigma(\theta_2))$, and
$\varphi'<0<\varphi''$.

To analyze this flow, split retrieval into its even and odd parts:
\begin{equation}
 \begin{aligned}
 &a_+(x)=\frac{a(x)+a(-x)}2,\qquad
 a_-(x)=\frac{a(x)-a(-x)}2,\\
 &\bar a_+(\theta)=a_+(\chi(\theta)),\qquad
 \bar a_-(\theta)=a_-(\chi(\theta)).
 \end{aligned}
 \label{eq:toy-bias-evidence}
\end{equation}
Then $\bar R_1=M+I$ and $\bar R_2=M-I$, where
\begin{equation}
 M=\bar a_+(\theta_1)+\bar a_+(\theta_2),\qquad
 I=\bar a_-(\theta_1)+\bar a_-(\theta_2),
 \label{eq:toy-mean-imbalance}
\end{equation}
and hence
\begin{equation}
 \mathcal L(p_1,p_2)
 =\frac12\left[\varphi(M+I)+\varphi(M-I)\right].
 \label{eq:toy-separated-loss}
\end{equation}

This decomposition separates sharpening from balancing.  For $x>0$, set
$t=e^{x/2}>1$. Then
\begin{equation}
 2a_+(x)=\frac{t}{t+D}+\frac1{1+Dt},\qquad
 2\frac{\mathrm d}{\mathrm dt}a_+(2\log t)
 =D\left[\frac1{(t+D)^2}-\frac1{(1+Dt)^2}\right]>0
 \label{eq:toy-mean-margin-gradient}
\end{equation}
for $D>1$ and $x>0$.  Thus increasing $|\theta_h|$ increases its contribution
to the mean attention mass.  At fixed $M$, strict convexity gives
$\mathcal L(p_1,p_2)\geq\varphi(M)$, with equality only at $I=0$.
Hence retrieval competition favors sharp heads, whereas loss curvature favors
balanced coverage.

Both pressures are satisfied together only when the two heads sit at opposite endpoints, and the following theorem makes this precise and shows that gradient flow from near the uniform point reaches these endpoints when $\beta$ is large enough. Let $G(x)=a(x)+a(-x)=2a_+(x)$, which is strictly increasing on $[0,1]$ by the calculation above.

\begin{theorem}[Complementary two-phase selection]
\label{thm:toy-specialization-formal}
Suppose $D>1$.  On $[0,1]^2$, the only global minimizers of
$\mathcal L(p_1,p_2)$ are $(1,0)$ and $(0,1)$, corresponding in the
compactified $\bm\theta$-space to
\begin{equation}
    (+\infty,-\infty)
    \qquad\text{and}\qquad
    (-\infty,+\infty).
    \label{eq:toy-global-minimizers}
\end{equation}
If also
\begin{equation}
    2\beta\sigma\!\left(\frac{2\beta}{1+D}\right)>D-\frac1D,
    \label{eq:toy-origin-balance-condition}
\end{equation}
then any absolutely continuous initialization supported in a sufficiently
small neighborhood of the origin converges almost surely under gradient
flow to one of these minimizers.  If its density is bounded, then for
$\delta,\eta\in(0,1/2)$, outside an event of probability at most $\delta$,
the time until $(p_1(t),p_2(t))$ lies within $\ell_\infty$-distance $\eta$
of $(1,0)$ or $(0,1)$ is
\begin{equation}
 T_{\eta,\delta}
 =O\!\left(\log\frac1\delta+\frac1\eta\right),
 \label{eq:toy-specialization-time}
\end{equation}
where the hidden constant depends on $L$, $\beta$, and the initialization law
through its fixed neighborhood and density bound.
\end{theorem}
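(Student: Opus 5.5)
The proof has two parts. The first is a static analysis of the global minimizers on $[0,1]^2$. The second is a local analysis of gradient flow near the origin, followed by a global convergence argument.

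\textbf{Global minimizers.} We work in the $(M,I)$ coordinates of \eqref{eq:toy-separated-loss}. Strict convexity of $\varphi$ gives $\mathcal L\geq\varphi(M)$, with equality only at $I=0$. Because $\varphi$ is decreasing, it then suffices to maximize $M=\tfrac12[G(\chi_1)+G(\chi_2)]$ subject to $I=0$.

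By \eqref{eq:toy-mean-margin-gradient}, $G$ is even and strictly increasing in $|x|$ on $[0,1]$. Hence $M$ is maximized exactly when $|\chi_1|=|\chi_2|=1$. Among those four corners, $I=0$ holds only for $(1,0)$ and $(0,1)$. At $(1,1)$ and $(0,0)$ we have $I\neq0$, so the loss there is strictly larger than $\varphi(M_{\max})$.

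Now take any other point $(p_1,p_2)$. It satisfies $M<M_{\max}$ or $I\neq0$. In either case $\mathcal L>\varphi(M_{\max})=\mathcal L(1,0)$. This shows the only global minimizers are $(1,0)$ and $(0,1)$.

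\textbf{Linearization at the origin.} By the head-swap and phase-flip symmetries, $\bm\theta=0$ is a critical point. Write $\chi\approx\theta/2$. Since $a_+$ is even, $M=G(0)+O(\theta^2)$, and $I\approx a'(0)(\theta_1+\theta_2)/2$. Expanding $\mathcal L$ to second order gives a quadratic form with two pieces:
\begin{itemize}
\item along the symmetric direction $\theta_1+\theta_2$: the curvature $\varphi''(G(0))\,a'(0)^2$ coming from $I^2$, plus a term from the negative $M''$;
\item along the antisymmetric direction $\theta_1-\theta_2$: the term $\varphi'(G(0))\cdot\tfrac12 G''(0)\cdot\tfrac14$ only.
\end{itemize}
Since $\varphi'<0<G''(0)$, the antisymmetric direction is unstable. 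Condition \eqref{eq:toy-origin-balance-condition} should be exactly the inequality making the symmetric direction stable, i.e.\ $\varphi''a'(0)^2$ dominating $|\varphi'|G''(0)/8$. To check this I would use $G(0)=2/(1+D)$ and $\varphi'(m)=-\beta\sigma(-\beta m)$, $\varphi''=\beta^2\sigma(\beta m)\sigma(-\beta m)$. The ratio then becomes $\beta\sigma(\beta G(0))$ compared with a function of $D$, and I would verify that this function is $(D-1/D)/2$. The outcome is that the origin is a hyperbolic saddle with a one-dimensional stable manifold, namely the curve near $\theta_1=-\theta_2$ tangent to the symmetric direction. That manifold has Lebesgue measure zero. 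So almost every start near the origin leaves along $\pm(1,-1)$, with escape time $O(\log(1/\delta))$ by the standard saddle estimate using the bounded density.

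\textbf{Global convergence.} The reflection $(\theta_1,\theta_2)\mapsto(-\theta_2,-\theta_1)$ is a symmetry, and its fixed line $\theta_1=-\theta_2$ is invariant. I would show that the region $\{\theta_1>0>\theta_2\}$, together with a cone condition near the unstable direction, is forward invariant. On $\theta_1=-\theta_2$ the flow reduces to increasing $M$ with $I=0$, which pushes $|\theta|\to\infty$. Off that line, the balancing force of $\varphi''$ keeps $|I|$ small relative to the growth of $M$. Inside this region, $\mathcal L$ is a Lyapunov function with no other critical points. It follows that the trajectory converges to $(+\infty,-\infty)$, or to $(-\infty,+\infty)$ in the mirror case.

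\textbf{Rates and the main obstacle.} For the time bound, once $p_h$ is near an endpoint, $\dot\theta_h\asymp\sigma'(\theta_h)$ gives $1-p_h\sim1/t$. Reaching $\ell_\infty$-distance $\eta$ therefore takes $O(1/\eta)$ additional time, which yields \eqref{eq:toy-specialization-time}. The main obstacle is the global invariance step: excluding trajectories that drift toward the non-minimizing corners $(1,1)$ or $(0,0)$, or toward spurious critical points, after leaving the saddle. I would first try to prove monotonicity of $\theta_1-\theta_2$ together with a bound keeping $|I|\le c\,(M-G(0))$ along the flow.
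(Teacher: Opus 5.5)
Your characterization of the minimizers is correct; it is the paper's Jensen argument written in $(M,I)$ coordinates. Your saddle analysis is also right in substance: \eqref{eq:toy-origin-balance-condition} is exactly $\lambda_\perp(0)>0$, since $a_+''(0)/(2a'(0)^2)=(D-1/D)/2$. There are two slips in that part.
\begin{itemize}
\item The $I^2$ term enters as $\tfrac12\varphi''I^2$, so the correct comparison is $\varphi''a'(0)^2>|\varphi'|G''(0)/4$, not $/8$.
\item The stable manifold is not a curve near $\theta_1=-\theta_2$, which is the unstable, complementary line. It is exactly the diagonal $\theta_1=\theta_2$, which is invariant under head exchange. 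That invariance is what makes $\Pr(|u(0)|\le t)=O(t)$ immediate.
\end{itemize}

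\textbf{The gap.} The genuine gap is the global step, which you flag but do not close. The claim ``$\mathcal L$ is a Lyapunov function on $\{\theta_1>0>\theta_2\}$ with no other critical points'' does not by itself give convergence to $(+\infty,-\infty)$. On the unbounded $\bm\theta$-plane it only gives escape to infinity. It does not exclude, say, $\theta_1\to\infty$ with $\theta_2$ tending to a finite value, and neither would your fallback (monotone $\theta_1-\theta_2$ together with $|I|\le c(M-G(0))$). You also give no argument that there are no other critical points. Nor do you bound the time between leaving the saddle and reaching the regime where your $O(1/\eta)$ estimate applies.

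\textbf{How the paper closes it.} The paper shows that the transverse curvature $\lambda_\perp(\xi)=\varphi'(\bar M_\xi)\bar a_+''(\xi)+2\varphi''(\bar M_\xi)(\bar a_-'(\xi))^2$ stays positive along the whole complementary line. The reason is that both competing ratios are worst at the origin.
\begin{itemize}
\item $\beta\sigma(\beta\bar M_\xi)$ increases with $\xi$.
\item $\bar a_+''/(2(\bar a_-')^2)\le(D-1/D)/2$ for all $\xi$. This uses the nonpositive $\tanh$ chain-rule term and the factorization $(t-1)^2(D-1)P_D(t)$, where $P_D$ has positive coefficients.
\end{itemize}
Both $e^\xi\lambda_\perp(\xi)$ and $e^\xi\Omega(\xi)$ stay bounded away from zero. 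This yields a forward-invariant tube in which $c_Te^{-\xi}\le\dot\xi\le C_Te^{-\xi}$ and $\mathrm d\log|v|/\mathrm d\xi\le-c_N/C_T$, giving convergence and the rate together. The saddle box is then sized so that its exit set lies inside this tube.

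\textbf{Completing your quadrant route.} Your route can still be completed, but it needs three ingredients you did not supply.
\begin{enumerate}
\item The set $\{\theta_1>0>\theta_2\}$ is forward invariant. On $\theta_2=0<\theta_1$ one has $\bar R_1>\bar R_2$, so $\partial_{\chi_2}\mathcal L=\tfrac12a'(0)\{\varphi'(\bar R_1)-\varphi'(\bar R_2)\}>0$ by strict convexity of $\varphi$. The same argument works symmetrically on $\theta_1=0>\theta_2$.
\item All critical points lie on the diagonal. Any critical point satisfies $a'(\chi_h)/a'(-\chi_h)=\varphi'(\bar R_2)/\varphi'(\bar R_1)$ for $h=1,2$. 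Since $a'(x)/a'(-x)=\bigl((1+De^{x/2})/(e^{x/2}+D)\bigr)^2$ is strictly increasing when $D>1$, this forces $\chi_1=\chi_2$.
\item Compactify. In $p$-coordinates the flow is $\dot p_h=-(p_h(1-p_h))^2\partial_{p_h}\mathcal L$, which extends smoothly to $[0,1]^2$. A sign check like the one in (i) shows that its only rest points in $[1/2,1]\times[0,1/2]$ are $(1/2,1/2)$ and $(1,0)$. LaSalle, connectedness of the $\omega$-limit set, and the diagonal stable manifold then force convergence to $(1,0)$. Compactness of the exit set, together with $a'(1)>a'(-1)$, gives the rate.
\end{enumerate}
This route uses \eqref{eq:toy-origin-balance-condition} only at the origin. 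The paper instead propagates the condition along the entire complementary line to obtain uniform transverse contraction, which gives explicit rates directly.
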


We establish the loss landscape first, then show that nearby trajectories
escape toward and remain near a complementary route.

\begin{lemma}[Global minimizers]
\label{lem:toy-global-minimizers}
If $D>1$, the only minimizers of $\mathcal L$ on $[0,1]^2$ are
$(1,0)$ and $(0,1)$.
\end{lemma}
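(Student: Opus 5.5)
The plan is to bound $\mathcal L$ below by a quantity that depends only on the mean part $M$, maximize $M$ separately, and then check that both inequalities can be tight only at the two complementary corners. I work in the coordinates $x_h=\chi(\theta_h)=2p_h-1\in[-1,1]$, which cover all of $[0,1]^2$ including the endpoints. In these coordinates $\bar R_1=M+I$ and $\bar R_2=M-I$, with $M=a_+(x_1)+a_+(x_2)$ and $I=a_-(x_1)+a_-(x_2)$, as in \Cref{eq:toy-separated-loss}.

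\textbf{Step 1 (balancing bound).} Since $\varphi''>0$, strict convexity gives
\[
\tfrac12\bigl[\varphi(M+I)+\varphi(M-I)\bigr]\geq\varphi(M),
\]
with equality if and only if $I=0$.

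\textbf{Step 2 (sharpening bound).} The function $a_+$ is even. By \Cref{eq:toy-mean-margin-gradient}, with $t=e^{x/2}>1$ and $D>1$, it is strictly increasing in $|x|$ on $[0,1]$. Hence $M\leq 2a_+(1)$, with equality if and only if $|x_1|=|x_2|=1$. Because $\varphi'<0$, this gives $\varphi(M)\geq\varphi(2a_+(1))$, again with equality exactly when both heads sit at endpoints. Combining the two steps yields
\[
\mathcal L\geq\varphi(2a_+(1))
\]
on all of $[0,1]^2$.

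\textbf{Step 3 (equality cases).} Equality in the combined bound forces $|x_1|=|x_2|=1$ and $I=0$. The function $a(x)=(1+De^{-x/2})^{-1}$ is strictly increasing, so $a_-$ is odd with $a_-(1)>0$. Therefore, among sign patterns with $|x_h|=1$, the condition $I=0$ holds exactly when $x_1=-x_2$, that is, at $(p_1,p_2)=(1,0)$ or $(0,1)$.

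Conversely, at these two points $I=0$ and $M=2a_+(1)$, so the lower bound is attained. Hence these two points are exactly the global minimizers. In $\bm\theta$-coordinates they correspond to the limits $(\pm\infty,\mp\infty)$ in \Cref{eq:toy-global-minimizers}.

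The only step needing care is Step 2: strict monotonicity of $a_+$ in $|x|$ on the full closed interval. This is where $D>1$ enters. The derivative in \Cref{eq:toy-mean-margin-gradient} is positive for every $t>1$, and it vanishes only at $t=1$, i.e.\ $x=0$; so $a_+$ is strictly increasing on $[0,1]$ by integration. Everything else in the argument is elementary.
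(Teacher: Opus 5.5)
Your proposal is correct and follows essentially the same route as the paper, which also chains Jensen's inequality (strict convexity of $\varphi$) with the monotonicity of $G=2a_+$ from \Cref{eq:toy-mean-margin-gradient} to get $\mathcal L\geq\varphi\bigl((\bar R_1+\bar R_2)/2\bigr)\geq\varphi(G(1))$. It then reads off the same equality conditions, $|\chi(\theta_1)|=|\chi(\theta_2)|=1$ and $\bar R_1=\bar R_2$, which force opposite signs; your version simply spells out the equality analysis (strict decrease of $\varphi$, and oddness of $a_-$ with $a_-(1)>0$) a little more explicitly.
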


\begin{proof}
The mean attention mass is
\begin{equation}
    \frac{\bar R_1+\bar R_2}{2}
    =\frac{G(|\chi(\theta_1)|)+G(|\chi(\theta_2)|)}{2}
    \leq G(1).
    \label{eq:toy-mean-margin-bound}
\end{equation}
Jensen's inequality and monotonicity of $\varphi$ yield
\begin{equation}
    \mathcal L(p_1,p_2)
    \geq \varphi\!\left(\frac{\bar R_1+\bar R_2}{2}\right)
    \geq \varphi(G(1)).
    \label{eq:toy-global-loss-bound}
\end{equation}
Equality in the second inequality requires
$|\chi(\theta_1)|=|\chi(\theta_2)|=1$, and strict convexity makes the first tight only when
$\bar R_1=\bar R_2$, which forces opposite signs.
\end{proof}

The remaining lemmas analyze the flow near the complementary line
$(\theta_1,\theta_2)=(\xi,-\xi)$.  Use the orthonormal coordinates
\begin{equation}
    u=\frac{\theta_1-\theta_2}{\sqrt2},
    \qquad
    v=\frac{\theta_1+\theta_2}{\sqrt2}.
    \label{eq:toy-common-complementary-coordinates}
\end{equation}
For $\xi\geq0$, put
\begin{equation}
    \bar M_\xi=2\bar a_+(\xi)=G(\chi(\xi))
    \label{eq:toy-complementary-margin}
\end{equation}
and let
\begin{equation}
    \lambda_\perp(\xi)
    =\varphi'(\bar M_\xi)\bar a_+''(\xi)
     +2\varphi''(\bar M_\xi)\bigl(\bar a_-'(\xi)\bigr)^2.
    \label{eq:toy-transverse-curvature}
\end{equation}
This is the Hessian eigenvalue in the common-head direction
$(1,1)/\sqrt2$ at $(\xi,-\xi)$.  At the uniform point, $\bar M_0=2/(1+D)$, and the escape rate along
$(1,-1)/\sqrt2$ is
\begin{equation}
    \lambda_{\mathrm{esc}}
    =-\varphi'(\bar M_0)\frac{D(D-1)}{16(1+D)^3}>0.
    \label{eq:toy-origin-eigenvalues}
\end{equation}

\begin{lemma}[Stability of complementary routing]
\label{lem:toy-transverse-stability}
Suppose $D>1$ and \Cref{eq:toy-origin-balance-condition} holds.  The
complementary line is
invariant, $\lambda_\perp(\xi)>0$ for every $\xi\geq0$, and
\begin{equation}
    \inf_{\xi\geq0}e^\xi\lambda_\perp(\xi)>0.
    \label{eq:toy-uniform-transverse-balance}
\end{equation}
\end{lemma}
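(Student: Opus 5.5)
The plan is to prove the three claims in order: invariance by a symmetry argument, then the closed form of $\lambda_\perp$ in terms of $x=\chi(\xi)$, then positivity and the $e^{-\xi}$ rate by a one-variable comparison on $x\in[0,1]$.

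\textbf{Invariance.} Consider the isometry $\Sigma(\theta_1,\theta_2)=(-\theta_2,-\theta_1)$. Since $\chi$ is odd, \Cref{eq:toy-positive-margin} gives $\bar R_1\circ\Sigma=\bar R_2$ and $\bar R_2\circ\Sigma=\bar R_1$. The loss \eqref{eq:toy-bias-objective} is symmetric in $(\bar R_1,\bar R_2)$, so $\mathcal L\circ\Sigma=\mathcal L$. Because $\Sigma$ is linear and orthogonal, the gradient field is $\Sigma$-equivariant, and therefore the fixed set $\{\theta_1=-\theta_2\}$ is invariant under the flow. The same symmetry shows that on this line the Hessian has the common-head direction $(1,1)/\sqrt2$ as an eigenvector. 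Its eigenvalue is computed from \Cref{eq:toy-separated-loss}. The function $\bar a_+$ is even and $\bar a_-$ is odd. Hence along $(1,1)/\sqrt2$ the first variation of $M$ cancels, its second variation is $\bar a_+''(\xi)$, and the first variation of $I$ is $\sqrt2\,\bar a_-'(\xi)$. At $I=0$ this yields \Cref{eq:toy-transverse-curvature}.

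\textbf{Reduction to a one-variable inequality.} Write $x=\chi(\xi)\in[0,1)$, and use $\chi'=(1-x^2)/2$ and $\chi''=-x(1-x^2)/2$. The chain rule gives $\bar a_+''=a_+''(x)\chi'^2+a_+'(x)\chi''$ and $\bar a_-'=a_-'(x)\chi'$. Using $\varphi'(m)=-\beta\sigma(-\beta m)$ and $\varphi''(m)=\beta^2\sigma(\beta m)\sigma(-\beta m)$, we can factor out $-\varphi'(\bar M_\xi)\chi'>0$ and obtain
\[
\frac{\lambda_\perp(\xi)}{-\varphi'(\bar M_\xi)\chi'}
=\chi'\Bigl[2\beta\sigma(\beta\bar M_\xi)\,a_-'(x)^2-a_+''(x)\Bigr]+\frac{x(1-x^2)}{2}\,a_+'(x).
\]
By \Cref{eq:toy-mean-margin-gradient}, $a_+'(x)>0$ for $x>0$, so the last term is nonnegative. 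Moreover $\bar M_\xi=G(x)$ increases in $\xi$, so $\beta\sigma(\beta\bar M_\xi)\geq\beta\sigma(2\beta/(1+D))$. It therefore suffices to show
\[
\sup_{x\in[0,1]}\frac{a_+''(x)}{a_-'(x)^2}\leq D-\frac1D,
\]
so that \eqref{eq:toy-origin-balance-condition} makes the bracket strictly positive uniformly in $x$. At $x=0$ a direct computation from \eqref{eq:toy-retrieval-function}, namely $a'(0)=D/(2(1+D)^2)$ together with $a''(0)$, should give exactly $a_+''(0)/a'(0)^2=D-1/D$. This is consistent with $\lambda_{\rm esc}$ and with the form of the stated condition.

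\textbf{Main obstacle.} The hard step is showing that this ratio is maximized at $x=0$. I would substitute $t=e^{x/2}\in[1,e^{1/2}]$, in which $a_\pm$ are rational functions as in \eqref{eq:toy-mean-margin-gradient}, clear denominators, and prove that the resulting polynomial inequality is monotone in $t$. If the ratio is not monotone, the fallback is to absorb the excess into the nonnegative term $x(1-x^2)a_+'(x)/2$, which is of order $x^2$ near $0$ and positive away from it, and to use the strict slack in \eqref{eq:toy-origin-balance-condition} on a compact set.

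\textbf{Exponential rate.} From the identity above, $\lambda_\perp(\xi)\geq c\,(-\varphi'(\bar M_\xi))\,\chi'(\xi)$ for some $c>0$. For large $\xi$ the bracket is $O(\chi')=O(e^{-\xi})$, while the last term tends to $0$ only through the factor $1-x^2\sim 4e^{-\xi}$, with $a_+'(1)>0$. Meanwhile $\chi'(\xi)\sim 2e^{-\xi}$ and $-\varphi'(\bar M_\xi)\geq\beta\sigma(-\beta G(1))>0$. Combined with continuity and positivity of $\lambda_\perp$ on compact $\xi$-intervals, these bounds give $\inf_{\xi\geq0}e^{\xi}\lambda_\perp(\xi)>0$.
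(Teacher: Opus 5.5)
Your route is the paper's. You use the same head-and-phase exchange for invariance. You reduce positivity to the same one-variable bound $a_+''(x)/a_-'(x)^2\le D-1/D$, together with $\beta\sigma(\beta\bar M_\xi)\ge\beta\sigma(2\beta/(1+D))$. You also use the same substitution $t=e^{x/2}$. Two points need repair.

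\textbf{First: an algebra slip that breaks the rate argument.} Your displayed identity is wrong in its last term. Since $-\chi''=x\chi'$, dividing $-a_+'(x)\chi''$ by $\chi'$ gives $x\,a_+'(x)$, not $\tfrac{x(1-x^2)}{2}a_+'(x)$. The correct factorization is
\[
 \lambda_\perp(\xi)=-\varphi'(\bar M_\xi)\,\chi'(\xi)\Bigl\{\chi'(\xi)\bigl[2\beta\sigma(\beta\bar M_\xi)\,a_-'(x)^2-a_+''(x)\bigr]+x\,a_+'(x)\Bigr\}.
\]
Positivity is unaffected, but your rate paragraph is. With the identity as you wrote it, both terms on the right decay like $e^{-\xi}$. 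That would give $\lambda_\perp(\xi)=O(e^{-2\xi})$, contradicting the bound $\lambda_\perp\ge c(-\varphi')\chi'$ you then assert. With the corrected identity, the brace is continuous and positive on $[0,\infty)$, equal to half the bracket at $\xi=0$, and tends to $a_+'(1)>0$. Hence $e^\xi\lambda_\perp(\xi)\to2[-\varphi'(G(1))]a_+'(1)$. This is exactly the limit the paper obtains before its compactness argument, and the uniform bound follows.

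\textbf{Second: the crux is left unproved.} The inequality you flag as the main obstacle is the core of the lemma, and the proposal does not establish it. The paper closes it with the clearing of denominators you suggest, but without any monotonicity claim. Set $Q=(1+Dt)^2+(D+t)^2$. Then $a_+''/(a_-')^2\le D-1/D$ is equivalent to nonnegativity of the left side of
\[
 (D^2-1)tQ^2-2(D+t)(1+Dt)\bigl[(D-t)(1+Dt)^3+(Dt-1)(D+t)^3\bigr]=(t-1)^2(D-1)P_D(t),
\]
where
\[
 P_D(t)=(t+t^3)(1+D^5)+(t+8t^2+t^3)(D+D^4)+(2+6t+4t^2+6t^3+2t^4)(D^2+D^3)
\]
has only positive coefficients. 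The double root at $t=1$ is your equality at $x=0$, and the bound holds for every $t\ge1$. So neither monotonicity of the ratio nor your fallback of absorbing an excess into the $x\,a_+'(x)$ term is needed.
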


\begin{proof}
Invariance follows by exchanging the two heads and phases.  Let $x=\chi(\xi)$.
Where $\bar a_+''(\xi)>0$, the curvature satisfies
\begin{equation}
    \frac{\bar a_+''(\xi)}{2(\bar a_-'(\xi))^2}
    \leq\frac12\left(D-\frac1D\right).
    \label{eq:toy-curvature-ratio-bound}
\end{equation}
Indeed, $a_+'(x)>0$ for $x>0$, and
\begin{equation}
    \frac{\bar a_+''(\xi)}{2(\bar a_-'(\xi))^2}
    =\frac{a_+''(x)-2xa_+'(x)/(1-x^2)}{2(a_-'(x))^2}
    \leq\frac{a_+''(x)}{2(a_-'(x))^2}.
    \label{eq:toy-bias-to-preference-curvature}
\end{equation}
To verify the last bound, put $t=e^{x/2}\geq1$ and
$Q=(1+Dt)^2+(D+t)^2$.  Substituting
$a(x)=(1+De^{-x/2})^{-1}$ and clearing positive denominators gives
\begin{multline*}
 (D^2-1)tQ^2-2(D+t)(1+Dt)\\
 {}\times\bigl[(D-t)(1+Dt)^3+(Dt-1)(D+t)^3\bigr]
 =(t-1)^2(D-1)P_D(t),
\end{multline*}
where
\begin{align*}
 P_D(t)={}&(t+t^3)(1+D^5)
 +(t+8t^2+t^3)(D+D^4)\\
 &+(2+6t+4t^2+6t^3+2t^4)(D^2+D^3).
\end{align*}
Every coefficient is positive, proving
\Cref{eq:toy-curvature-ratio-bound}.

Furthermore,
\begin{equation}
    \frac{\varphi''(\bar M_\xi)}{-\varphi'(\bar M_\xi)}
    =\beta\sigma(\beta \bar M_\xi)
    \geq\beta\sigma(\beta \bar M_0)
    =\frac{\varphi''(\bar M_0)}{-\varphi'(\bar M_0)},
    \label{eq:toy-loss-curvature-monotonicity}
\end{equation}
because $\bar M_\xi=G(\chi(\xi))$ is increasing.  Combining this inequality,
\Cref{eq:toy-origin-balance-condition}, and
\Cref{eq:toy-curvature-ratio-bound} proves $\lambda_\perp(\xi)>0$ whenever
$\bar a_+''(\xi)>0$. If $\bar a_+''(\xi)\leq0$, positivity is immediate from
$\varphi'<0<\varphi''$.

Finally, $e^\xi\lambda_\perp(\xi)$ is continuous and positive for finite $\xi$, and direct
differentiation gives
\begin{equation}
    e^\xi\lambda_\perp(\xi)
    \longrightarrow 2[-\varphi'(G(1))]a_+'(1)>0.
    \label{eq:toy-weighted-curvature-limit}
\end{equation}
Compactness after adjoining $\xi=\infty$ proves
\Cref{eq:toy-uniform-transverse-balance}.
\end{proof}

We next use this curvature bound to trap trajectories in a tube around the complementary line.

\begin{lemma}[Attracting complementary tube]
\label{lem:toy-attracting-tube}
Suppose $D>1$ and \Cref{eq:toy-origin-balance-condition} holds.  For every
$\xi_*>0$, there are $\zeta_T,c_T,C_T,c_N>0$ such that, in the coordinates
\begin{equation*}
    (\theta_1,\theta_2)
    =\left(\xi+\frac{v}{\sqrt2},-\xi+\frac{v}{\sqrt2}\right),
    \qquad \xi\geq \xi_*,\quad |v|\leq\zeta_T,
\end{equation*}
the gradient flow satisfies
\begin{equation}
    c_Te^{-\xi}\leq\dot\xi\leq C_Te^{-\xi},\qquad
    v\dot v\leq-c_Ne^{-\xi}v^2.
    \label{eq:toy-tube-estimates}
\end{equation}
This tube is forward invariant, and its trajectories have
$\xi(t)\to\infty$ and $v(t)\to0$.
\end{lemma}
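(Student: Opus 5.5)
Write the toy gradient flow in the tube coordinates $(\xi,v)$ and expand it around the complementary line $v=0$. Throughout, use the head-exchange symmetry and \Cref{lem:toy-transverse-stability}. Put $\theta_1=\xi+v/\sqrt2$ and $\theta_2=-\xi+v/\sqrt2$. Then $\dot\theta_1=-\partial_{\theta_1}\mathcal L$ and $\dot\theta_2=-\partial_{\theta_2}\mathcal L$ give
\[
\dot\xi=\tfrac12(\dot\theta_1-\dot\theta_2),\qquad \dot v=\tfrac1{\sqrt2}(\dot\theta_1+\dot\theta_2).
\]
On the line $v=0$ we have $I=0$, since $\bar a_-$ is odd in $\theta$. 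The $\xi$-derivative therefore reduces to $-\varphi'(\bar M_\xi)\,\bar a_+'(\xi)$; the $I$-terms cancel because $\varphi'(M+I)-\varphi'(M-I)=0$.

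\textbf{Bounds on $\dot\xi$.} The derivative $\bar a_+'(\xi)=a_+'(\chi(\xi))\chi'(\xi)$ is positive for $\xi>0$ by \Cref{eq:toy-mean-margin-gradient}. Since $\chi'(\xi)=(1-\chi^2)/2\asymp e^{-\xi}$ for $\xi\geq\xi_*$, and $a_+'$ is bounded above and below on $[\chi(\xi_*),1]$, we get $\bar a_+'(\xi)\asymp e^{-\xi}$. The factor $-\varphi'(\bar M_\xi)$ lies between $-\varphi'(G(1))$ and $-\varphi'(\bar M_0)$. This gives $c e^{-\xi}\le\dot\xi|_{v=0}\le Ce^{-\xi}$.

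\textbf{Bound on $\dot v$.} By symmetry $\dot v=0$ on the line, and $\partial_v\dot v|_{v=0}=-\lambda_\perp(\xi)$, the Hessian eigenvalue in the common-head direction. \Cref{eq:toy-uniform-transverse-balance} gives $\lambda_\perp(\xi)\geq c_\perp e^{-\xi}$.

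The main obstacle is making both linearizations uniform in $\xi\to\infty$. A fixed-width tube is only useful if the error terms also scale like $e^{-\xi}$. To get this, I would show that every derivative of $\mathcal L$ in the $\theta$-variables that involves at least one $\theta$-derivative carries a factor of order $e^{-|\theta_h|}$, because it passes through $\sigma'(\theta_h)$ or its derivatives, which are $O(e^{-|\theta_h|})$. In the tube $|\theta_h|\geq\xi-\zeta_T$, so all second and third derivatives appearing in Taylor remainders are $O(e^{\zeta_T}e^{-\xi})$. Taylor's theorem in $v$ then gives
\[
\dot\xi=\dot\xi|_{v=0}+O(e^{-\xi}|v|),\qquad \dot v=-\lambda_\perp(\xi)v+O(e^{-\xi}v^2).
\]
Choosing $\zeta_T$ small relative to $c_\perp$ and to $c$ yields \eqref{eq:toy-tube-estimates} with $c_N=c_\perp/2$ and slightly adjusted $c_T,C_T$.

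\textbf{Conclusions.} Forward invariance of $|v|\le\zeta_T$ follows from $\frac{d}{dt}v^2\le0$. For $\xi\ge\xi_*$, $\dot\xi>0$ keeps the trajectory in the region. From $\frac{d}{dt}e^{\xi}\ge c_T$ we get $e^{\xi(t)}\ge e^{\xi(0)}+c_Tt$, so $\xi\to\infty$. Likewise $\frac{d}{dt}e^{\xi}\le C_T$ gives $e^{-\xi(t)}\ge (e^{\xi(0)}+C_Tt)^{-1}$. Hence
\[
\frac{d}{dt}\log v^2\le-2c_N(e^{\xi(0)}+C_Tt)^{-1},
\]
which integrates to a divergent logarithm, so $v^2\lesssim(1+t)^{-2c_N/C_T}\to0$.
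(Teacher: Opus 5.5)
Your proposal is correct and follows the paper's proof essentially step for step: you identify the on-line tangential speed $-\varphi'(\bar M_\xi)\bar a_+'(\xi)$ and the transverse rate $\lambda_\perp(\xi)$, use the fact that every $\theta$-derivative taken at a saturated logit carries a factor $O(e^{-\xi})$ to make the Taylor remainders uniform on the tube, and then shrink $\zeta_T$. The differences are cosmetic. You bound $e^{\xi}\dot\xi|_{v=0}$ directly on $[\xi_*,\infty)$ rather than through its limit at $\xi=\infty$. For $v\to0$, you integrate $\frac{d}{dt}\log v^2$ in time, whereas the paper bounds $\mathrm d\log|v|/\mathrm d\xi\le -c_N/C_T$.
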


\begin{proof}
The tangential speed at $v=0$ is
\begin{equation}
    \Omega(\xi)
    :=-\frac14\varphi'(G(\chi(\xi)))G'(\chi(\xi))
        \bigl(1-\chi(\xi)^2\bigr)>0.
    \label{eq:toy-tangential-speed}
\end{equation}
On the line $v=0$, $\dot\xi=\Omega(\xi)>0$, and the transverse linearization is
$\dot v=-\lambda_\perp(\xi)v$.  Since $\chi(\xi)=\tanh(\xi/2)$,
\begin{equation}
    e^\xi\Omega(\xi)\longrightarrow
    -\varphi'(G(1))G'(1)>0,
    \label{eq:toy-weighted-tangential-limit}
\end{equation}
while \Cref{eq:toy-uniform-transverse-balance} bounds
$e^\xi\lambda_\perp(\xi)$ away from zero.  Every derivative with respect to a
saturated logit carries a factor $\chi'(\xi)=O(e^{-\xi})$.  Taylor expansion in $v$,
uniform after weighting the vector field by $e^\xi$, gives
\begin{align}
    \dot\xi&=\Omega(\xi)+O(e^{-\xi}|v|),
    \label{eq:toy-tube-tangent-expansion}\\
    \dot v&=-\lambda_\perp(\xi)v+O(e^{-\xi}v^2).
    \label{eq:toy-tube-normal-expansion}
\end{align}
The weighted coefficients extend continuously to $\xi=\infty$.  Their positive
lower bounds and finite upper bounds on $[\xi_*,\infty]$ yield
\Cref{eq:toy-tube-estimates} after choosing $\zeta_T$ sufficiently small.
The sign of $v\dot v$ makes the tube forward invariant, and the tangential
bounds give $\xi(t)\to\infty$.  Moreover, whenever $v\ne0$,
$\mathrm d\log|v|/\mathrm d\xi\leq-c_N/C_T$, so $v(t)\to0$.
\end{proof}

It remains to show that trajectories starting near the uniform point enter this tube.

\begin{lemma}[Exit into the complementary tube]
\label{lem:toy-local-exit}
Under the assumptions of \Cref{lem:toy-attracting-tube}, there are
$u_*,\rho_0>0$ such that every trajectory with
$\lVert(u(0),v(0))\rVert_2\leq\rho_0$ and $u(0)\ne0$ reaches
$|u|=u_*$ within time
\begin{equation}
 \tau\leq\frac{2}{\lambda_{\mathrm{esc}}}
 \log\frac{u_*}{|u(0)|}.
 \label{eq:toy-local-exit-time}
\end{equation}
At exit, the trajectory lies in the tube from
\Cref{lem:toy-attracting-tube} with $\xi_*=u_*/\sqrt2$, after exchanging
heads if $u(0)<0$.
\end{lemma}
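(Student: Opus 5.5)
The plan is to linearize the flow at the uniform point $\bm\theta=0$, show that the antisymmetric direction $u$ is unstable while the symmetric direction $v$ is stable, and then follow the nonlinear flow out along the unstable direction until it reaches the tube of \Cref{lem:toy-attracting-tube}. First I compute the Hessian of $\mathcal L$ at the origin in the coordinates \eqref{eq:toy-common-complementary-coordinates}. Swapping the heads ($\theta_1\leftrightarrow\theta_2$) and reflecting both ($\bm\theta\mapsto-\bm\theta$) are symmetries of $\mathcal L$. Hence the origin is a critical point, and the Hessian is diagonal in $(u,v)$.

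Using \eqref{eq:toy-separated-loss}, the two curvatures come from different terms.
\begin{itemize}
\item Along $u$, we have $I=0$ to second order, so the curvature is $\varphi'(\bar M_0)\,\partial_u^2 M$. Since $\bar a_+$ is even with a positive second derivative at $0$ (by \eqref{eq:toy-mean-margin-gradient}), this is negative, and it gives the rate $-\lambda_{\mathrm{esc}}$ of \eqref{eq:toy-origin-eigenvalues} after the chain rule through $\chi'(0)=1/2$.
\item Along $v$, the curvature is $\lambda_\perp(0)$ from \eqref{eq:toy-transverse-curvature}, which is positive by \Cref{lem:toy-transverse-stability}.
\end{itemize}
So near the origin the flow obeys
\[
\dot u=\lambda_{\mathrm{esc}}u+O(\|(u,v)\|^2),\qquad \dot v=-\lambda_\perp(0)v+O(\|(u,v)\|^2).
\]
The symmetries give more: the line $u=0$ (equal heads) and the line $v=0$ (the complementary line) are both invariant. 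The quadratic remainder in $\dot u$ is therefore of the form $u\cdot O(\|(u,v)\|)$, and the remainder in $\dot v$ is of the form $v\cdot O(\|(u,v)\|)$.

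\textbf{Key step.} From this factored form, choose $u_*$ and $\rho_0$ small enough that on the box $|u|\le u_*$, $|v|\le\zeta$ the following hold:
\begin{itemize}
\item $\frac{d}{dt}\log|u|\ge\lambda_{\mathrm{esc}}/2$;
\item $v\dot v\le -\tfrac12\lambda_\perp(0)v^2$.
\end{itemize}
Here $\zeta\le\zeta_T$, with $\zeta_T$ taken from \Cref{lem:toy-attracting-tube} at $\xi_*=u_*/\sqrt2$. This choice is consistent because the constants in \Cref{lem:toy-attracting-tube} depend only on $\xi_*$, and shrinking $u_*$ only helps. The second bullet makes $|v|$ nonincreasing, so $|v(t)|\le\rho_0\le\zeta$ throughout. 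The first bullet gives $|u(t)|\ge|u(0)|e^{\lambda_{\mathrm{esc}}t/2}$, so $|u|$ reaches $u_*$ within the time \eqref{eq:toy-local-exit-time}. The trajectory cannot leave the box through $v$ first, so this exit happens through $|u|=u_*$.

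At the exit time, $\theta_1-\theta_2=\pm\sqrt2u_*$ and $|v|\le\zeta_T$. Writing $(\theta_1,\theta_2)=(\xi+v/\sqrt2,\,-\xi+v/\sqrt2)$ gives $\xi=u/\sqrt2=\pm u_*/\sqrt2$. When $u>0$ this puts the point in the tube with $\xi=\xi_*$. When $u<0$, exchanging the heads flips the sign of $u$ and reduces to the first case.

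\textbf{Main obstacle.} The delicate point is the remainder in $\dot u$: a generic $O(\|(u,v)\|^2)$ term could swamp $\lambda_{\mathrm{esc}}u$ when $|u|\ll|v|$. This is exactly why I need the invariance of the line $u=0$. That invariance comes from head exchange, which sends $u\mapsto-u$ and fixes $v$, so the $u$-component of the field is odd in $u$ and vanishes at $u=0$. By smoothness it can then be written as $u\,g(u,v)$ with $g(0,0)=\lambda_{\mathrm{esc}}$. The same argument using the reflection symmetry makes the $v$-component odd in $v$, which gives the factored $v$-remainder.
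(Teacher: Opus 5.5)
Your proposal is correct and follows the paper's proof essentially step for step. The shared steps are: invariance of the lines $u=0$ and $v=0$ from the symmetries; the factored form $\dot u=u(\lambda_{\mathrm{esc}}+O(\|(u,v)\|_2))$, $\dot v=v(-\lambda_\perp(0)+O(\|(u,v)\|_2))$; a box on which both coefficients stay within half their origin values, with its $v$-width cut down to the tube width $\zeta_T$ at $\xi_*=u_*/\sqrt2$; and integration of $\frac{\mathrm d}{\mathrm dt}\log|u|\geq\lambda_{\mathrm{esc}}/2$.

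The slips are minor. First, oddness of the $v$-component in $v$ uses the reflection composed with the head swap, i.e.\ $(u,v)\mapsto(u,-v)$ (exchanging heads and phases, as the paper does), not the reflection alone. Second, positivity of $\bar a_+''(0)=D(D-1)/[16(1+D)^3]$ comes from direct computation, not from \eqref{eq:toy-mean-margin-gradient}. Third, the tube lemma's $\zeta_T$ depends on $\xi_*=u_*/\sqrt2$, so "shrinking $u_*$ only helps" is not justified. The clean order is to fix $u_*$ first, then take $\zeta=\min\{u_*,\zeta_T\}$ and $\rho_0\leq\zeta$.
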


\begin{proof}
The symmetry lines $u=0$ and $v=0$ are invariant.  Smoothness and the two
Hessian signs at the origin give
\begin{equation}
 \dot u=u\left(\lambda_{\mathrm{esc}}
       +O(\lVert(u,v)\rVert_2)\right),\qquad
 \dot v=v\left(-\lambda_\perp(0)
       +O(\lVert(u,v)\rVert_2)\right).
 \label{eq:toy-factorized-saddle-flow}
\end{equation}
Choose $u_*>0$ small enough that the coefficients in
\Cref{eq:toy-factorized-saddle-flow} stay within half their origin
magnitudes whenever $|u|,|v|\leq u_*$.  Apply
\Cref{lem:toy-attracting-tube} at
$\xi_*=u_*/\sqrt2$ and reduce its tube width $\zeta_T$ to at most $u_*$.  Then
whenever $|u|\leq u_*$ and $|v|\leq\zeta_T$,
\begin{equation*}
 \frac{\mathrm d}{\mathrm dt}\log|u|
 \geq\frac{\lambda_{\mathrm{esc}}}{2},\qquad
\frac{\mathrm d}{\mathrm dt}|v|
 \leq-\frac{\lambda_\perp(0)}{2}|v|.
\end{equation*}
The second inequality holds for $v\ne0$, and the line $v=0$ is invariant.
Choose $\rho_0<\min\{u_*,\zeta_T\}$.  Then $|v|$ cannot reach $\zeta_T$ before
$|u|$ reaches $u_*$. Integrating the first inequality gives
\Cref{eq:toy-local-exit-time}.  At exit $\xi=|u(\tau)|/\sqrt2=\xi_*$ and
$|v(\tau)|\leq\rho_0<\zeta_T$, so the point lies in the tube.
\end{proof}

\begin{proof}[Proof of \Cref{thm:toy-specialization-formal}]
\Cref{lem:toy-global-minimizers} identifies the only global minima, while
\Cref{lem:toy-transverse-stability,lem:toy-attracting-tube,lem:toy-local-exit}
give convergence for every sufficiently small initialization off
$u=0$, which has probability zero under an absolutely continuous law.

For the rate, enclose the fixed initialization support in a ball of radius
$\rho\leq\rho_0$ and let its density be bounded by $C$.  Then
$\Pr(|u(0)|\leq t)\leq4C\rho t$.  Outside an event of probability at most
$\delta$, we have $|u(0)|>\delta/(4C\rho)$, so
\Cref{eq:toy-local-exit-time} gives tube entry in
$O(\log(1/\delta))$ time.
The normal coordinate then remains within $\zeta_T$.  To reach
$\ell_\infty$-error at most $\eta$, it suffices that
$\xi\geq\log((1-\eta)/\eta)+\zeta_T/\sqrt2$.  Integrating the tube bound
$\dot\xi\geq c_Te^{-\xi}$ up to this threshold (if it exceeds $\xi_*$)
takes at most $e^{\zeta_T/\sqrt2}/(c_T\eta)$ additional time.
This proves \Cref{eq:toy-specialization-time} for
$\delta,\eta\in(0,1/2)$, and the same tube bounds ensure convergence to the
corresponding minimizer.
\end{proof}

\subsubsection{From the Toy Example to the General Model}
\label{app:toy-to-general}

The toy example isolates two pressures, one favoring sharp heads and one favoring balanced coverage of the phases. Sharpening comes from competition with the $L-1$ distractor chunks. Because $D>1$, a uniform head gives the correct chunk less than half of its attention, and in this regime the correct-chunk attention mass rises more when the target logit increases than it falls when the logit decreases by the same amount. A head that moves toward one phase therefore gains more attention mass on that phase than it loses on the other, which raises the mean attention mass over the two phases (\Cref{eq:toy-mean-margin-gradient}). Balance comes from the loss. Both heads add evidence to the same margin for each phase, and the logistic loss flattens as a margin grows, so evidence added to the phase with the larger margin lowers the loss less than evidence added to the phase with the smaller one. For a fixed mean, the loss is therefore smallest when both phases receive the same attention mass. Both pressures are satisfied together only at the complementary endpoints, which is why these are the only global minimizers (\Cref{lem:toy-global-minimizers}).

The same two pressures govern the dynamics. At the uniform point, the two heads are identical. Along the complementary direction $(1,-1)/\sqrt2$, in which the heads move toward different phases, the two phases stay balanced, so only sharpening acts and trajectories escape at rate $\lambda_{\mathrm{esc}}$. Along the common direction $(1,1)/\sqrt2$, in which both heads move toward the same phase, sharpening and balance compete. Condition \eqref{eq:toy-origin-balance-condition} is sufficient for balance to win, which makes this direction stable and the uniform point a saddle (\Cref{lem:toy-transverse-stability}). A small random initialization therefore leaves the uniform point along the complementary direction and then stays near the complementary line.

Both pressures reappear in the general model, but they change in different ways. Sharpening survives, but it now comes from pairing keys with values rather than from the asymmetry that $D>1$ provides. Under exact geometry, a head's contribution to the target logit for a query at position $r$ is its correct-chunk attention, which grows with its key weight at $r$, times its value weight at $r+1$. Because the value weights sum to one, this contribution summed over query positions is at most the attention that a one-hot key receives, and it reaches that value only when the key coefficient is one-hot at some $r$ and the value coefficient is one-hot at $r+1$. In \Cref{sec:oracle-minimizers}, we use this bound to rule out mixed coefficients, in the role that \Cref{lem:toy-global-minimizers} plays for the toy example.

Balance, by contrast, becomes weak when the vocabulary is large. In that regime, the cross-entropy loss is nearly linear in the target logit, because its normalizer is dominated by the many vocabulary items that do not appear in the context. Under exact geometry, two heads that select the same position keep the same expected target logit and pay only through the normalizer, by an amount at most of order $1/N$. This penalty still makes complementary coverage optimal when $\varepsilon=0$ (\Cref{sec:oracle-minimizers}). Our dynamics proof, however, treats all normalizer terms as small perturbations, and the remaining target terms do not couple different heads, so the proof has no force that separates them (\Cref{sec:gradient-flow}). Unlike the toy example, \Cref{thm:local-hardening-formal} therefore guarantees persistent selection without guaranteeing complementary coverage.

%% file: Appendix/Sections/mod_sensitivity_details.tex
\section{Additional Details on the Code-Completion Examples}\label{app:v4}\label{app:v4-coordinates}

Here we give the details behind the code-completion example of \Cref{sec:v4-code}: the exact token accounting, the same example on post-trained models and on DeepSeek-V3.1-Base, which has no chunked compression, and two further code-completion examples. As in \Cref{sec:modsens}, a token at zero-based position \(t\) has phase \(t\bmod S\) for compression stride \(S\).

\subsection{FP8 Example for DeepSeek-V4-Flash-Base}
\label{app:v4-code-details}

In \Cref{sec:v4-code}, we show the code-completion example with a single docstring filler. Here we give the exact inputs needed to reproduce it, together with three further filler families that test whether the reversals depend on the filler's format or wording. Unless noted otherwise, the results are measured on DeepSeek-V4-Flash-Base using the official TileLang inference implementation with model parallelism four and FP8 expert weights.

\paragraph{Source example and reference continuation.}
The example comes from the official DeepSeek-V4 inference code,
\texttt{inference/kernel.py}\footnote{\url{https://huggingface.co/deepseek-ai/DeepSeek-V4-Flash-0731/blob/0c18385a15b3b34253d6cf7132f3f4a0442d4ea9/inference/kernel.py}},
which implements TileLang kernels and PyTorch wrappers for quantized computation.
The scored prefix contains the first 88 source lines, ending immediately after
\CodeIn{T.Cast(compute\_dtype, T.Cast(FP}.
The original source continues with \CodeIn{8}.
The preceding code supports this reference: the function is documented as
block-wise FP8 quantization, defines FP8 range and scale quantities, and sets
\CodeIn{compute\_dtype} to \CodeIn{FP32} for internal computation.
At the scored location, an inner cast to \CodeIn{FP8} followed by a cast back
to \CodeIn{compute\_dtype} is consistent with this quantization operation.
These cues motivate reference-token matching, but they do not establish that
no alternative token admits a syntactically valid continuation.

\paragraph{Exact input construction and scoring.}
Using zero-based token indices, the reference token \CodeIn{8} in the
unmodified source occurs at index 983. A filler of \(L\) tokenizer tokens
moves this token to \(t=983+L\). For \(L=24,\ldots,39\), the scored token
positions are 1007 through 1022. Each input is one contiguous raw prefix
containing the filler followed by the 983 source-code tokens preceding
\CodeIn{8}. No instruction, beginning-of-sequence token, or chat template
is added, and the target token and subsequent source code are omitted.
We read the logits at the final input position \(t-1\), immediately after
\CodeIn{FP}.

The prefix ends with \CodeIn{FP} (token ID 20199). The reference
continuation \CodeIn{8} and competing continuation \CodeIn{32} have token
IDs 26 and 2111, respectively. Thus, \CodeIn{FP8} tokenizes as
\([20199,26]\), whereas \CodeIn{FP32} tokenizes as \([20199,2111]\).
We report their next-token probabilities \(P(8)\) and \(P(32)\), together
with their difference
\[
\Delta=P(8)-P(32).
\]
These probabilities are selected from the full next-token distribution, so \(P(8)+P(32)\) need not equal one. The reported input-shift coordinate is filler length \(L\), distinct from the scored-token position \(t\). The filler shifts both the source-code prefix and the scoring position, so this test does not isolate the position of an individual source token.

\paragraph{Filler constructions.}
Family A (the filler shown in \Cref{sec:v4-code}) prepends a triple-quoted Python docstring. At a filler length of 24 tokens, its exact text is
\begin{quote}
\small\ttfamily
\char34\char34\char34This module contains optimized tensor kernels used by the model inference runtime.\\
\mbox{\ }= = = = = = = = =\\
\char34\char34\char34
\end{quote}
The second line initially contains nine space-separated \CodeIn{=} characters. Each successive variant appends one additional space-separated \CodeIn{=}, ending with 24 such characters. The total filler lengths are therefore \(L=24,\ldots,39\), giving 16 prompts. The source-code tokens following the filler remain unchanged.

The other families are B, C, and D. As summarized in
\Cref{tab:v4-filler-constructions}, families A, B, and C keep their
natural-language sentence fixed throughout the sweep and add one
punctuation token at each successive length. These families differ in whether
the filler is a Python comment or docstring, in its description of the source
file, and in the repeated separator. Family D instead realizes the 16 token
counts using semantically similar but lexically distinct single-line
comments, without a repeated-symbol padding line. Thus, A/B/C provide
content-matched position shifts within each family, whereas D additionally
tests robustness to natural-language rephrasing.

\begin{table}[tb]
\centering
\caption{Filler-family constructions. Quoted sentences are reproduced verbatim. Repeated symbols are separated by spaces in the actual inputs.}
\label{tab:v4-filler-constructions}
\small
\begin{tabular}{@{}lp{0.37\linewidth}p{0.48\linewidth}@{}}
\toprule
Family & Fixed text or 24-token endpoint & Length variation \\
\midrule
A &
Triple-quoted docstring: ``This module contains optimized tensor kernels used by the model inference runtime.'' &
The next docstring line contains 9--24 \CodeIn{=} characters before the closing triple quotes. \\

B &
Comment: ``This module contains optimized tensor kernels used by the model inference runtime.'' &
A second comment line contains 9--24 \CodeIn{=} characters. \\

C &
Comment: ``Internal model inference kernel implementations.'' &
A second comment line contains 16--31 \CodeIn{*} characters, with one added at each successive length. \\

D &
At 24 tokens: ``This file defines TileLang kernels for block-wise FP8 quantization and the related low-precision operations for model inference.'' &
One natural-language comment is used at each length, with no padding line. \\
\bottomrule
\end{tabular}
\end{table}

\paragraph{Results across filler families.}
In \Cref{tab:v4-filler-family-results}, we report the probability difference \(\Delta\) for all 16 lengths and all four filler families, next to the post-trained DeepSeek-V4-Flash-0731 results discussed in \Cref{app:v4-code-posttrained}. Positive values favor the reference continuation \CodeIn{8}, and negative values favor \CodeIn{32}. No ties occur in these measurements.

\begin{table}[tb]
\centering
\caption{DeepSeek-V4-Flash-Base (left) and DeepSeek-V4-Flash-0731 (right) results on the FP8 example. Each entry is \(\Delta=P(8)-P(32)\), computed from the reported probabilities and rounded to three decimal places. Green marks \(\Delta>0\), where the reference \CodeIn{8} is ranked above \CodeIn{32}, and red marks \(\Delta<0\). ``Tokens'' is the filler length, and thin rules separate groups of four lengths, matching the stride of four. DeepSeek-V4-Flash-Base receives the raw prefix with fillers of 24--39 tokens, and DeepSeek-V4-Flash-0731 receives the chat-formatted prompt of \Cref{app:v4-code-posttrained} with fillers of 23--38 tokens.}
\label{tab:v4-filler-family-results}
\small
\setlength{\tabcolsep}{3.5pt}
\renewcommand{\arraystretch}{1.05}
\begin{tabular}{@{}rrrrr@{\hspace{14pt}}c@{\hspace{14pt}}rrrrr@{}}
\toprule
\multicolumn{5}{c}{DeepSeek-V4-Flash-Base (raw prefix)} && \multicolumn{5}{c}{DeepSeek-V4-Flash-0731 (chat prompt)} \\
\cmidrule(lr){1-5}\cmidrule(lr){7-11}
Tokens & A & B & C & D && Tokens & A & B & C & D \\
\midrule
24 & \WrongOrder{-0.786} & \WrongOrder{-0.791} & \WrongOrder{-0.007} & \CorrectOrder{+0.060} && 23 & \WrongOrder{-0.910} & \WrongOrder{-0.775} & \WrongOrder{-0.364} & \WrongOrder{-0.789} \\
25 & \WrongOrder{-0.636} & \WrongOrder{-0.383} & \WrongOrder{-0.415} & \WrongOrder{-0.484} && 24 & \CorrectOrder{+0.615} & \CorrectOrder{+0.451} & \CorrectOrder{+0.644} & \WrongOrder{-0.829} \\
26 & \CorrectOrder{+0.586} & \CorrectOrder{+0.751} & \CorrectOrder{+0.845} & \CorrectOrder{+0.614} && 25 & \WrongOrder{-0.831} & \WrongOrder{-0.854} & \WrongOrder{-0.822} & \WrongOrder{-0.989} \\
27 & \CorrectOrder{+0.914} & \CorrectOrder{+0.862} & \CorrectOrder{+0.809} & \CorrectOrder{+0.681} && 26 & \WrongOrder{-0.936} & \WrongOrder{-0.939} & \WrongOrder{-0.824} & \WrongOrder{-0.875} \\
\cmidrule(lr){1-5}\cmidrule(lr){7-11}
28 & \WrongOrder{-0.248} & \WrongOrder{-0.410} & \WrongOrder{-0.354} & \WrongOrder{-0.337} && 27 & \WrongOrder{-0.679} & \WrongOrder{-0.668} & \WrongOrder{-0.266} & \WrongOrder{-0.151} \\
29 & \WrongOrder{-0.043} & \CorrectOrder{+0.119} & \WrongOrder{-0.533} & \WrongOrder{-0.764} && 28 & \CorrectOrder{+0.860} & \CorrectOrder{+0.073} & \CorrectOrder{+0.241} & \WrongOrder{-0.420} \\
30 & \CorrectOrder{+0.826} & \CorrectOrder{+0.808} & \CorrectOrder{+0.724} & \CorrectOrder{+0.696} && 29 & \WrongOrder{-0.901} & \WrongOrder{-0.238} & \WrongOrder{-0.713} & \WrongOrder{-0.813} \\
31 & \CorrectOrder{+0.927} & \CorrectOrder{+0.933} & \CorrectOrder{+0.822} & \CorrectOrder{+0.737} && 30 & \WrongOrder{-0.960} & \WrongOrder{-0.954} & \WrongOrder{-0.803} & \WrongOrder{-0.608} \\
\cmidrule(lr){1-5}\cmidrule(lr){7-11}
32 & \WrongOrder{-0.219} & \WrongOrder{-0.660} & \WrongOrder{-0.119} & \WrongOrder{-0.129} && 31 & \WrongOrder{-0.882} & \WrongOrder{-0.159} & \WrongOrder{-0.160} & \WrongOrder{-0.715} \\
33 & \WrongOrder{-0.644} & \WrongOrder{-0.276} & \WrongOrder{-0.118} & \WrongOrder{-0.624} && 32 & \CorrectOrder{+0.282} & \WrongOrder{-0.384} & \CorrectOrder{+0.822} & \WrongOrder{-0.566} \\
34 & \CorrectOrder{+0.746} & \CorrectOrder{+0.806} & \CorrectOrder{+0.543} & \CorrectOrder{+0.524} && 33 & \WrongOrder{-0.917} & \WrongOrder{-0.828} & \WrongOrder{-0.874} & \WrongOrder{-0.946} \\
35 & \CorrectOrder{+0.904} & \CorrectOrder{+0.829} & \CorrectOrder{+0.895} & \CorrectOrder{+0.653} && 34 & \WrongOrder{-0.968} & \WrongOrder{-0.940} & \WrongOrder{-0.786} & \WrongOrder{-0.968} \\
\cmidrule(lr){1-5}\cmidrule(lr){7-11}
36 & \WrongOrder{-0.468} & \WrongOrder{-0.503} & \CorrectOrder{+0.527} & \CorrectOrder{+0.249} && 35 & \WrongOrder{-0.869} & \WrongOrder{-0.438} & \WrongOrder{-0.826} & \WrongOrder{-0.787} \\
37 & \WrongOrder{-0.553} & \WrongOrder{-0.198} & \WrongOrder{-0.510} & \WrongOrder{-0.532} && 36 & \CorrectOrder{+0.706} & \CorrectOrder{+0.597} & \CorrectOrder{+0.715} & \WrongOrder{-0.895} \\
38 & \CorrectOrder{+0.887} & \CorrectOrder{+0.852} & \CorrectOrder{+0.768} & \CorrectOrder{+0.591} && 37 & \WrongOrder{-0.945} & \WrongOrder{-0.824} & \WrongOrder{-0.834} & \WrongOrder{-0.892} \\
39 & \CorrectOrder{+0.948} & \CorrectOrder{+0.953} & \CorrectOrder{+0.866} & \CorrectOrder{+0.790} && 38 & \WrongOrder{-0.903} & \WrongOrder{-0.790} & \WrongOrder{-0.770} & \WrongOrder{-0.880} \\
\bottomrule
\end{tabular}
\end{table}

Family A exhibits an exact four-token pattern: \(\Delta>0\) at
all lengths with \(L\bmod4\in\{2,3\}\), and \(\Delta<0\) at the remaining
lengths. The full-vocabulary top-1
prediction is \CodeIn{8} when \(\Delta>0\) and \CodeIn{32} when
\(\Delta<0\). Since \(t=983+L\), filler residues
\(L\bmod4\in\{2,3\}\) correspond to scored-token residues
\(t\bmod4\in\{1,2\}\). Across Family A's 16 lengths, the mean combined
probability \(P(8)+P(32)\) is 0.982, while \(\Delta\) ranges from
\(-0.786\) to \(+0.948\).

At lengths satisfying \(L\bmod4\in\{2,3\}\), all 32 measurements across
the four families favor \CodeIn{8}. At the remaining lengths, 28 of 32
favor \CodeIn{32}, and four favor \CodeIn{8}, with no ties.
These four exceptions are Family B at \(L=29\), Family C at \(L=36\),
and Family D at \(L=24\) and \(L=36\), with probability differences
\(+0.119\), \(+0.527\), \(+0.060\), and \(+0.249\), respectively. Across the complete sweep, \(\Delta\) ranges
from \(-0.786\) to \(+0.948\) for A, from \(-0.791\) to \(+0.953\) for B,
from \(-0.533\) to \(+0.895\) for C, and from \(-0.764\) to \(+0.790\) for D.

\input{Appendix/Sections/posttrained_code_completion}

\subsection{Additional Code-Completion Examples}\label{app:v4-code-additional}
To show that the reversals are not peculiar to the FP8 example, we give two further examples in which the same kind of filler sweep produces top-1 reversals, periodic for most of the evaluated models: a shell test from the Omarchy GitHub repository and a SQL task adapted from the \texttt{gretelai/synthetic\_text\_to\_sql} dataset. We evaluate DeepSeek-V4-Flash-Base, DeepSeek-V4-Flash-0731, and DeepSeek-V4.1-Flash.

\subsubsection{Omarchy Lid-Close Test}
\label{app:omarchy-lid-close}

Omarchy is a Linux distribution whose system utilities handle desktop
configuration and hardware events. This example comes from
\texttt{test/shell.d/lid-close-test.sh}, which tests the response to closing
a laptop lid.\footnote{\href{https://github.com/omacom/omarchy/blob/f99d33a8ddee7b36509a71a6d20d5d23355ce8b1/test/shell.d/lid-close-test.sh}{Full source file: \texttt{test/shell.d/lid-close-test.sh}}, commit \texttt{f99d33a8}.}
The test distinguishes an undocked laptop, which should lock before
suspending, from a laptop connected to an external display, which should
remain usable in clamshell mode.

The excerpt below retains the scenario setup and the cues preceding the
completion point. Ellipses mark omitted code or comment text, and
\texttt{[CURSOR]} denotes the next-token prediction position.
\begin{quote}
\footnotesize
\begin{verbatim}
setup_scenario() {
  scenario_dir="$tmpdir/$1"
  # ...
  local closed="$2" docked="$3"

  cat >"$mock_bin/omarchy-hw-laptop-closed" <<SH
#!/bin/bash
exit $closed
SH
  # ...
}

# An undocked lid close is about to suspend ...
setup_scenario undocked 0 1
# ...

# A docked lid close is clamshell mode:
# logind leaves the machine awake and the session
# stays in use on the external display,
# so locking it would be wrong.
setup_scenario docked [CURSOR]
\end{verbatim}
\end{quote}

The first argument names the scenario. The second controls the exit
status of the mocked lid-closed detector, and the third controls the
external-display detector. Shell status \texttt{0} indicates that the
detected condition holds. Thus, \texttt{setup\_scenario undocked 0 1}
represents a closed lid without an external display. The docked scenario
also explicitly describes a closed lid, so the next token should again
be \texttt{0}. Choosing \texttt{1} would instead simulate a lid that is
not closed, contradicting the scenario being tested. The comment about
avoiding screen locking specifies the expected response to this state,
whereas the argument being completed specifies the lid state itself.

We vary a natural-language filler over 16 consecutive
lengths, from 24 to 39 tokens under the DeepSeek-V4 tokenizer, while keeping the
source prefix fixed.
We report the next-token predictions in increasing filler-length order in \Cref{tab:omarchy-lid-close}.

\begin{table}[tb]
\centering
\caption{Omarchy lid-close completion across 16 filler variants.
The reference token is \texttt{0}, and every mismatch predicts \texttt{1}.
Green marks correct predictions and red marks mismatches. Spaces group consecutive predictions for readability.}
\label{tab:omarchy-lid-close}
\small
\setlength{\tabcolsep}{4pt}
\begin{tabular}{@{}lcrr@{}}
\toprule
Model & Top-1 sequence &
\shortstack{Mismatches/16} & $P(0)$ range \\
\midrule
DeepSeek-V4-Flash-Base (TileLang) &
\SeqOk{0}\SeqOk{0}\SeqOk{0}\SeqOk{0} \SeqOk{0}\SeqOk{0}\SeqOk{0}\SeqBad{1} \SeqOk{0}\SeqOk{0}\SeqOk{0}\SeqBad{1} \SeqOk{0}\SeqOk{0}\SeqOk{0}\SeqBad{1} & 3 & 0.175--0.951 \\
DeepSeek-V4-Flash-0731 &
\SeqBad{1}\SeqBad{1}\SeqOk{0}\SeqOk{0} \SeqBad{1}\SeqBad{1}\SeqOk{0}\SeqOk{0} \SeqBad{1}\SeqBad{1}\SeqOk{0}\SeqOk{0} \SeqBad{1}\SeqBad{1}\SeqOk{0}\SeqOk{0} & 8 & 0.270--0.725 \\
DeepSeek-V4.1-Flash &
\SeqOk{0}\SeqOk{0}\SeqOk{0}\SeqOk{0} \SeqOk{0}\SeqOk{0}\SeqOk{0}\SeqOk{0} \SeqBad{1}\SeqOk{0}\SeqBad{1}\SeqOk{0} \SeqBad{1}\SeqOk{0}\SeqBad{1}\SeqOk{0} & 4 & 0.378--0.867 \\
\bottomrule
\end{tabular}
\end{table}

DeepSeek-V4-Flash-0731 exhibits an exact four-position cycle,
alternating between two incorrect and two correct predictions.
DeepSeek-V4-Flash-Base makes errors at inputs 8, 12, and 16, while
DeepSeek-V4.1-Flash alternates between incorrect and
correct predictions over the final eight inputs.

\subsubsection{SQL Station-Distance Query}
\label{app:sql-station-distance}

This example is adapted from test record 2113 of
\texttt{gretelai/synthetic\_text\_to\_sql}, a dataset pairing natural-language
questions with database contexts and reference SQL queries.\footnote{\href{https://huggingface.co/datasets/gretelai/synthetic_text_to_sql/blob/740ab236e64503fba51be1101df7a1be83bf455d/synthetic_text_to_sql_test.snappy.parquet}{Full source file: \texttt{synthetic\_text\_to\_sql\_test.snappy.parquet}}, row 2112 (record 2113), revision \texttt{740ab23}.}
The original task asks for the distance between stations 5 and 12.
Its context supplies a table containing the two stations' coordinates,
and its reference query computes their spherical distance.

We retain the original schema, seed rows, and reference query, and adapt
the example into a constrained SQL replay task in which a recorded output
determines which operator is correct. We construct a contrasting
query by replacing one \texttt{+} in the distance formula with \texttt{-},
and include this complete query together with its recorded output.
The reference query is then presented as \texttt{q\_replay}, with its
recorded output marked authoritative, and truncated immediately before
the same arithmetic operator. The task requires choosing \texttt{+} or
\texttt{-} to reproduce that output without changing the remaining SQL.

The excerpt below shows the seed data, contrasting query, and completion
point. Line breaks are adjusted for readability, ellipses mark omitted
text, and \texttt{[CURSOR]} denotes the prediction position.
\begin{quote}
\footnotesize
\begin{verbatim}
-- Source request: What is the distance between
-- station 5 and station 12?
-- At the next arithmetic slot, write exactly + or -;
-- only one reproduces q_replay.

CREATE TABLE stations (...);
INSERT INTO stations
  (station_id, name, latitude, longitude) VALUES
  (5, 'Station 5', 40.712776, -74.005974),
  (12, 'Station 12', 40.718261, -74.004790);

-- q_control captured row multiset (different label):
-- [[5626.688008276917]]
-- q_control:
SELECT 3959 * acos(
  ... - sin(radians(stations.latitude))
      * sin(radians((
          SELECT latitude FROM stations
          WHERE station_id = 12))))
  as distance FROM stations WHERE station_id = 5;

-- q_replay captured row multiset (authoritative):
-- [[0.38403957581396597]]
-- q_replay:
SELECT 3959 * acos(
  cos(radians(stations.latitude))
  * cos(radians((
      SELECT latitude FROM stations
      WHERE station_id = 12)))
  * cos(radians(stations.longitude)
        - radians((
            SELECT longitude FROM stations
            WHERE station_id = 12)))
  [CURSOR]
\end{verbatim}
\end{quote}

The reference continuation is \texttt{+}. It adds the product of the
latitude sine terms to the preceding cosine product, yielding a distance
of approximately 0.38404 miles. Substituting \texttt{-} instead yields
approximately 5626.68801 miles, matching the contrasting query's output rather than
the authoritative replay output. Thus, both operators admit executable
SQL, but only \texttt{+} satisfies the stated replay constraint.
The complete contrasting query supplies a competing local pattern, while
the output labels explicitly distinguish its result from the required one.

In \Cref{tab:sql-station-distance}, we report results for fillers of 24--39 tokens written as SQL comments. DeepSeek-V4.1-Flash alternates
exactly between \texttt{+} and \texttt{-} across all 16 inputs.
DeepSeek-V4-Flash-Base and DeepSeek-V4-Flash-0731 also switch between the
two operators, with 2 and 11 mismatches, respectively.

\begin{table}[tb]
\centering
\caption{Station-distance completion across 16 filler variants,
ordered by increasing filler length. The reference token is
\texttt{+}, and every mismatch predicts \texttt{-}. Green marks correct
predictions and red marks mismatches. Spaces group
consecutive predictions for readability.}
\label{tab:sql-station-distance}
\small
\setlength{\tabcolsep}{4pt}
\begin{tabular}{@{}lcrr@{}}
\toprule
Model & Top-1 sequence &
\shortstack{Mismatches/16} & $P(+)$ range \\
\midrule
DeepSeek-V4-Flash-Base (TileLang) &
\SeqOk{+}\SeqOk{+}\SeqOk{+}\SeqOk{+} \SeqBad{-}\SeqOk{+}\SeqOk{+}\SeqOk{+} \SeqOk{+}\SeqOk{+}\SeqOk{+}\SeqOk{+} \SeqOk{+}\SeqBad{-}\SeqOk{+}\SeqOk{+} & 2 & 0.329--0.569 \\
DeepSeek-V4-Flash-0731 &
\SeqBad{-}\SeqBad{-}\SeqBad{-}\SeqOk{+} \SeqBad{-}\SeqBad{-}\SeqOk{+}\SeqBad{-} \SeqBad{-}\SeqBad{-}\SeqOk{+}\SeqBad{-} \SeqBad{-}\SeqBad{-}\SeqOk{+}\SeqOk{+} & 11 & 0.123--0.638 \\
DeepSeek-V4.1-Flash &
\SeqOk{+}\SeqBad{-}\SeqOk{+}\SeqBad{-} \SeqOk{+}\SeqBad{-}\SeqOk{+}\SeqBad{-} \SeqOk{+}\SeqBad{-}\SeqOk{+}\SeqBad{-} \SeqOk{+}\SeqBad{-}\SeqOk{+}\SeqBad{-} & 8 & 0.119--0.724 \\
\bottomrule
\end{tabular}
\end{table}

%% file: Appendix/Sections/posttrained_code_completion.tex
\subsection{FP8 Example for Other DeepSeek Models}
\label{app:v4-code-posttrained}

To test whether preference reversals persist under chat prompting, we
evaluate the post-trained DeepSeek-V4-Flash-0731 and DeepSeek-V4.1-Flash
models on the FP8 example in \Cref{app:v4-code-details}. We retain the
same source-code prefix and reference continuation, and give both models
identical chat-formatted causal prefixes.\footnote{DeepSeek-V4-Flash-0731 uses the official TileLang
implementation with model parallelism four in mixed precision, with FP4 (MXFP4) routed-expert weights and FP8 for most other weights.
DeepSeek-V4.1-Flash uses vLLM with tensor parallelism eight and expert parallelism.
The comparison therefore includes differences in inference backend.}
To check whether the reversals depend on chunked compression, we also evaluate DeepSeek-V3.1-Base~\citep{deepseekv3}, which has none.

\paragraph{Chat-formatted inputs.}
We disable thinking and place the following instruction at the start of
the user message:
\begin{quote}
\small
Continue the function without changing its block-wise quantization semantics; avoid a redundant nested cast
\end{quote}
The instruction is followed by the filler and then the first 974 tokens of
the source prefix, with blank lines separating these components. The
remaining nine source tokens, \CodeIn{compute\_dtype, T.Cast(FP}, form the
assistant prefill. This preserves the completion point in
\Cref{app:v4-code-details} while presenting the task as a user request:
\begin{quote}
\footnotesize
\begin{verbatim}
[BOS][USER]{instruction}\n\n{filler}\n\n{code head}
[ASSISTANT]</think>compute_dtype, T.Cast(FP
\end{verbatim}
\end{quote}
The two displayed lines are concatenated. \CodeIn{\textbackslash n} denotes
a newline, and the bracketed role labels stand for the template's special
tokens. We score the next token after the supplied assistant prefill.
For filler length \(L\), the scored position is \(t=1004+L\), including
the chat-template tokens.

\paragraph{Filler families.}
We use families A, B, C, and D from \Cref{app:v4-code-details}, now with
\(L=23,\ldots,38\) tokens, giving 16 variants per family
(\Cref{tab:v4-posttrained-fillers}). The instruction and source code remain
fixed throughout.

\begin{table}[tb]
\centering
\caption{Filler constructions for the post-trained models. The fixed
sentences and formats follow \Cref{tab:v4-filler-constructions}, and each
family spans filler lengths \(L=23,\ldots,38\).}
\label{tab:v4-posttrained-fillers}
\small
\begin{tabular}{@{}lp{0.32\linewidth}p{0.53\linewidth}@{}}
\toprule
Family & Format & Length variation \\
\midrule
A & Triple-quoted docstring & 8--23 space-separated \CodeIn{=} characters on the second line. \\
B & Two-line Python comment & 8--23 space-separated \CodeIn{=} characters on the second line. \\
C & Shorter two-line comment & 15--30 space-separated \CodeIn{*} characters on the second line. \\
D & Single-line comment & 16 natural-language rephrasings, with no repeated-symbol line. \\
\bottomrule
\end{tabular}
\end{table}

\paragraph{Results.}
We report \(\Delta=P(8)-P(32)\) in \Cref{tab:v4-filler-family-results} (right) and \Cref{tab:v41-v31-probabilities} (left), in the same format as the base-model results.
Across these runs, the full-vocabulary top-1 prediction is always either
\CodeIn{8} or \CodeIn{32}, so the sign of \(\Delta\) also determines
whether the reference token is selected.

DeepSeek-V4-Flash-0731 exhibits a four-token cycle for A and C: it selects
\CodeIn{8} exactly when \(L\bmod4=0\), giving 4/16 correct predictions
(25\%) in each family. B follows the same pattern except at \(L=32\),
where it selects \CodeIn{32}, reducing accuracy to 3/16 (18.75\%).
With D, it selects \CodeIn{32} at every length.

DeepSeek-V4.1-Flash instead alternates between the two continuations for A, B,
and C, selecting \CodeIn{8} at odd filler lengths and \CodeIn{32} at
even lengths. Each family therefore gives 8/16 correct predictions (50\%).
For D, the model selects \CodeIn{8} at all seven lengths from 23 to 29,
then follows the same odd--even alternation, yielding 11/16 (68.75\%).
These periods describe the top-1 decisions over the measured range,
and the probability differences also vary within each residue group.

The reversals can be large. Increasing the A filler from 23 to
24 tokens changes \(\Delta\) from \(-0.910\) to \(+0.615\) in
DeepSeek-V4-Flash-0731, while the same change moves DeepSeek-V4.1-Flash from \(+0.124\)
to \(-0.555\). Thus, the sensitivity seen in the base-model example
persists with post-trained models and chat-formatted inputs, although
its period and dependence on filler content differ across models.
The reported accuracies summarize shifted variants of this example under
the selected prompt, rather than performance on a code-completion benchmark.

\begin{table}[tb]
\centering
\caption{DeepSeek-V4.1-Flash (left) and DeepSeek-V3.1-Base (right) results on the FP8 example. Entries, shading, and ``Tokens'' follow \Cref{tab:v4-filler-family-results}. DeepSeek-V4.1-Flash receives the chat-formatted prompt with fillers of 23--38 tokens, and thin rules separate pairs of lengths, matching its stride of two. DeepSeek-V3.1-Base receives the same raw-prefix inputs as DeepSeek-V4-Flash-Base, grouped in fours as in \Cref{tab:v4-filler-family-results}.}
\label{tab:v41-v31-probabilities}
\small
\setlength{\tabcolsep}{3.5pt}
\renewcommand{\arraystretch}{1.05}
\begin{tabular}{@{}rrrrr@{\hspace{14pt}}c@{\hspace{14pt}}rrrrr@{}}
\toprule
\multicolumn{5}{c}{DeepSeek-V4.1-Flash (chat prompt)} && \multicolumn{5}{c}{DeepSeek-V3.1-Base (raw prefix)} \\
\cmidrule(lr){1-5}\cmidrule(lr){7-11}
Tokens & A & B & C & D && Tokens & A & B & C & D \\
\midrule
23 & \CorrectOrder{+0.124} & \CorrectOrder{+0.245} & \CorrectOrder{+0.358} & \CorrectOrder{+0.555} && 24 & \CorrectOrder{+0.316} & \CorrectOrder{+0.362} & \CorrectOrder{+0.355} & \CorrectOrder{+0.242} \\
24 & \WrongOrder{-0.555} & \WrongOrder{-0.555} & \WrongOrder{-0.462} & \CorrectOrder{+0.124} && 25 & \CorrectOrder{+0.477} & \CorrectOrder{+0.315} & \CorrectOrder{+0.359} & \WrongOrder{-0.003} \\
\cmidrule(lr){1-5}
25 & \CorrectOrder{+0.124} & \CorrectOrder{+0.358} & \CorrectOrder{+0.462} & \CorrectOrder{+0.462} && 26 & \CorrectOrder{+0.600} & \CorrectOrder{+0.316} & \CorrectOrder{+0.348} & \CorrectOrder{+0.203} \\
26 & \WrongOrder{-0.555} & \WrongOrder{-0.635} & \WrongOrder{-0.124} & \CorrectOrder{+0.124} && 27 & \CorrectOrder{+0.274} & \CorrectOrder{+0.486} & \WrongOrder{-0.026} & \CorrectOrder{+0.140} \\
\cmidrule(lr){1-5}\cmidrule(lr){7-11}
27 & \CorrectOrder{+0.358} & \CorrectOrder{+0.555} & \CorrectOrder{+0.462} & \CorrectOrder{+0.245} && 28 & \CorrectOrder{+0.430} & \CorrectOrder{+0.363} & \CorrectOrder{+0.303} & \CorrectOrder{+0.011} \\
28 & \WrongOrder{-0.555} & \WrongOrder{-0.124} & \WrongOrder{-0.635} & \CorrectOrder{+0.124} && 29 & \CorrectOrder{+0.303} & \CorrectOrder{+0.259} & \CorrectOrder{+0.273} & \CorrectOrder{+0.463} \\
\cmidrule(lr){1-5}
29 & \CorrectOrder{+0.358} & \CorrectOrder{+0.462} & \CorrectOrder{+0.124} & \CorrectOrder{+0.124} && 30 & \CorrectOrder{+0.240} & \CorrectOrder{+0.062} & \CorrectOrder{+0.081} & \CorrectOrder{+0.140} \\
30 & \WrongOrder{-0.462} & \WrongOrder{-0.555} & \WrongOrder{-0.245} & \WrongOrder{-0.245} && 31 & \CorrectOrder{+0.392} & \CorrectOrder{+0.316} & \CorrectOrder{+0.464} & \CorrectOrder{+0.236} \\
\cmidrule(lr){1-5}\cmidrule(lr){7-11}
31 & \CorrectOrder{+0.358} & \CorrectOrder{+0.358} & \CorrectOrder{+0.358} & \CorrectOrder{+0.555} && 32 & \CorrectOrder{+0.346} & \CorrectOrder{+0.539} & \CorrectOrder{+0.026} & \WrongOrder{-0.054} \\
32 & \WrongOrder{-0.635} & \WrongOrder{-0.704} & \WrongOrder{-0.462} & \WrongOrder{-0.245} && 33 & \WrongOrder{-0.065} & \CorrectOrder{+0.284} & \CorrectOrder{+0.139} & \CorrectOrder{+0.478} \\
\cmidrule(lr){1-5}
33 & \CorrectOrder{+0.462} & \CorrectOrder{+0.245} & \CorrectOrder{+0.462} & \CorrectOrder{+0.124} && 34 & \CorrectOrder{+0.139} & \CorrectOrder{+0.146} & \CorrectOrder{+0.109} & \CorrectOrder{+0.325} \\
34 & \WrongOrder{-0.555} & \WrongOrder{-0.462} & \WrongOrder{-0.245} & \WrongOrder{-0.462} && 35 & \CorrectOrder{+0.398} & \CorrectOrder{+0.453} & \CorrectOrder{+0.062} & \CorrectOrder{+0.017} \\
\cmidrule(lr){1-5}\cmidrule(lr){7-11}
35 & \CorrectOrder{+0.555} & \CorrectOrder{+0.555} & \CorrectOrder{+0.358} & \CorrectOrder{+0.734} && 36 & \CorrectOrder{+0.166} & \CorrectOrder{+0.081} & \CorrectOrder{+0.276} & \CorrectOrder{+0.370} \\
36 & \WrongOrder{-0.635} & \WrongOrder{-0.555} & \WrongOrder{-0.245} & \WrongOrder{-0.358} && 37 & \CorrectOrder{+0.388} & \CorrectOrder{+0.498} & \CorrectOrder{+0.027} & \CorrectOrder{+0.347} \\
\cmidrule(lr){1-5}
37 & \CorrectOrder{+0.462} & \CorrectOrder{+0.635} & \CorrectOrder{+0.462} & \CorrectOrder{+0.358} && 38 & \CorrectOrder{+0.397} & \CorrectOrder{+0.324} & \CorrectOrder{+0.396} & \CorrectOrder{+0.143} \\
38 & \WrongOrder{-0.635} & \WrongOrder{-0.635} & \WrongOrder{-0.245} & \WrongOrder{-0.358} && 39 & \CorrectOrder{+0.409} & \CorrectOrder{+0.197} & \CorrectOrder{+0.404} & \CorrectOrder{+0.342} \\
\bottomrule
\end{tabular}
\end{table}
\FloatBarrier

\paragraph{A model without chunked compression.}
DeepSeek-V3.1-Base compresses each token's keys and values into a low-rank latent vector, but it does not pool tokens into compressed entries, so its attention has no stride.\footnote{DeepSeek-V3.1-Base (revision \texttt{d3d4eafd}) uses the official DeepSeek-V3 inference implementation with model parallelism eight and FP8 weights, because its release does not include inference code of its own. The comparison with DeepSeek-V4-Flash-Base therefore also differs in inference backend. With the implementation's BF16 output projection, 12 of the 64 comparisons between \CodeIn{8} and \CodeIn{32} were exact ties, so we compute only the final vocabulary projection in FP32 and leave the rest of the model unchanged. The BF16 outputs replicated exactly on a second machine, and the FP32 results come from a single run.}
The two models share a tokenizer, so we give DeepSeek-V3.1-Base the raw-prefix inputs of \Cref{app:v4-code-details}, token-for-token identical to those of DeepSeek-V4-Flash-Base.
In \Cref{tab:v41-v31-probabilities} (right), we see no periodic reversal. DeepSeek-V3.1-Base ranks \CodeIn{32} above \CodeIn{8} at only 4 of the 64 inputs, each with \(|\Delta|<0.07\), compared with 28 for DeepSeek-V4-Flash-Base.
The difference lies in the dependence on position rather than in overall confidence. In family A, the mean of \(P(8)\) is 0.609 for DeepSeek-V3.1-Base and 0.589 for DeepSeek-V4-Flash-Base, but only DeepSeek-V4-Flash-Base changes its preference systematically with the scored position. Across the four families, it errs at 15 of 16 inputs with \(t\bmod4=0\) and 13 of 16 with \(t\bmod4=3\), whereas the four errors of DeepSeek-V3.1-Base fall on three different residues.
This contrast is consistent with chunked compression being the source of the sensitivity, although the two models also differ in architecture, training, and inference backend.

%% file: Appendix/Sections/small_model_niah.tex
\section{Additional Details on the Needle-in-a-Haystack Evaluations}
\label{app:niah}

Here we describe the needle-in-a-haystack evaluations of \Cref{sec:v4-niah} and \Cref{sec:controlled}. We first give the construction and per-residue results for the DeepSeek models, and then the retrieval task and the two padding protocols that we use for our pretrained models.

\subsection{NIAH Evaluation for the DeepSeek-V4 Family}
\label{app:v4-niah-details}
Here we give the setup and per-residue results behind \Cref{sec:v4-niah}. As in \Cref{sec:modsens}, a token at zero-based position \(t\) has phase \(t\bmod S\), and the modulo-eight residues that we use to report the results are distinct from this phase. The construction shifts the target's phase across residue groups while fixing the prompt length, the query position, and the mean and variance of the target position, so that the groups differ in phase but not in these position statistics.

\paragraph{Position coordinate.}
DeepSeek-V4's CSA compression kernel uses eight-token windows at stride four,
\(W=8,S=4\)~\citep{deepseek2026v4}. We sample eight source-key position
residues \(r=t_K\bmod8\), where \(t_K\) is the target key's zero-based token
index, and call the prompts sharing a residue a residue group. For the DeepSeek-V4 models,
the eight residue groups cover two cycles of the native CSA phase \(t_K\bmod4\).
DeepSeek-V4.1-Flash compresses with stride two~\citep{deepseek2026v41}, so the eight residue groups cover
four phase cycles. The modulus eight defines the residue groups and does not
denote a compression stride of eight.

\paragraph{Data generation.}
The construction below describes the 2,048-prompt set shared by the four
DeepSeek-V4 variants. The DeepSeek-V4.1-Flash evaluation draws ten times as
many prompts from the same distribution, using ten times as many bindings,
where a binding is one fixed set of 16,000 key--value pairs. This gives
2,560 prompts per residue group, or 20,480 in total
(\Cref{tab:v4-niah-residue-accuracy}).
Each prompt contains a sequence of key--value records followed by a query asking for the value associated with one target key.
Each prompt contains 128,000 tokens. An intro with the instruction and three format examples is followed by a natural prefill of \(\ell\) tokens, which sets the target's residue, and by a padding block of about 63,000 filler tokens whose length is a multiple of eight.
\begin{CJK*}{UTF8}{gbsn}
Next come 16,000 key--value pairs, with four tokenizer tokens per pair in the form \texttt{key:value。}, and then a natural postfill of \(48-\ell\) tokens, tail padding, and the query.
\end{CJK*}
The target is the 249th pair (zero-based index 248), so its key starts at token \(63{,}976+\ell\), which is 64,000 on average. The evaluation set contains 256 prompts in each of the eight residue groups, for 2,048 prompts in total.

The key--value candidates are generated deterministically using the DeepSeek-V4 tokenizer.
We select candidate strings with tokenizer IDs of at least 10,000 as a heuristic for less common tokens.
Candidate strings are also alphabetic, contain at most 16 Unicode characters, and occupy exactly one tokenizer token.
We additionally require tokenization stability at record and query boundaries.
We construct 256 deterministic bindings, each containing 16,000 key--value pairs, and use each binding once in every residue group.
The same evaluation prompts and prompt IDs are reused across the four
DeepSeek-V4 variants, giving comparisons on identical prompts within that family.

Natural-language filler is drawn from the training split of the \texttt{agentlans/\allowbreak high-\allowbreak quality-\allowbreak english-\allowbreak sentences} dataset at revision \texttt{5dfcb23}. Sentences are shuffled and grouped by exact tokenizer length. The evaluation set uses 2,048 distinct sentences to form 1,024 sentence pairs. Each pair appears twice with reversed prefill/postfill roles. This balances the residue schedule and the target's phase. The filler text is not identical across all eight residue groups.

For residue \(r\), the natural prefill length \(\ell\) satisfies \(8 \leq \ell \leq 40\) and \(\ell \equiv r \pmod 8\). Every residue group contains 256 prompts with
\[
\mathbb{E}[\ell]=24,
\qquad
\mathbb{E}[\ell^2]=604.
\]
The natural postfill has length \(48-\ell\), so the two natural filler segments always contribute exactly 48 tokens. The finite histograms are
\[
\begin{aligned}
H_0 &= \{(8,1),(16,52),(24,150),(32,52),(40,1)\},\\
H_1 &= \{(9,2),(17,68),(25,146),(33,40)\},\\
H_2 &= \{(10,4),(18,84),(26,140),(34,28)\},\\
H_3 &= \{(11,7),(19,101),(27,129),(35,19)\},\\
H_4 &= \{(12,12),(20,116),(28,116),(36,12)\},
\end{aligned}
\]
with \(H_{8-r}\), for \(r\in\{1,2,3\}\), obtained by replacing each length \(\ell\) in \(H_r\) with \(48-\ell\).

Each prompt is assembled as
\[
[\text{intro};\text{natural prefill};\text{body prefix};
\text{pair body};\text{natural postfill};\text{tail};
\text{query separator};\text{query}].
\]
The body prefix is constructed so that the target-pair start is
\[
s = 64{,}000+\ell-24,
\]
which gives a mean start position of 64,000 and variance \(604-24^2=28\) in every residue group while preserving the desired modulo-eight residue. The tail is then filled to reach exactly 128,000 prompt tokens. The query's token position is fixed across residue groups. The terminal query is:

\begin{quote}
\texttt{Now answer the question using the key-value pairs above. What is the value for key <KEY>? Answer with only the value. The value for key <KEY> is }
\end{quote}

Generation is followed by independent retokenization. We verify the prompt length, all component boundaries, pair-record tokenization, target span, source-key residue, and gold answer before saving the evaluation set.

\paragraph{Scope of the phase comparison.}
Changing the outer filler shifts the target and distractor records together, and each four-token record spans a whole number of compression strides for both DeepSeek-V4 and DeepSeek-V4.1. These comparisons hold the query position and the first two moments of target position fixed, but do not vary the target's phase independently of the distractors. The In-sequence Padding evaluation in \Cref{sec:setup} and \Cref{app:phase-balanced-evaluation} tests variable gaps between records in our pretrained models.

\paragraph{Inference protocol and metrics.}
For each prompt in the set shared by the four DeepSeek-V4 models, we set
\[
\{\texttt{temperature}=0,\ \texttt{max\_tokens}=8\}.
\]
The task uses direct-answer prompting without chain-of-thought generation.

Let \(g_i\) be the gold value and \(y_i\) the raw completion. The primary metric is answer-prefix accuracy:
\[
\mathrm{Acc}_{\mathrm{prefix}}
=
\frac{1}{N}\sum_{i=1}^{N}
\mathbf{1}\!\left[
\operatorname{strip}(y_i)\text{ begins with }g_i
\right],
\]
where \(\operatorname{strip}\) removes leading and trailing whitespace.
This metric tolerates punctuation or parser-added text after an otherwise correct answer. It is computed from the generated answer and is distinct from full-vocabulary next-token accuracy. In \Cref{fig:v4_niah}, the shaded bands are two-sided 95\% Wilson intervals ($z=1.96$) for each residue group.

\begin{table*}[tb]
\centering
\caption{Answer-prefix accuracy (\%) by source-key residue
\(r=t_K\bmod 8\) for the five DeepSeek models shown in
\Cref{fig:v4_niah}.
\(n\) is the number of prompts in each residue group, and
\(\Delta_{\mathrm{res}}=\max_r\mathrm{Acc}_r-\min_r\mathrm{Acc}_r\) is
reported in percentage points. Accuracies are rounded to two decimal
places from exact integer correct counts.}
\label{tab:v4-niah-residue-accuracy}

\begingroup
\small
\setlength{\tabcolsep}{3pt}
\begin{tabular}{@{}lrrrrrrrrrr@{}}
\toprule
Model & \(n\) & \(r=0\) & \(r=1\) & \(r=2\) & \(r=3\) &
\(r=4\) & \(r=5\) & \(r=6\) & \(r=7\) &
\(\Delta_{\mathrm{res}}\) \\
\midrule
DeepSeek-V4-Flash-Base & 256 & 43.75 & 69.53 & 66.02 & 29.69 & 44.92 & 69.92 & 64.45 & 32.81 & 40.23 \\
DeepSeek-V4-Flash-0731 & 256 & 80.47 & 88.67 & 98.05 & 96.48 & 81.25 & 90.23 & 99.61 & 96.09 & 19.14 \\
DeepSeek-V4-Pro-Base   & 256 & 60.94 & 88.28 & 78.91 & 56.25 & 62.50 & 91.02 & 80.47 & 58.20 & 34.77 \\
DeepSeek-V4-Pro-0813   & 256 & 78.13 & 89.84 & 91.41 & 77.34 & 77.34 & 89.45 & 92.19 & 79.69 & 14.84 \\
DeepSeek-V4.1-Flash & 2560 & 95.00 & 89.22 & 95.12 & 89.69 & 94.69 & 90.31 & 95.31 & 89.73 &  6.09 \\
\bottomrule
\end{tabular}
\endgroup
\end{table*}

\paragraph{Results across model variants.}
In \Cref{tab:v4-niah-residue-accuracy}, we report the per-residue results. We evaluate DeepSeek-V4-Flash-Base, DeepSeek-V4-Flash-0731, DeepSeek-V4-Pro-Base, and DeepSeek-V4-Pro-0813 on the same 2,048 prompts. Averaged over the eight residue groups, their answer-prefix accuracies are \(52.6\%\), \(91.4\%\), \(72.1\%\), and \(84.4\%\), respectively. Their best--worst residue gaps are 40.23, 19.14, 34.77, and 14.84 percentage points, respectively. Because the eight residue groups span two cycles of DeepSeek-V4's stride of four, a four-token recurrence should make residues \(r\) and \(r+4\) nearly equal. For each model, we therefore average the absolute accuracy difference between residues \(r\) and \(r+4\) over \(r\in\{0,1,2,3\}\). This difference ranges from 1.07 to 1.95 percentage points across the four models, consistent with an approximate four-token recurrence in the accuracy pattern across residues. These measurements describe the positional pattern but do not by themselves isolate its mechanistic source. DeepSeek-V4.1-Flash, evaluated on 2,560 prompts per residue group, has a mean accuracy of \(92.38\%\) and a best--worst residue gap of 6.09 percentage points.

\subsection{NIAH Evaluation for Qwen3-based Models}
\label{app:small-model-niah}

Here we specify the retrieval task that we use to evaluate our pretrained models in \Cref{sec:controlled} and the two padding protocols, Prefix Padding and In-sequence Padding, that vary the phase of the target key. Within each protocol, the query position is fixed across phases, and the target's expected depth and expected distance to the query are the same for every phase.

We construct token-exact animal--name retrieval tasks. Each logical example independently permutes a curated pool of 64 animal keys and a disjoint pool of 64 human names, then pairs them to obtain a random one-to-one mapping. Each key and value is one token under the Qwen3 tokenizer. The target is the pair at zero-based index $j_*=24$ in the 64-pair body.

Both evaluations use a task instruction, a mapping header and all 64 pairs,
and a recall prefix followed by eight demonstrations and the target animal
key. Prefix Padding places natural-language filler before and after the
mapping body. In-sequence Padding instead uses fixed outer guards and
spacers within the body, as detailed below. The demonstrations are distinct
pairs drawn from the body, excluding the target pair. Body records and
demonstrations use
\texttt{\{key\}\{value\}\textbackslash n}: the two fields have no intervening
delimiter token, but their token strings contain leading spaces, giving
records such as \texttt{ ant Alice}. The prompt ends at the target key.
The associated name is the gold next token and is not appended to the input.
We show an abridged example in \Cref{fig:small-model-niah-example}.

\begin{figure}[tb]
\centering
\begin{tcolorbox}[
  title={Example retrieval prompt (abridged)},
  colback=black!2, colframe=black!45, colbacktitle=black!8, coltitle=black,
  fonttitle=\bfseries, boxrule=0.5pt, arc=1mm,
  left=8pt, right=8pt, top=6pt, bottom=6pt,
  before skip=0pt, after skip=0pt]
\begingroup
\small\ttfamily\setlength{\parindent}{0pt}
Remember each animal's assigned human name.\par
\textnormal{\itshape[ prefix filler omitted ]}\par\smallskip
\hspace*{0.6em}Animal-name pairs:\par
\hspace*{0.6em}snake Alex\par
\hspace*{0.6em}beetle Carl\par
\hspace*{0.6em}fox Lucy\par
\hspace*{0.6em}$\cdots$\par
\hspace*{0.6em}bear Susan\par
\hspace*{0.6em}shark Helen\par
\hspace*{0.6em}$\cdots$\par
\hspace*{0.6em}crow Nancy\par
\textnormal{\itshape[ suffix filler omitted ]}\par\smallskip
\hspace*{0.6em}Recall the requested animal names:\par
\hspace*{0.6em}camel Anna\par
\hspace*{0.6em}spider Lisa\par
\hspace*{0.6em}chicken Molly\par
\hspace*{0.6em}bee Alan\par
\hspace*{0.6em}sheep Adam\par
\hspace*{0.6em}lizard Emma\par
\hspace*{0.6em}llama Ethan\par
\hspace*{0.6em}dog Eric\par
\hspace*{0.6em}bear\par
\endgroup
\tcblower
\small\textbf{Gold next token (not part of the prompt):}
\texttt{Susan}, including its leading space.
\end{tcolorbox}
\caption{\textbf{An animal--name retrieval example with Prefix Padding.}
Natural-language fillers are abbreviated, and $\cdots$ denotes omitted mapping records.
All eight demonstrations and the final query are shown. The target
mapping, \texttt{bear Susan}, occurs at body index 24 and is excluded from
the demonstrations.}
\label{fig:small-model-niah-example}
\end{figure}

\paragraph{Scoring.}
We score the gold name using logits at the final input token. Candidate-restricted accuracy takes the argmax over all 64 candidate names (uniform-guess chance $1/64$), while full-vocabulary accuracy takes the argmax over the entire tokenizer vocabulary. We also record the gold-token probability and negative log-likelihood.

\subsection{Padding Protocols for Qwen3-based Models}
\label{app:phase-balanced-evaluation}

\paragraph{Source phase and matched examples.}
Chunked compression gives each source token a \emph{phase}: its alignment relative to the regularly spaced right boundaries of compression windows. For a model with compression stride $S$, phase is the source-token position modulo $S$. To compare models with different strides, we use the common sampling coordinate modulo $M=\operatorname{lcm}(4,6,8,12)=24$. For each of 1,000 logical examples, we construct all $M$ source-key residues, giving 24,000 prompts. The 24 prompts of a logical example share the mapping, pair order, and demonstration indices and differ only in their fillers or spacers, whose offsets are sampled separately for each residue. Let $\ell_{\rm desc}$ and $\ell_{\rm hdr}$ be the instruction and mapping-header lengths, $\ell_{\rm pair}$ the token count per pair, and $L_{\rm pre}$ the effective inserted-token offset before the target, which the two protocols below produce in different ways. All lengths are measured with the tokenizer, including template whitespace. We define
\begin{equation}\label{eq:niah-source-phase}
 c=\ell_{\rm desc}+\ell_{\rm hdr}+j_*\ell_{\rm pair},\qquad
 t_K=c+L_{\rm pre},\qquad r=t_K\bmod M.
\end{equation}
Requesting source-key residue $r$ therefore requires $L_{\rm pre}\equiv r-c\pmod M$. The target value is at $t_K+1$ in the concatenated grammar, but neither the value position nor the final query key defines the swept phase. Since each tested stride $S$ divides $M$, the model's phase is $r\bmod S$, and phase-wise results group prompts by $r\bmod S$, with equal numbers of prompts in each group. We also keep the results for each of the 24 residues alongside these stride-specific summaries.

\paragraph{Sampling offsets with equal expected filler lengths.}
To vary phase without changing the expected insertion depth, we construct
the integer-valued length variable
\begin{equation}\label{eq:niah-balanced-length-law}
 X=M+U_1+U_2+U_3+F,\qquad
 U_i\overset{\rm iid}{\sim}\operatorname{Unif}\{0,\ldots,M-1\},\quad
 F\sim\operatorname{Unif}\{0,\ldots,M+3\},
\end{equation}
with all four draws independent. For each requested residue $r$, the sampler
draws directly from the conditional length distribution and assigns the
complementary suffix filler length:
\begin{equation}\label{eq:niah-complementary-lengths}
 L_{\rm pre}\sim\mathcal L(X\mid X\equiv r-c\pmod M),\qquad
 L_{\rm post}=6M-L_{\rm pre}.
\end{equation}
For $M=24$, both lengths lie between 24 and 120 tokens, and their sum is
exactly 144 tokens in every prompt. Moreover, for every residue $r$,
\begin{equation}\label{eq:niah-balanced-moments}
 \begin{aligned}
 \mathbb E[L_{\rm pre}\mid r]
   &=\mathbb E[L_{\rm post}\mid r]=3M=72,\\
 \operatorname{Var}(L_{\rm pre}\mid r)
   &=\operatorname{Var}(L_{\rm post}\mid r)
     =\frac{M^2+2M+3}{3}=209.
 \end{aligned}
\end{equation}
Conditioning preserves these moments because every term in the expansion of $X$ or $X^2$ involves at most two random summands and so leaves at least one of $U_1,U_2,U_3$ unused, which makes the residue of the sum uniform regardless of the variables in the term. The first and second raw moments are therefore the same under every residue, and the complementary construction gives the suffix filler length the same mean and variance.

These equalities concern the sampling distribution, so the 1,000 random
draws per residue need not have identical empirical means. By contrast, the total
filler length is fixed for every example. Together with the fixed template
and body lengths, this fixes the total prompt length and the absolute query
and scoring positions across phases. The design therefore matches expected
source depth and source-to-query distance while changing the source key's phase.

\paragraph{Prefix Padding.}
\emph{Prefix Padding} places a natural-language prefix filler of the
sampled length $L_{\rm pre}$ from \Cref{eq:niah-complementary-lengths}
directly before the mapping body and a natural-language suffix filler of
length $L_{\rm post}=6M-L_{\rm pre}$ after it. Varying
$L_{\rm pre}$ moves the packed mapping body, including the target pair,
while the complementary suffix filler keeps the query and scoring positions
fixed.
Because the conditional mean of $L_{\rm pre}$ is the same for every source
phase, no phase is systematically assigned an earlier or later target.

The prefix and suffix fillers come from the training split of the high-quality-english-sentences dataset~\citep{agentlans_hqes}. Each source sentence is rendered with a final \texttt{\textbackslash n\textbackslash n}, which is included in its token length. We build exact-length bins for every length from 24 through 120, retaining at most 64 distinct sentences per bin, and exclude sentences containing any curated key or value token ID. Sentences are used without padding or truncation. The generator first samples a length using \Cref{eq:niah-complementary-lengths}, then samples a sentence uniformly from that length bin, and draws the suffix filler from the complementary bin as a different sentence. Every length from 24 through 120 must have a nonempty bin, so missing bins do not change the conditional length law. For reproducibility, we use seed 20260720 for the source shuffle and seed 20260722 for mapping, demonstration, and filler generation, and each example and residue has its own seed.

\paragraph{In-sequence Padding.}
Prefix Padding moves all records together, so \emph{In-sequence Padding} instead produces the offset inside the mapping body. It keeps the same conditional draw as a latent \emph{reference prefix length}, denoted $\widetilde L_{\rm pre}$, places a fixed 24-token guard before the mapping header and another fixed 24-token guard after the body, and defines
\[
  x=\widetilde L_{\rm pre}-24.
\]
The 64-pair body has 63 inter-record gaps. Let $g_0,\ldots,g_{62}\in\{0,\ldots,4\}$ be the numbers of one-token spacers inserted after the corresponding records. Each spacer is the fixed one-token Unicode character \texttt{U+FF5C} (FULLWIDTH VERTICAL LINE), not the tokenizer's special padding token. For the target at index $j_*=24$, we sample uniformly from the feasible bounded compositions satisfying
\begin{equation}\label{eq:niah-in-sequence-padding}
  \sum_{i=0}^{23}g_i=x,\qquad
  \sum_{i=24}^{62}g_i=126-x.
\end{equation}
Thus the effective offset before the target is $24+x=\widetilde L_{\rm pre}$, so the target obeys the same phase equation and phase-conditioned length law as Prefix Padding.

The total number of in-body spacers is always 126, so the body boundaries, query position, scoring position, and total sequence length are fixed across phases. In-sequence Padding thus changes phase by redistributing bounded gaps around the target rather than by moving the entire body with a variable-length prefix filler. The guards and spacers contribute $24+126+24=174$ tokens, compared with 144 filler tokens in Prefix Padding. Thus the query is fixed across phases within each protocol but occurs 30 tokens later in In-sequence Padding. The source-key position has the same conditional distribution in both. The two protocols also reuse the same logical mappings, pair order, demonstrations, logical example indices, and 1,000 examples per source residue, so they test the same retrieval task under two distinct ways of producing the phase-balanced offset.

\paragraph{Tokenization checks.}
Because phase is defined on exact token positions, we verify that every prompt tokenizes as constructed. For Prefix Padding, we check exact text/token round trips of each filler sentence and its additive joins with adjacent prompt components, and require the assembled prompt and the prompt followed by the gold answer to each re-encode exactly to their constructed token IDs. For In-sequence Padding, we independently retokenize every assembled prompt and verify the guard lengths, all 63 gap lengths and their sum, the two partition sums in \Cref{eq:niah-in-sequence-padding}, the target span and phase, the fixed query and scoring positions, and the gold answer.

%% file: Appendix/Sections/controlled_pretraining_details.tex
\section{Additional Details on the Architecture and Pretraining of Qwen3-based Models}\label{app:scratch}
Here we describe the architecture, attention memory, positional encoding, and training settings behind our pretrained models in \Cref{sec:controlled}. We report the complete sweep in \Cref{app:complete-controlled-sweep}.

\subsection{Backbone and Compression Kernels}

\paragraph{Backbone and reference compression kernel.}
The Qwen3-0.6B backbone has 28 layers, hidden width 1,024, 16 query heads, and head dimension $d_h=128$. We retain its feed-forward blocks, residual connections, and normalization, so that chunked KV compression is the only substantial architectural change. The reference compression kernel uses non-overlapping windows with $W=S=8$ and one KV head per layer (a key--value head, as opposed to the 16 query heads). Its K/V branches are separate and use scalar gates: payload maps are shared across offsets ($C_i^B=C^B$), while gate maps and biases depend on the offset. The same configuration is used in every layer. The ablations vary KV-head count, pooling, parameter sharing, gate dimensionality, positional bias, window overlap, positional encoding, normalization, and local-memory policy.

\paragraph{Gate granularity and parameter sharing.}
Using the notation of \Cref{sec:setup}, scalar gates have \(d_g=1\), with one gate score per token broadcast across all projected dimensions. Vector gates have \(d_g=d_h\), allowing projected dimensions to prefer different window offsets. Separate K/V branches use their own payload maps, gate maps, and positional biases. Tied K/V branches share all three and produce a common compressed representation \(\widehat K_j=\widehat V_j\), before the normalization and positional transformations described below. Within each branch, payload and gate maps can be shared independently across the whole window, within groups of offsets, or remain offset-specific. Sharing these maps does not require sharing positional biases across offsets. Uniform averaging fixes each gate-score coordinate at \(1/W\) in a full window. The evaluated configurations are listed in \Cref{tab:controlled-sweep-results}.

\paragraph{DeepSeek-style compression kernel.}
Combining these options recovers DeepSeek-V4's CSA compression kernel~\citep{deepseek2026v4}, which lets us test whether the kernel studied in \Cref{sec:modsens} produces \name\ in our backbone. It uses \(W=8\), \(S=4\), vector gates, and tied K/V branches. The preceding four-token half, with offsets \(i\in\{0,1,2,3\}\), uses the payload and gate maps \((\mC^{(0)},\mZ^{(0)})\). The current four-token half, with offsets \(i\in\{4,5,6,7\}\), uses a separate pair \((\mC^{(1)},\mZ^{(1)})\). Each pair is shared within its half. Positional biases remain offset-specific and are shared between the tied K/V branches. In a full window, one softmax spans all eight offsets for each gate dimension.

\subsection{Windows, Attention Memory, and Positional Encoding}

Each kernel above produces one compressed entry per window. To use these entries in attention, we specify when each entry is emitted, which entries a query sees alongside its local window, and at which position each entry is rotated.

\paragraph{Window indexing and initial boundary.}
Token positions start at zero, and compressed entries use one-based indices
$j\geq1$. Entry $j$ closes at $jS-1$ and pools offsets $0\leq i<W$
at source positions $jS-W+i$. When $W>S$, the first entries may extend
before the sequence start. These entries are emitted as soon as their
stride block closes. Negative positions are masked from the softmax, which
normalizes over the valid offsets only. The corresponding payload terms
are zero. Thus the full-window equation in \Cref{sec:setup} applies away
from this initial boundary.

\paragraph{Attention memory and visibility.}
For one query head, let $q_t\in\mathbb R^{d_h}$ be its query after
normalization and positional encoding. Write $k_u,v_u$ for local entries
and $\hat k_j,\hat v_j$ for compressed entries after the branch-specific
transformations below. With local-window width $T=16$, the visible raw
positions are $\mathcal L_t=\{\max(0,t-T+1),\ldots,t\}$. Let
$\mathcal J_t$ index the visible compressed entries and
$n_t=|\mathcal J_t|+|\mathcal L_t|$. Stacking vectors as rows gives
\begin{equation}\label{eq:controlled-attention-memory}
 \mathbf K_t=
 \begin{bmatrix}(\hat k_j^\top)_{j\in\mathcal J_t}\\
                 (k_u^\top)_{u\in\mathcal L_t}\end{bmatrix},
 \qquad
 \mathbf V_t=
 \begin{bmatrix}(\hat v_j^\top)_{j\in\mathcal J_t}\\
                 (v_u^\top)_{u\in\mathcal L_t}\end{bmatrix}
 \in\mathbb R^{n_t\times d_h}.
\end{equation}
One softmax normalizes scores jointly across the two memories:
\begin{equation}\label{eq:controlled-joint-attention}
 a_t=\operatorname{Softmax}\!\left(\frac{\mathbf K_tq_t}{\sqrt{d_h}}\right)
 \in\mathbb R^{n_t},
 \qquad o_t=\mathbf V_t^\top a_t\in\mathbb R^{d_h}.
\end{equation}
Each KV head supplies the memory for its corresponding group of query
heads, which retain their own queries and attention distributions.

With a fixed-width local window, compressed visibility can follow one of two policies. The first delays visibility until the entry's closure leaves the local window: $\mathcal J_t=\{j\geq1:jS-1\leq t-T\}$. The second admits every causally closed entry, $\mathcal J_t=\{j\geq1:jS-1\leq t\}$, allowing compressed and local entries to represent overlapping source tokens. Our fixed-width models use the second policy, with an exact 16-token raw window. The open-tail local memory instead admits entries as soon as they close but retains raw tokens only in the current incomplete stride block, so its raw tail is empty at the final token of each stride block.

\paragraph{Positional encoding and K/V tying.}
Let $R_p\in\mathbb R^{d_h\times d_h}$ denote the rotary transformation at
position $p$. In the reference setting, queries and local keys receive
per-head RMSNorm before RoPE, and compressed keys are normalized and rotated
after pooling:
\begin{equation}\label{eq:controlled-rope}
 \begin{aligned}
 q_t&=R_t\operatorname{RMSNorm}_Q(W_Qh_t),\qquad
 k_u=R_u\operatorname{RMSNorm}_K(W_Kh_u),\\
 \hat k_j&=R_{(j-1)S}\operatorname{RMSNorm}_K(\widehat K_j),
 \end{aligned}
\end{equation}
where $W_Q,W_K\in\mathbb R^{d_h\times d}$ are the query and local-key
projections for the relevant heads. Under the one-based entry indexing above, position $(j-1)S$ is the start of the compressed window's newest stride block.
For an even number $0<d_r\leq d_h$ of rotary dimensions,
the frequencies are constructed within that subspace:
\begin{equation}\label{eq:controlled-rope-frequencies}
 \omega_\ell=\theta^{-2\ell/d_r},\qquad
 \ell=0,\ldots,d_r/2-1,\qquad \theta=10^6.
\end{equation}
The transformation $R_p$ rotates the final $d_r$ coordinates with angles
$p\omega_\ell$, using Qwen3's split-half pairing, and leaves the first
$d_h-d_r$ coordinates unchanged. The reference uses $d_r=d_h=128$, and
partial-RoPE variants use $d_r=16$, a rotary fraction of $1/8$.
NoPE uses $d_r=0$ and $R_p=I$. No additional frequency or context-length
scaling is applied. The variant without Q/K normalization replaces the
RMSNorm operations with the identity.

The partial-RoPE model builds this 16-dimensional rotary table directly and
applies it to the last 16 of the 128 head dimensions. It does not select the
first $d_r/2$ frequencies of the full-head table, whose exponent denominator
would be $d_h$ rather than $d_r$.

With separate K/V branches, local values are $v_u=W_Vh_u$, with
$W_V\in\mathbb R^{d_h\times d}$, and compressed values are
$\hat v_j=\widehat V_j$. Neither receives key normalization or rotation.
With tied K/V branches, values instead equal the normalized, rotated keys:
$v_u=k_u$ and $\hat v_j=\hat k_j$. The combined attention output is then
inverse-rotated at the query position, $\tilde o_t=R_t^{-1}o_t$, before
concatenating query-head outputs and applying the standard output
projection. The inverse acts on the same dimensions as RoPE and applies to
the combined local and compressed readout. With separate branches,
$\tilde o_t=o_t$.

\subsection{Training Settings}
The reference 100B-token training run uses AdamW with
$(\beta_1,\beta_2)=(0.9,0.95)$, $\epsilon=10^{-8}$, weight decay $0.1$,
and gradient clipping at norm $1$. Training uses BF16 and seed 42.
The packed corpus is traversed once with sequence length 2,048 and a global
batch of 1,024 sequences. The learning rate warms up for 2\% of training to
$2\times10^{-3}$ and decays by a cosine schedule to $2\times10^{-5}$.
The seed-replication sweep additionally uses seeds 43--45.
Unless stated otherwise, results for our pretrained models use the final checkpoint at step 47,518.

%% file: Appendix/Sections/specialization_details.tex
\section{Additional Details on KV-Head Knockouts}\label{app:specialization}\label{app:representation-patching}
Here we define the KV-head knockouts of \Cref{sec:head-knockouts}. A knockout measures how retrieval changes when the prompt-specific output of an attention component is replaced by a fixed reference vector. We use the component's mean output over separate calibration prompts as this reference, which removes what the output carries about the particular prompt while retaining its average. The intervention is evaluated on held-out prompts, and its effects are resolved by the phase of the source key. This mean replacement builds on causal analyses of attention heads and their retrieval roles~\citep{olsson2022induction,wang2023interpretability,wu2025retrieval}. We first define which output is replaced and how its reference is fitted, then describe the prompts and metrics, the sweep over our pretrained models, and what the resulting effects can and cannot show. Finally, we adapt the intervention to DeepSeek-V4-Flash-Base, whose single shared KV head and long prompts call for a position-aligned variant.

\input{Appendix/Sections/representation_patching_details}

%% file: Appendix/Sections/representation_patching_details.tex
\subsection{Replaced Attention Output}
\label{app:patching-gqa}\label{app:patching-output}

Our models use grouped-query attention (GQA)~\citep{ainslie2023gqa}. A layer $\ell$ has $H_q$ query heads and $H_{kv}$ KV heads, and each KV head $h$ is read by a group $\mathcal Q(h)$ of $G=H_q/H_{kv}$ query heads, which share its keys and values but keep their own queries and attention distributions. In a compressed layer, the entries that a KV head exposes to a query combine visible compressed entries (\Cref{sec:setup}) with uncompressed local entries. Knockouts change neither these entries nor the rule that decides which ones are visible.

For query head $a$ at position $t$, the attention readout $\vr_{\ell,a,t}$ is the attention-weighted sum of the value vectors it reads. These attention weights are distinct from the compression gate scores $\valpha$ that form each compressed entry. Some tied-KV variants apply an inverse positional rotation to the readout before the output projection. We write $\vz_{\ell,a,t}$ for the readout after this rotation, with $\vz=\vr$ when none is applied, so the attention output is $\mW^O_\ell\operatorname{Concat}_a\vz_{\ell,a,t}+\vb^O_\ell$. The intervention acts on $\vz$, immediately before the output projection, where each query head's contribution is still a separate vector. With $T_x$ the final input position of prompt $x$, whose logits predict the answer, we write $\vo$ for the stacked outputs of all query heads that read KV head $h$ at this scoring position:
\begin{equation}
\vo_{\ell,h}(x)
=\operatorname{Concat}_{a\in\mathcal Q(h)}
  \vz_{\ell,a,T_x}(x)
\in\mathbb R^{Gd_v},
\label{eq:patch-group-output}
\end{equation}
where $d_v$ is the value dimension. This is a group of query-head outputs rather than an additional output of the KV head. With $H_q=16$, $H_{kv}=4$, and $d_v=128$, one knockout replaces four query-head vectors, a 512-dimensional group output. With one KV head, it replaces all 16 query-head outputs at that layer and position, which is the whole attention readout at that site.

\subsection{Calibration and Mean Replacement}
\label{app:patching-calibration}\label{app:patching-intervention}

An unmodified forward pass over a calibration set $\mathcal D_{\mathrm{cal}}=\{x^{(n)}\}_{n=1}^{N_{\mathrm{cal}}}$, disjoint from evaluation, records every query head's output at the scoring position. Because query heads that share a KV head keep their own queries and attention distributions, each query head receives its own mean,
\begin{equation}
\boldsymbol\mu_{\ell,a}
=\frac{1}{N_{\mathrm{cal}}}\sum_{n=1}^{N_{\mathrm{cal}}}
 \vz_{\ell,a,T_{x^{(n)}}}(x^{(n)}),\qquad
\bar\vo_{\ell,h}
=\operatorname{Concat}_{a\in\mathcal Q(h)}\boldsymbol\mu_{\ell,a},
\label{eq:patch-calibration-mean}
\end{equation}
so a knockout retains each head's own average output, which a mean pooled over the group would not. The calibration set is balanced over source-key residues, so no phase dominates the mean, and each mean also averages over mappings and filler texts. The reference is therefore independent of the evaluation prompt and its phase, and a phase-dependent knockout effect cannot reflect a phase-specific reference. It remains specific to the model, layer, query head, and calibration distribution. A phase-conditioned mean, or replacement by zero, would define a different intervention.

To knock out KV head $h_*$ in layer $\ell_*$, we replace the outputs of every query head that reads it at the scoring position (\Cref{fig:mean-substitution-protocol}):
\begin{equation}
\widetilde{\vz}_{\ell_*,a,t}(x)=
\begin{cases}
\boldsymbol\mu_{\ell_*,a},&a\in\mathcal Q(h_*),\ t=T_x,\\
\vz_{\ell_*,a,t}(x),&\text{otherwise}.
\end{cases}
\label{eq:patch-substitution}
\end{equation}
Other heads in the layer and all earlier computation are left unchanged, and later computation at the scoring position is rerun. Because the patch sits at the final input position of a causal decoder, it cannot change earlier context representations, and neither the model parameters nor the context cache are edited.

\begin{figure}[tb]
\centering
\includegraphics[width=\linewidth]{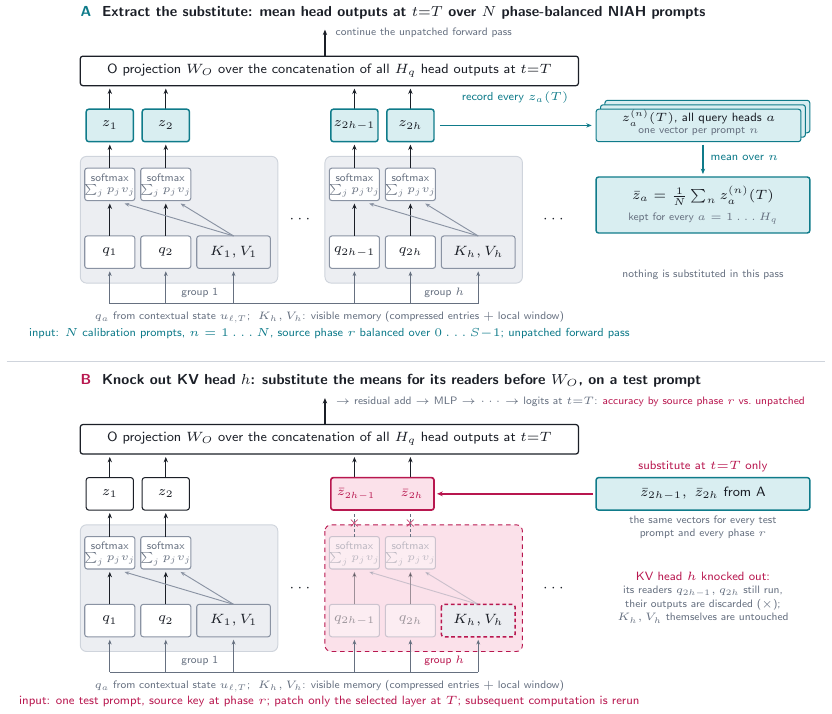}
\caption{\textbf{Calibration and mean replacement in a GQA block.} (A) Unmodified calibration prompts supply a separate mean for every query-head output at the scoring position. (B) On each evaluation prompt, the query heads that read the selected KV head receive those fixed means before the output projection, and the KV head's keys and values are left intact. The schematic shows two query heads per KV head ($G=2$) and suppresses the layer index and the inverse positional rotation. Each query head's output $z_a$ corresponds to $\vz_{\ell,a,T_x}$ and its mean $\bar z_a$ to $\boldsymbol\mu_{\ell,a}$. Concatenating the selected query-head outputs gives $\vo_{\ell,h}$ in \Cref{eq:patch-group-output}. The patch is confined to the selected layer and position, and downstream representations are recomputed.}
\label{fig:mean-substitution-protocol}
\end{figure}

\subsection{Evaluation Prompts, Metrics, and Uncertainty}
\label{app:patching-data}\label{app:patching-metrics}

Knockouts in our pretrained models use the retrieval task of \Cref{app:small-model-niah} under both padding protocols of \Cref{app:phase-balanced-evaluation}. Each protocol has its own calibration set, built with that protocol's prompts. Each prompt is indexed by its source-key residue $r=t_K\bmod 24$, where $t_K$ is the source key's zero-based position, and a model with stride $S$ sees phase $r\bmod S$. The 24 phase replicas of a logical example stay together when calibration is separated from evaluation. Logical examples $[1000,2000)$ form the calibration set and $[0,1000)$ the evaluation set, each with 24,000 prompts.

A prompt counts as correct when the gold name has the largest final logit, either among the 64 candidate names (candidate-restricted accuracy) or over the whole vocabulary (full-vocabulary accuracy). We also record the gold logit, the gold negative log-likelihood, and the total probability of the candidate set. Let $\mathcal D_r$ be the evaluation prompts at residue $r$ and $Y^c(x)\in\{0,1\}$ the correctness of prompt $x$ in the clean run ($c=0$) or under the knockout ($c=\mathrm{KO}(\ell,h)$). The paired effect, in percentage points, is
\begin{equation}
\Delta_{\ell,h}(r)
=\frac{100}{|\mathcal D_r|}
  \sum_{x\in\mathcal D_r}
  \bigl(Y^{\mathrm{KO}(\ell,h)}(x)-Y^0(x)\bigr),
\label{eq:patch-paired-effect}
\end{equation}
which is negative when the knockout impairs retrieval. Both terms use the same prompts. For stride $S$, folded effects average the residues that share a phase, for example residues $r$, $r+8$, and $r+16$ for W8/S8. Full-attention baselines have no phase and are grouped by the same residues only for comparison.

A phase with low clean accuracy has less room for a further decrease. In \Cref{fig:phase-knockouts} and the figures of \Cref{app:complete-controlled-sweep}, we therefore plot the relative accuracy change $100(A^{\mathrm{KO}}(r)-A^0(r))/A^0(r)$, which scales each loss by the clean accuracy available to lose. Here $A^c(r)$ is the accuracy at residue $r$. This ratio is undefined when clean accuracy is zero, so the figures apply a validity mask.

The independent sampling unit is the logical example, not each of its phase replicas. For an overall effect, we average paired differences within each logical example and take the standard error across examples, and we cluster by logical example in the same way for folded or between-phase contrasts. These clustered standard errors are computed but not plotted, and the heatmaps show point estimates that do not establish significance for every cell. The retrieval curves in the main text show two-sided 95\% Wilson intervals, and the clean curves in \Cref{app:complete-controlled-sweep} show pointwise 95\% cluster-bootstrap intervals over logical examples. Selecting the largest loss from a sweep is exploratory, and we do not test gate-nominated heads on held-out prompts.

\subsection{Knockout Sweeps for Qwen3-based Models}
\label{app:patching-recorded-sweeps}

To check whether phase-dependent knockout effects extend beyond the model of \Cref{fig:phase-knockouts}b, we sweep the 23 compressed models listed in \Cref{tab:controlled-sweep-results}, which have one, two, four, or eight KV heads, at checkpoint 47,518. Each model is evaluated over all 28 layers with the disjoint 24,000-prompt calibration and evaluation sets above. In the six figure groups of \Cref{app:complete-controlled-sweep}, we compare Prefix Padding and In-sequence Padding and report unfolded relative full-vocabulary accuracy changes with group-wise zero-centered scales. These are point estimates rather than independently confirmed selections.

\subsection{Scope of the Causal Interpretation}
\label{app:patching-controls}

The experiment changes an internal representation while holding the evaluation prompt and model parameters fixed. A loss of accuracy therefore measures a causal effect of this intervention on the tested retrieval behavior. It supports the importance of the selected readout under mean replacement, rather than exclusive storage of the target in that KV head. Other KV heads may contain redundant information, and a head can contribute to query processing or downstream computation in addition to carrying an answer. Conversely, a small knockout effect does not establish that a component is unused: other components can compensate. The change in the attention output is linear in $\bar\vo_{\ell_*,h_*}-\vo_{\ell_*,h_*}(x)$, but its effect on the final logits generally is not, so the effects of knocking out different heads separately need not add up to the effect of removing them together.

In compressed layers, one attention softmax covers compressed and local entries, and mean replacement acts on their combined readout. It does not selectively ablate compressed memory, remove one source association, or change a compression gate. Mean replacement preserves a reference first moment but does not guarantee an in-distribution activation or a matched norm. It removes within-head variation and the selected outputs' prompt-specific correlations with the rest of the network, and the calibrated mean can itself retain common task and template information. Consequently, the intervention is neither a literal deletion of a memory nor a proof of a unique retrieval circuit. We also do not report an identity-intervention comparison or a comparison that replaces the outputs of an equal number of other query heads in the same layer.

\subsection{Knockouts for DeepSeek-V4-Flash-Base}
\label{app:patching-deepseek}

The DeepSeek-V4-Flash-Base knockout follows the same mean-replacement logic but differs in what it replaces and at which positions. The model has one shared KV head per layer, so, as in our pretrained models with one KV head, a knockout replaces the whole attention readout of a layer. We take this readout after the attention output projection and before the hyper-connection, so the patched vector is the entire 4,096-dimensional layer output rather than a query-head slice of the pre-projection tensor in \Cref{eq:patch-group-output}. The patch can also cover several positions of the query suffix rather than only the final one.

Let $\va_{\ell,s}(x)\in\mathbb R^{4096}$ denote that output at suffix
index $s\in\{0,\ldots,15\}$. Calibration produces a separate mean for
each layer and each \emph{suffix position}:
\begin{equation}
\boldsymbol\nu_{\ell,s}
=\frac{1}{N_{\mathrm{cal}}}\sum_{n=1}^{N_{\mathrm{cal}}}
 \va_{\ell,s}(x^{(n)}),\qquad
\widetilde{\va}_{\ell,s}(x)=
\begin{cases}
\boldsymbol\nu_{\ell,s},&s\in\mathcal P,\\
\va_{\ell,s}(x),&s\notin\mathcal P.
\end{cases}
\label{eq:patch-deepseek-suffix}
\end{equation}
The mask $\mathcal P$ selects one of four conditions: the first occurrence of the query key, its second occurrence, both occurrences, or the complete 16-token suffix. In \Cref{fig:phase-knockouts}c, we show the full-suffix condition, which patches all 16 suffix positions. Suffix positions play different roles in the query, such as the two occurrences of the query key, so each receives its own mean rather than an average over the suffix, just as each query head receives its own mean in our pretrained models. The same position-aligned means are used for every evaluation prompt and source phase. Patching earlier suffix positions can also affect later suffix computation, so the full-suffix condition has a broader scope than the final-position-only intervention in our pretrained models.

For \Cref{fig:phase-knockouts}c, we use 256 evaluation prompts per residue group, each 128,000 tokens long with 16,000 key--value pairs. Calibration and evaluation use disjoint prompts with distinct texts. The source-key coordinate is modulo eight, so these residue groups span two cycles of the CSA stride of four. The 127,984-token prefix is prefilled in 1,024-token chunks, and the remaining 16 tokens form the query suffix.

We score the designated gold target token without autoregressive answer generation. This target-token accuracy is distinct from the answer-prefix accuracy used in the behavioral benchmark of \Cref{sec:modsens}. Because DeepSeek-V4-Flash-Base's outputs vary with batch size, each clean run uses the same batch composition as the intervened run it is compared with.

In the full-suffix condition of \Cref{fig:phase-knockouts}c, the largest
phase-averaged target-token accuracy losses occur at
layers 22 and 24 (zero-based). Effects vary across source residues. They establish
phase-dependent reliance on the patched layer readout at this intervention
granularity, but they do not identify a particular query head, a compressed-only
branch, or the compression operation that wrote the relevant information.

%% file: Appendix/Sections/complete_sweep.tex
\section{Complete Results for Qwen3-based Models}
\label{app:complete-controlled-sweep}

In \Cref{sec:phase-accuracy}, we plot a subset of our pretrained models. Here we give fuller results, so that each design change can be compared with the reference and the full-attention baselines. In \Cref{tab:controlled-sweep-results}, we summarize the configurations shown in \Cref{fig:phase-accuracy} and the grouped figures below, reporting validation loss alongside mean and worst-residue retrieval accuracy and the gap across the 24 evaluated source-key residues under Prefix Padding. We include 23 compressed runs and three full-attention baselines.

These summary numbers cannot show where the weak spots fall or which layers support retrieval at each phase. In the following figures, we therefore show our pretrained models at the end of training (step 47,518) in six groups that follow the blocks of compressed runs in \Cref{tab:controlled-sweep-results}. In each figure, we pair clean full-vocabulary retrieval across the 24 source-key residues with layer-by-phase relative accuracy changes under KV-head knockouts (\Cref{app:representation-patching}). Across the six groups, no tested change removes the periodic weak spots, although some narrow them substantially. Uniform averaging, for example, reduces the best-to-worst gap from 75.3 to 19.2 percentage points. Random seeds, positional encoding, and normalization reshape the accuracy pattern across phases, and the weak spots repeat with the compression stride.

We show every group under both padding protocols of \Cref{app:phase-balanced-evaluation}. Prefix Padding shifts all records together, so the distractors change phase along with the target. In-sequence Padding instead varies the gaps between records, so a pattern that appears under both protocols does not require all records to shift together. Rows 1 and 2 show clean accuracy and knockout effects under Prefix Padding, and rows 3 and 4 show the same under In-sequence Padding. Curves use 1,000 logical examples and show pointwise 95\% cluster-bootstrap intervals. Heatmaps within each group share a zero-centered scale.

\input{Appendix/Tables/controlled_sweep_results}
\input{Appendix/Tables/controlled_group_summaries}

%% file: Appendix/Tables/controlled_sweep_results.tex
\begin{table}[tb]
\centering
\caption{\textbf{Summary of the plotted pretrained models.}
Prefix Padding results for the 26 runs shown in the cited main-text and appendix figures:
23 compressed runs and three full-attention baselines, evaluated at the
end of training (step 47,518).
For full-vocabulary retrieval accuracy $A_\rho$ at source-key residue
$\rho=t_K\bmod24$, we report the mean over all 24 residues, the minimum,
and the gap $\max_\rho A_\rho-\min_\rho A_\rho$ in percentage points.
The same coordinate is used for full-attention baselines, which have no
compression phase. Validation loss is reported separately from retrieval.}
\label{tab:controlled-sweep-results}
\begingroup
\small
\setlength{\tabcolsep}{3pt}
\renewcommand{\arraystretch}{1.06}
\begin{tabular}{@{}p{0.40\textwidth}ccrrrr@{}}
\toprule
& & & & \multicolumn{3}{c}{Retrieval accuracy} \\
\cmidrule(l){5-7}
Configuration / change & $W/S$ & KV & Val. loss & Mean (\%) & Min. (\%) & Gap (pp) \\
\midrule
\multicolumn{7}{@{}l}{\textit{Full-attention baselines} (\Cref{fig:phase-accuracy})} \\
Full attention & -- & 1 & 2.466 & 18.8 & 17.2 & 3.1 \\
Full attention & -- & 4 & 2.448 & 47.1 & 44.8 & 6.1 \\
Full attention & -- & 8 & 2.437 & 61.1 & 58.4 & 4.3 \\
\addlinespace[3pt]
\multicolumn{7}{@{}l}{\textit{Reference and random-seed replication} (\Cref{fig:controlled-group-random-seed-replication})} \\
Reference, seed 42 & 8/8 & 1 & 2.478 & 50.6 & 5.7 & 75.3 \\
Reference, seed 43 & 8/8 & 1 & 2.482 & 32.1 & 2.5 & 59.2 \\
Reference, seed 44 & 8/8 & 1 & 2.477 & 56.1 & 11.6 & 70.3 \\
Reference, seed 45 & 8/8 & 1 & 2.478 & 40.5 & 15.6 & 36.6 \\
\addlinespace[3pt]
\multicolumn{7}{@{}l}{\textit{Native KV-head count} (\Cref{fig:controlled-group-native-kv-head-count})} \\
Two KV heads & 8/8 & 2 & 2.465 & 58.3 & 9.8 & 76.8 \\
Four KV heads & 8/8 & 4 & 2.448 & 68.7 & 13.6 & 78.0 \\
Eight KV heads & 8/8 & 8 & 2.431 & 59.4 & 9.9 & 71.1 \\
\addlinespace[3pt]
\multicolumn{7}{@{}l}{\textit{Compression-window size and stride} (\Cref{fig:controlled-group-compression-geometry})} \\
Non-overlapping windows & 4/4 & 1 & 2.469 & 38.5 & 8.0 & 50.5 \\
Overlapping windows & 8/4 & 1 & 2.467 & 66.9 & 46.2 & 39.8 \\
Overlapping windows & 8/6 & 1 & 2.470 & 53.4 & 38.5 & 31.9 \\
Larger windows, one KV head & 12/12 & 1 & 2.487 & 33.0 & 5.0 & 63.2 \\
\addlinespace[3pt]
\multicolumn{7}{@{}l}{\textit{Compression kernels} (\Cref{fig:controlled-group-compression-operator})} \\
Tied K/V branches & 8/8 & 1 & 2.495 & 21.8 & 3.6 & 39.1 \\
Uniform averaging & 8/8 & 1 & 2.497 & 12.5 & 4.7 & 19.2 \\
Uniform averaging, tied K/V, postnorm & 8/8 & 1 & 2.503 & 18.6 & 1.9 & 39.3 \\
Vector gates & 8/8 & 1 & 2.478 & 19.4 & 4.0 & 34.0 \\
\addlinespace[3pt]
\multicolumn{7}{@{}l}{\textit{Positional encoding and normalization} (\Cref{fig:controlled-group-position-and-normalization})} \\
No Q/K normalization & 8/8 & 1 & 2.485 & 38.5 & 23.8 & 34.6 \\
No positional gate biases & 8/8 & 1 & 2.479 & 34.3 & 4.4 & 60.7 \\
No RoPE (NoPE) & 8/8 & 1 & 2.504 & 29.4 & 5.0 & 40.9 \\
RoPE on 16 dimensions & 8/8 & 1 & 2.482 & 74.2 & 32.9 & 57.2 \\
\addlinespace[3pt]
\multicolumn{7}{@{}l}{\textit{Local-memory policy and DeepSeek-style kernels} (\Cref{fig:controlled-group-memory-policy-and-implementation})} \\
Uncompressed open tail & 8/8 & 1 & 2.520 & 75.0 & 57.4 & 33.9 \\
Uncompressed open tail & 8/8 & 8 & 2.460 & 53.0 & 25.5 & 49.3 \\
DeepSeek-style, vector gates & 8/4 & 1 & 2.476 & 35.6 & 18.7 & 38.3 \\
DeepSeek-style, scalar gates & 8/4 & 1 & 2.473 & 21.5 & 14.2 & 18.7 \\
\bottomrule
\end{tabular}
\par\smallskip
\begin{minipage}{\textwidth}
\footnotesize
\textit{Notes.} KV denotes the number of KV heads per layer.
The reference uses scalar gates, separate K/V branches, payload maps shared
across offsets, and offset-specific gate maps and positional biases
(\Cref{sec:setup} and \Cref{app:scratch}). Postnorm denotes post-projection
normalization. Each run appears once, and the seed-42 reference supplies the
one-KV-head model for the head-count comparison.
\end{minipage}
\endgroup
\end{table}
\clearpage

%% file: Appendix/Tables/controlled_group_summaries.tex
\clearpage
\begingroup
\setlength{\abovecaptionskip}{4pt}
\begin{figure}[tb]
\centering
\includegraphics[width=\textwidth,height=0.4\textheight,keepaspectratio]{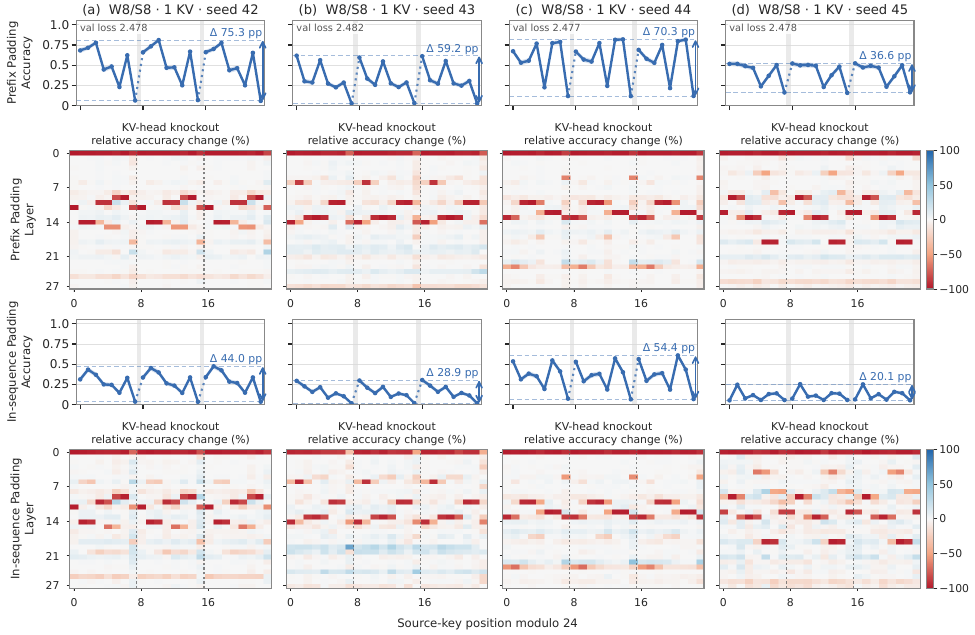}
\caption{\textbf{Random-seed replication.} Rows 1--2 use Prefix Padding and rows 3--4 use In-sequence Padding. Rows 1 and 3 show full-vocabulary accuracy, and rows 2 and 4 show relative accuracy change under KV-head knockout. Seed changes alter the accuracy pattern across phases while preserving periodic weak spots.}
\label{fig:controlled-group-random-seed-replication}
\par\vspace{9pt}
\includegraphics[width=\textwidth,height=0.4\textheight,keepaspectratio]{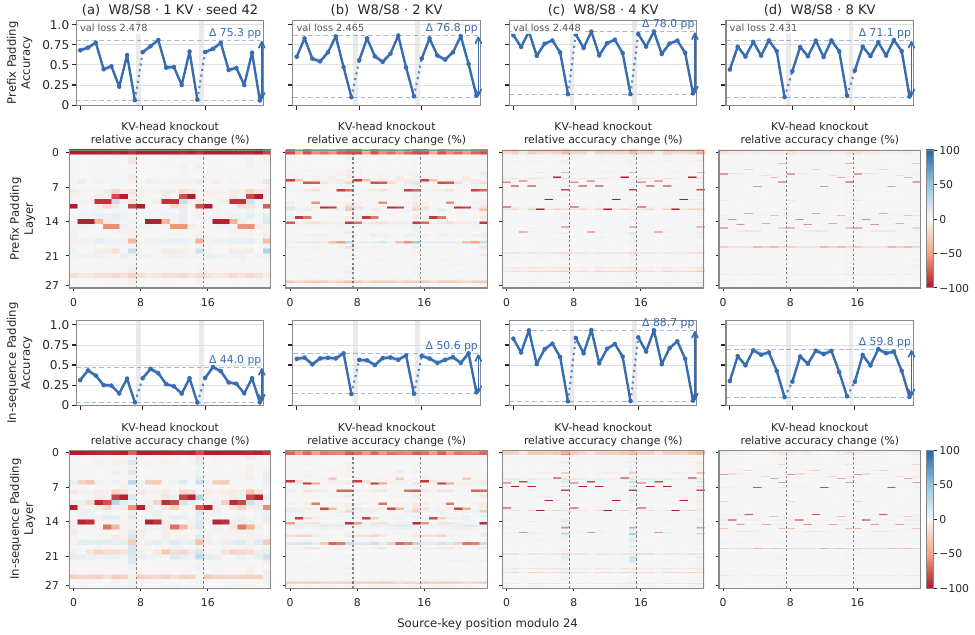}
\caption{\textbf{Number of KV heads.} Rows 1--2 use Prefix Padding and rows 3--4 use In-sequence Padding. Rows 1 and 3 show full-vocabulary accuracy, and rows 2 and 4 show relative accuracy change under KV-head knockout. Phase sensitivity persists from one to eight KV heads.}
\label{fig:controlled-group-native-kv-head-count}
\end{figure}
\clearpage
\begin{figure}[tb]
\centering
\includegraphics[width=\textwidth,height=0.4\textheight,keepaspectratio]{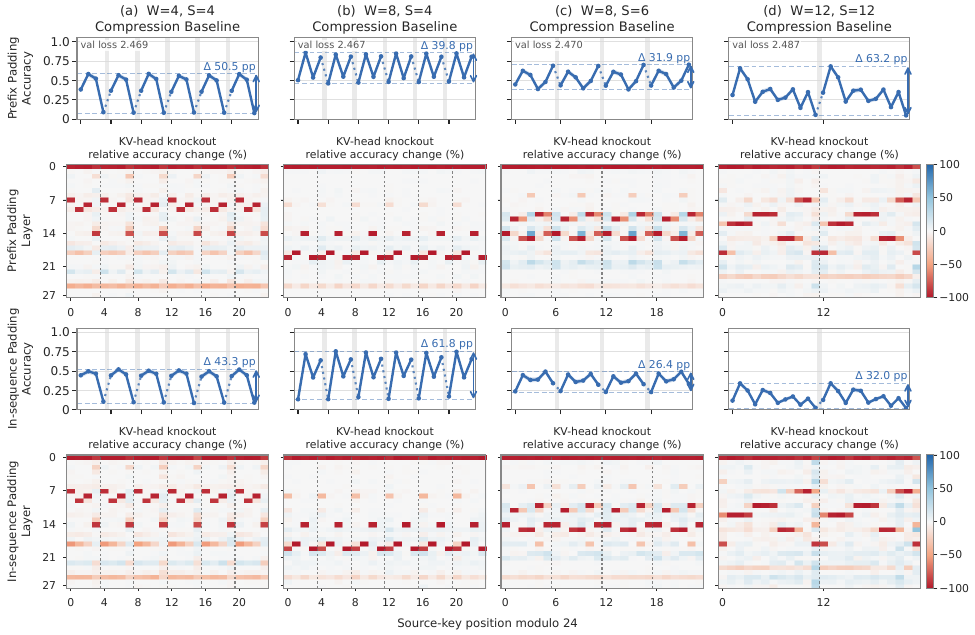}
\caption{\textbf{Compression geometry.} Rows 1--2 use Prefix Padding and rows 3--4 use In-sequence Padding. Rows 1 and 3 show full-vocabulary accuracy, and rows 2 and 4 show relative accuracy change under KV-head knockout. Weak spots repeat with the compression stride.}
\label{fig:controlled-group-compression-geometry}
\par\vspace{9pt}
\includegraphics[width=\textwidth,height=0.4\textheight,keepaspectratio]{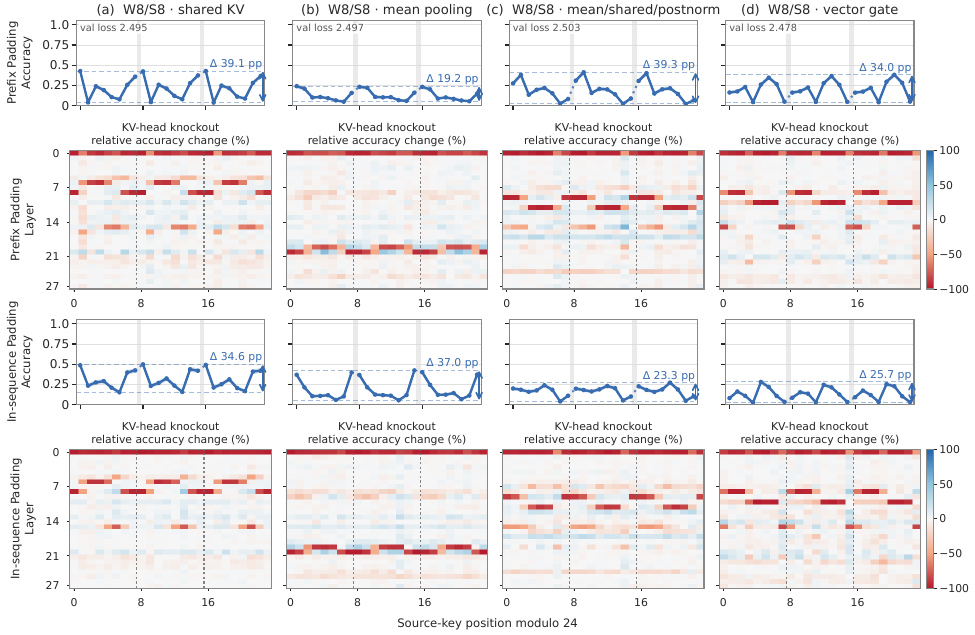}
\caption{\textbf{Compression kernels.} Rows 1--2 use Prefix Padding and rows 3--4 use In-sequence Padding. Rows 1 and 3 show full-vocabulary accuracy, and rows 2 and 4 show relative accuracy change under KV-head knockout. Phase sensitivity persists across pooling, sharing, and gate variants.}
\label{fig:controlled-group-compression-operator}
\end{figure}
\clearpage
\begin{figure}[tb]
\centering
\includegraphics[width=\textwidth,height=0.4\textheight,keepaspectratio]{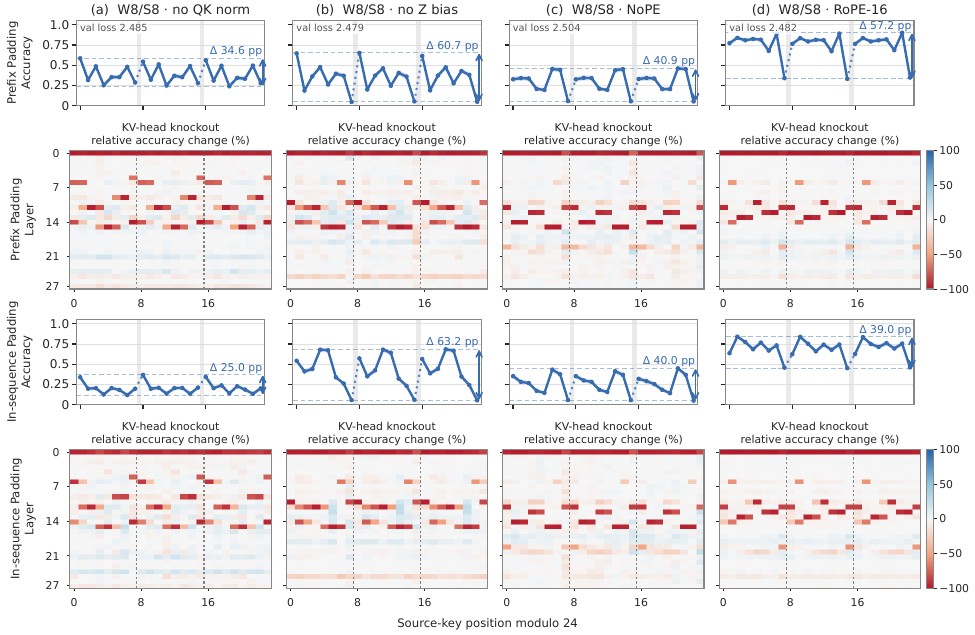}
\caption{\textbf{Position and normalization.} Rows 1--2 use Prefix Padding and rows 3--4 use In-sequence Padding. Rows 1 and 3 show full-vocabulary accuracy, and rows 2 and 4 show relative accuracy change under KV-head knockout. Position and normalization choices reshape but do not remove phase sensitivity.}
\label{fig:controlled-group-position-and-normalization}
\par\vspace{9pt}
\includegraphics[width=\textwidth,height=0.4\textheight,keepaspectratio]{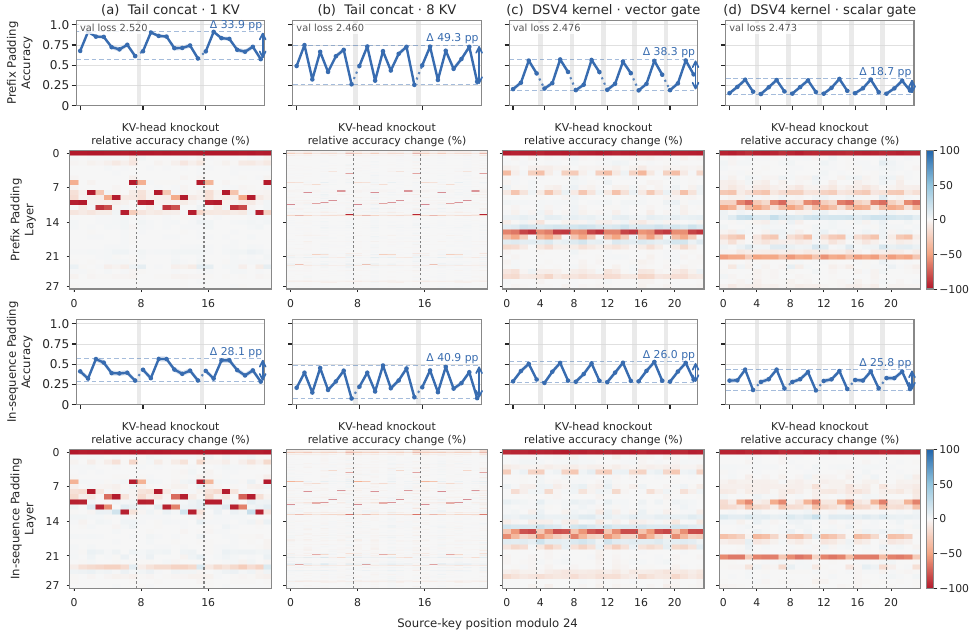}
\caption{\textbf{Memory policy and implementation variants.} Rows 1--2 use Prefix Padding and rows 3--4 use In-sequence Padding. Rows 1 and 3 show full-vocabulary accuracy, and rows 2 and 4 show relative accuracy change under KV-head knockout. Open-tail local memory and DeepSeek-style kernels remain phase-sensitive.}
\label{fig:controlled-group-memory-policy-and-implementation}
\end{figure}
\clearpage
\endgroup

%% file: Appendix/Sections/gate_knockout_comparison.tex
\section{Gate Preferences and Knockout Effects across Models}
\label{app:gate-knockout-comparison}

In \Cref{sec:qwen3-spec}, we link gate preferences to knockout effects for three heads of one model. Here we show the same comparison in more models: every head of the four W8/S8 seed replications, and every head of five other configurations. Sharp gate preferences with knockout damage at the favored phases recur across seeds and configurations, but not in every head, and the phase-folded gate curves show little preference when windows overlap.

\subsection{Measuring Gate Preferences}
\label{app:gate-concentration}

In \Cref{sec:qwen3-spec}, we summarize each gate by two effective numbers of offsets (\Cref{fig:phase-emergence}a). For one window $j$ of input $x$, collect the gate scores of its $W$ offsets into the probability vector $\valpha^B_{\ell,h,j}(x)\in\mathbb R^W$, and let $H(p)=-\sum_i p_i\log p_i$. The vertical axis is the mean per-window effective number of offsets, $\mathbb E_{x,j}[\exp(H(\valpha^B_{\ell,h,j}(x)))]$, averaged over all compression windows of a held-out validation set of natural-language pretraining text. The horizontal axis is the effective number of offsets of the mean gate, $\exp(H(\mathbb E_{x,j}[\valpha^B_{\ell,h,j}(x)]))$. Both range from 1 to $W$, and smaller values mean greater concentration. In \Cref{sec:qwen3-spec}, we call the horizontal quantity the gate's \emph{static concentration}. The vertical quantity also includes \emph{input-dependent concentration}, which places a gate below the diagonal. A small per-window value alone does not imply a stable preference. A gate that selects one offset in each window but visits every offset equally often across inputs has a uniform mean gate and lies near $(W,1)$. For vector gates, averaging over channels can conceal preferences that differ across channels, and our channel-averaged curves (\Cref{fig:gate-knockout-overlays-overlap}b) cannot reveal them.

In \Cref{fig:gate-concentration-all}, we place the key gate of every KV head in this plane at the end of training, for every run in \Cref{tab:controlled-sweep-results} with learned scalar gates and for the W12/S12 model with four KV heads used in \Cref{app:phase-cycling}. With non-overlapping windows, a few heads per model lie along the diagonal toward the lower left, while most gates end near the right edge with varying input-dependent concentration, as in \Cref{fig:phase-emergence}a. With more KV heads per layer, the most concentrated heads reach further toward the lower left. With overlapping windows at stride four, including the DeepSeek-style kernel, many gates sit well left of the right edge but below the diagonal, so they favor some offsets across windows while also varying with the input.

In the figures of this appendix, we overlay each head's gate preferences on its knockout effects. Each panel shows one KV head, with the phase $r$ of the source key on the horizontal axis. The background shows the head's knockout effect at each phase (\Cref{sec:head-knockouts}), with red indicating impairment. The black curve, $g^K(r)$, is the head's mean gate score on key tokens at phase $r$, scaled so that a uniform gate equals one (dashed line).
The purple curve, $g^V(r)$, shows the value gate read one phase later. Retrieval needs the key, to locate the entry, and the value one token later, which is the answer. For a key at phase $r$, we therefore plot the value gate's score at phase $(r+1)\bmod S$, so that both curves and the knockout background are indexed by the key's phase. A head that retains a key and its value shows both curves peaking at the same $r$, and if the head matters for retrieval, its knockout damage should peak there as well. When $W=S$, the value of a key at the last phase $S-1$ falls in the next window, and hollow squares mark these points.

\subsection{Variation across Random Seeds}

The four seed replications develop similarly sharp gate preferences, but in different places (\Cref{fig:gate-knockout-overlays-seeds}). Each seed has four to six heads with concentrated mean key gates (entropy below 90\% of a uniform gate's), whose key-gate peaks reach 2.0 to 4.0 times the uniform level and whose knockout damage falls mainly at the favored phases. These heads lie between layers 4 and 18, but no layer has such a head in all four seeds, and their preferred key phases cover all eight phases. Among these heads, the only recurring feature is a head whose key gate favors phase 0 and whose value gate favors the value token one phase later, found at layers 11, 14, 13, and 12 in seeds 42 to 45, respectively. This variation is consistent with \Cref{sec:theory-dynamics}, where in an idealized model each head settles on a sharp selection of positions that depends on initialization, and heads need not cover complementary positions. Outside these heads, knocking out layer 0 impairs retrieval at every phase in all seeds, despite its near-uniform gate.

\subsection{Other Configurations and Overlapping Windows}

Sharp preferences also arise without rotary position embeddings. Without them, heads in layers 11, 12, 14, and 15 still favor particular phases, with knockout damage at those phases (\Cref{fig:gate-knockout-overlays}a). We show the same comparison with larger windows (W12/S12, one KV head) in \Cref{fig:gate-knockout-overlays}b and with four KV heads per layer (W8/S8) in \Cref{fig:gate-knockout-overlays-heads}. With four KV heads, the preferences become sharper than with one. Many heads spike on a single offset, and their knockout losses concentrate on a single phase, as for heads 2 of layers 5 and 16 and heads 0 of layers 10 and 12. Together, these sharp heads favor all eight phases. Overlapping windows are the exception. With overlapping eight-token windows at stride four, including the DeepSeek-style kernel, most phase-folded gate curves stay close to uniform (\Cref{fig:gate-knockout-overlays-overlap}), leaving little phase structure for knockout damage to follow. Each phase sums two offsets, however, and many gates in these models do favor some offsets across windows (\Cref{fig:gate-concentration-all}), a preference that folding into phases hides. Vector-gate scores are also averaged over channels, which can hide preferences among channels. The gate account of \Cref{sec:qwen3-spec} therefore applies most directly to non-overlapping windows.

\begin{figure}[tb]
\centering
\includegraphics[width=\linewidth,height=0.8\textheight,keepaspectratio]{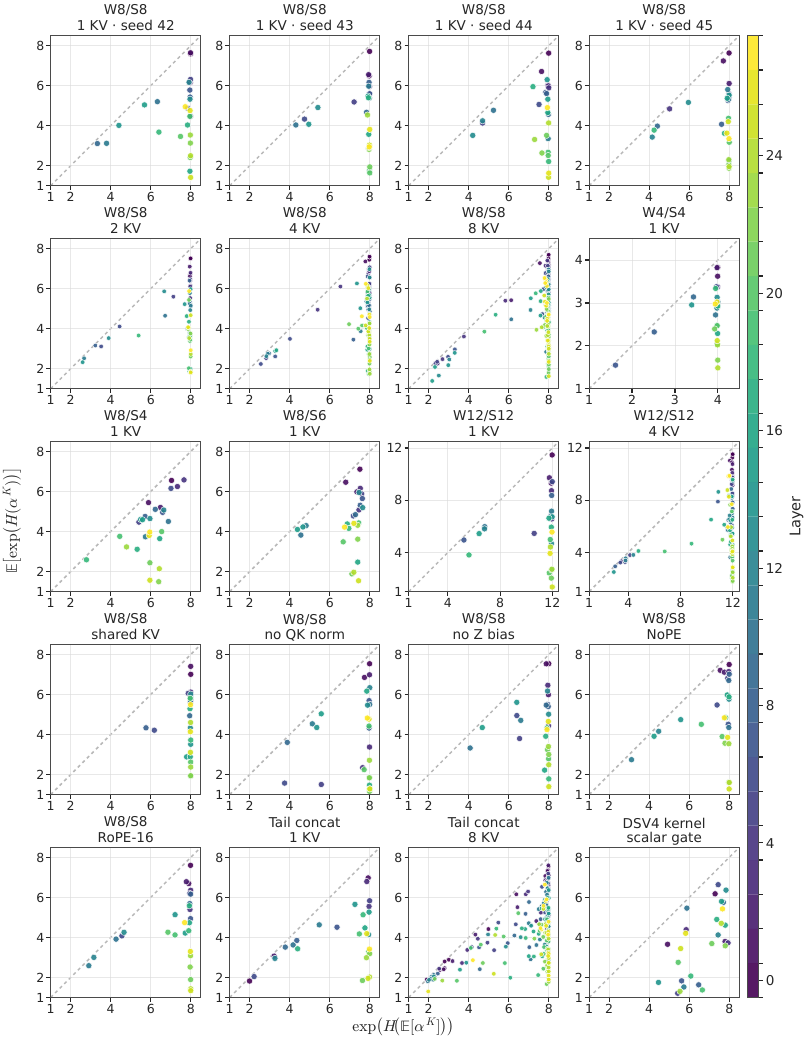}
\caption{\textbf{Final key-gate concentration across configurations} (step 47,518). Each point is one KV head's key gate, placed by its static effective number of offsets $\exp(H(\mathbb E[\valpha^K]))$ (horizontal) and its mean per-window effective number $\mathbb E[\exp(H(\valpha^K))]$ (vertical), and colored by layer. The dashed line marks equality, which holds for a gate with the same scores in every window, and lower values mean more concentration. Panels cover every run in \Cref{tab:controlled-sweep-results} with learned scalar gates and the W12/S12 model with four KV heads of \Cref{app:phase-cycling}. We omit uniform averaging, which has no learned gates, and the vector-gate runs. Both quantities are defined in \Cref{app:gate-concentration}.}
\label{fig:gate-concentration-all}
\end{figure}

\begin{figure}[tb]
\centering
\includegraphics[width=\linewidth,height=0.8\textheight,keepaspectratio]{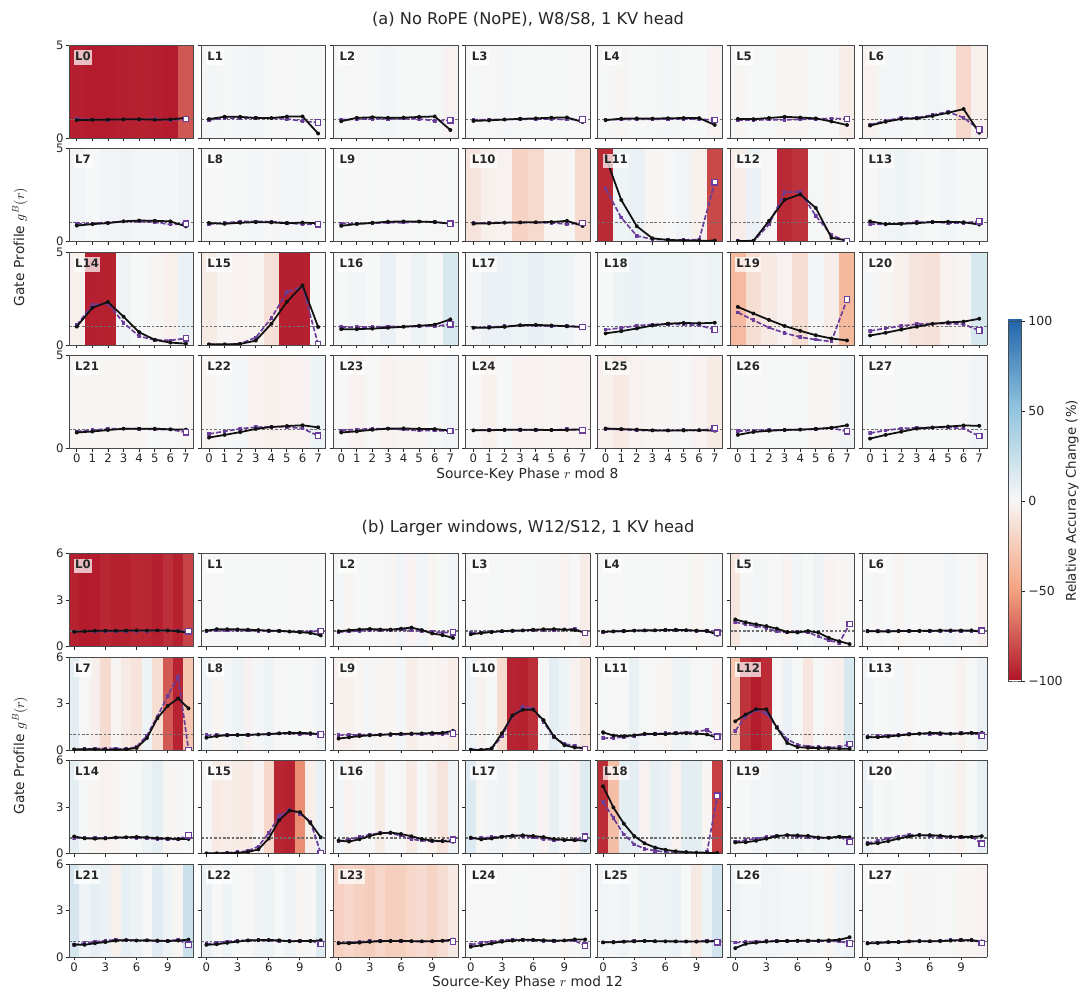}
\caption{\textbf{Gate preferences and knockout effects without positional embeddings and with larger windows.} (a) No rotary position embeddings (NoPE, W8/S8, one KV head). (b) Larger windows (W12/S12, one KV head). Without RoPE, heads still develop sharp phase preferences, with knockout damage at the favored phases (layers 11, 12, 14, and 15). Both checkpoints are final (step 47,518) and use Prefix Padding. Vertical scales are shared within each model. Encoding and gate inputs follow \Cref{fig:gate-knockout-overlays-seeds}.}
\label{fig:gate-knockout-overlays}
\end{figure}

\begin{figure}[tb]
\centering
\includegraphics[width=\linewidth,height=0.8\textheight,keepaspectratio]{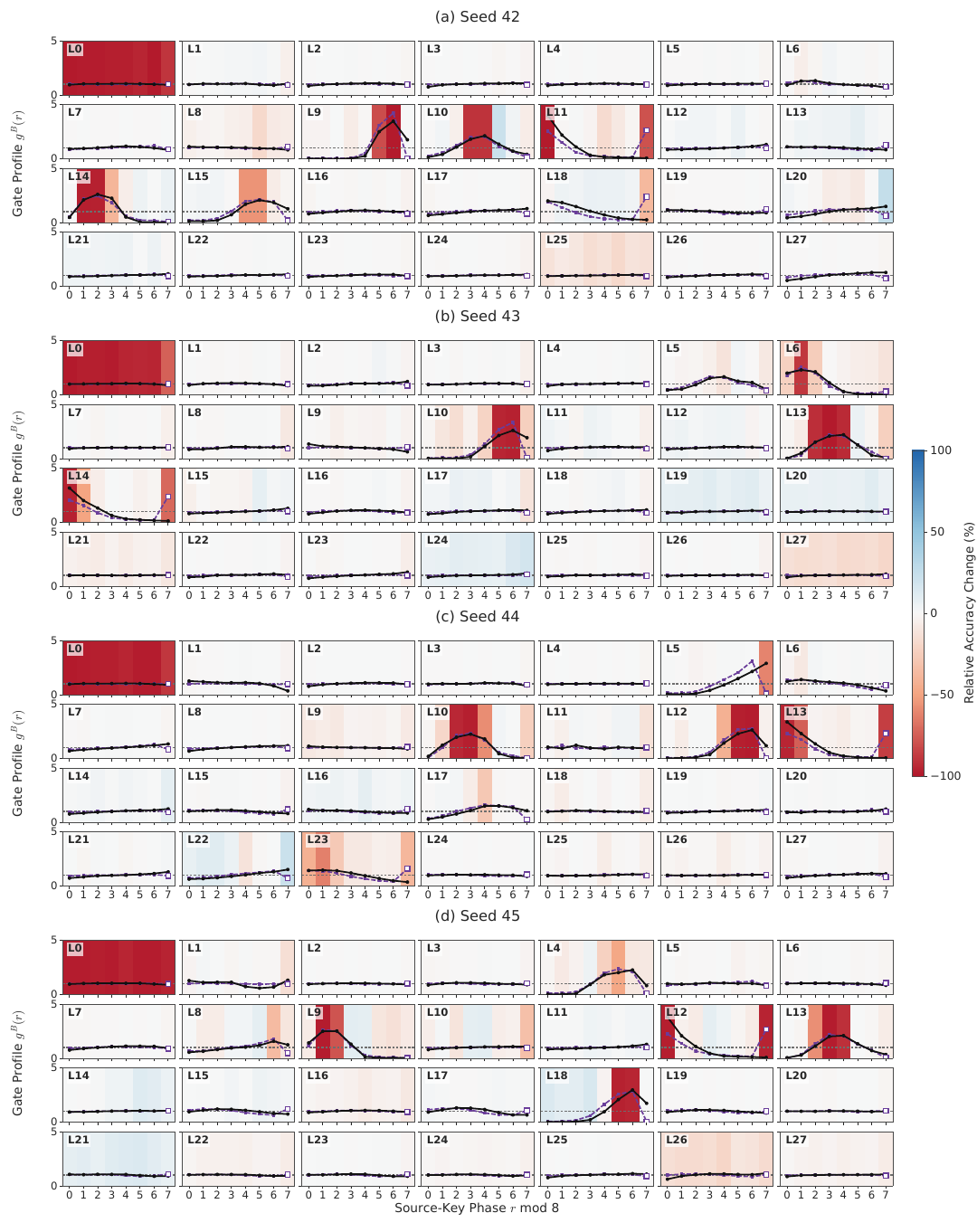}
\caption{\textbf{Gate preferences and knockout effects across random seeds} (W8/S8, one KV head, seeds 42--45, step 47,518). Each panel shows one layer's KV head, labeled L$\ell$. Black circles show the key gate $g^K$, purple squares the value gate $g^V$ read one phase later, and the dashed line a uniform gate. Hollow squares mark values that fall in the next window. The background shows the relative accuracy change under the head's knockout, saturating at $\pm100\%$, with red indicating impairment (Prefix Padding, 1,000 logical examples per phase, folded modulo $S$). Gate curves average 2,048 held-out natural-language sequences of 2,048 tokens. All seeds share one vertical scale.}
\label{fig:gate-knockout-overlays-seeds}
\end{figure}

\begin{figure}[tb]
\centering
\includegraphics[width=\linewidth,height=0.8\textheight,keepaspectratio]{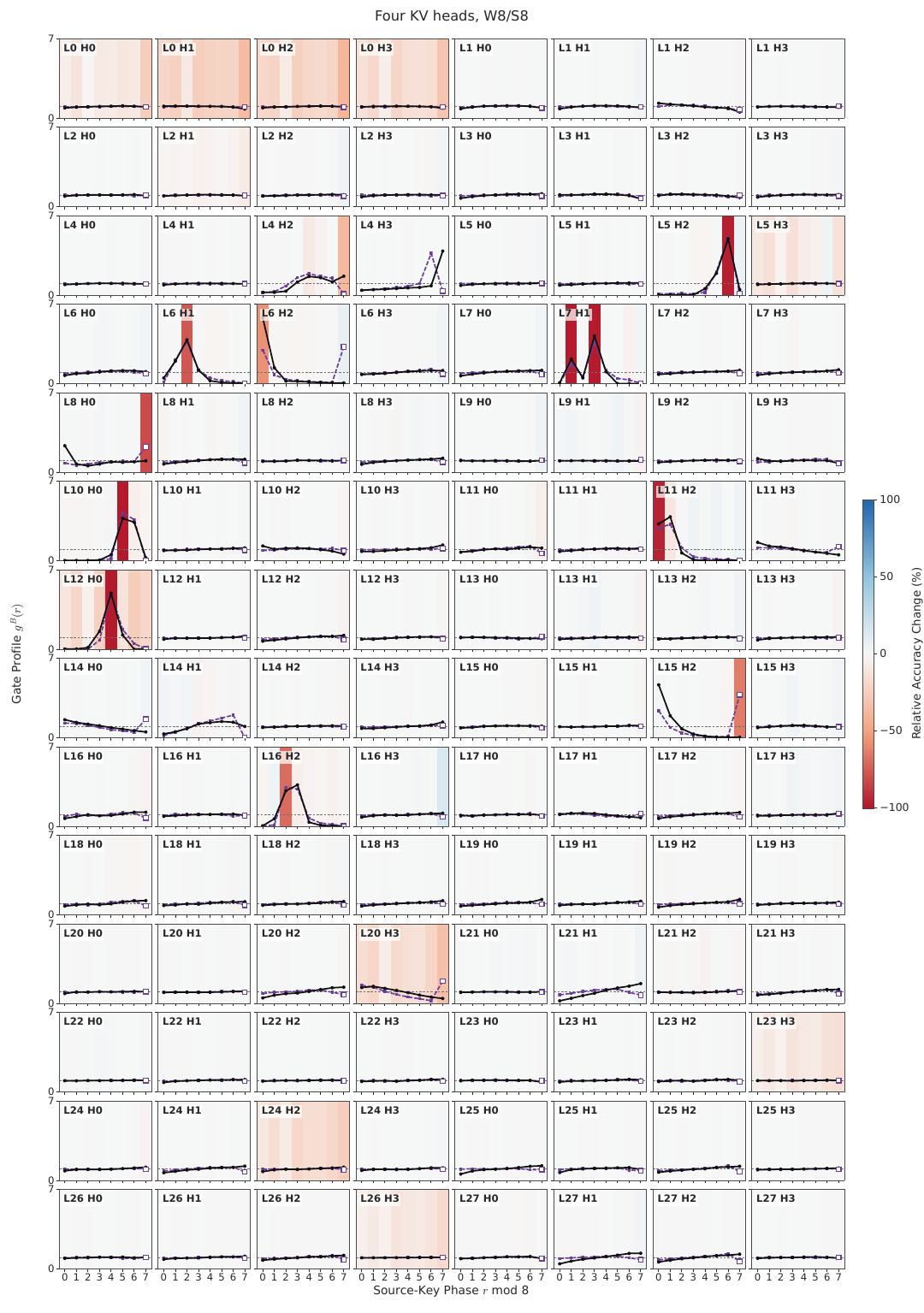}
\caption{\textbf{Gate preferences and knockout effects with four KV heads per layer} (W8/S8, step 47,518, Prefix Padding). Panels are labeled L$\ell$ H$h$ for layer $\ell$ and KV head $h$. Encoding and gate inputs follow \Cref{fig:gate-knockout-overlays-seeds}.}
\label{fig:gate-knockout-overlays-heads}
\end{figure}

\begin{figure}[tb]
\centering
\includegraphics[width=\linewidth,height=0.8\textheight,keepaspectratio]{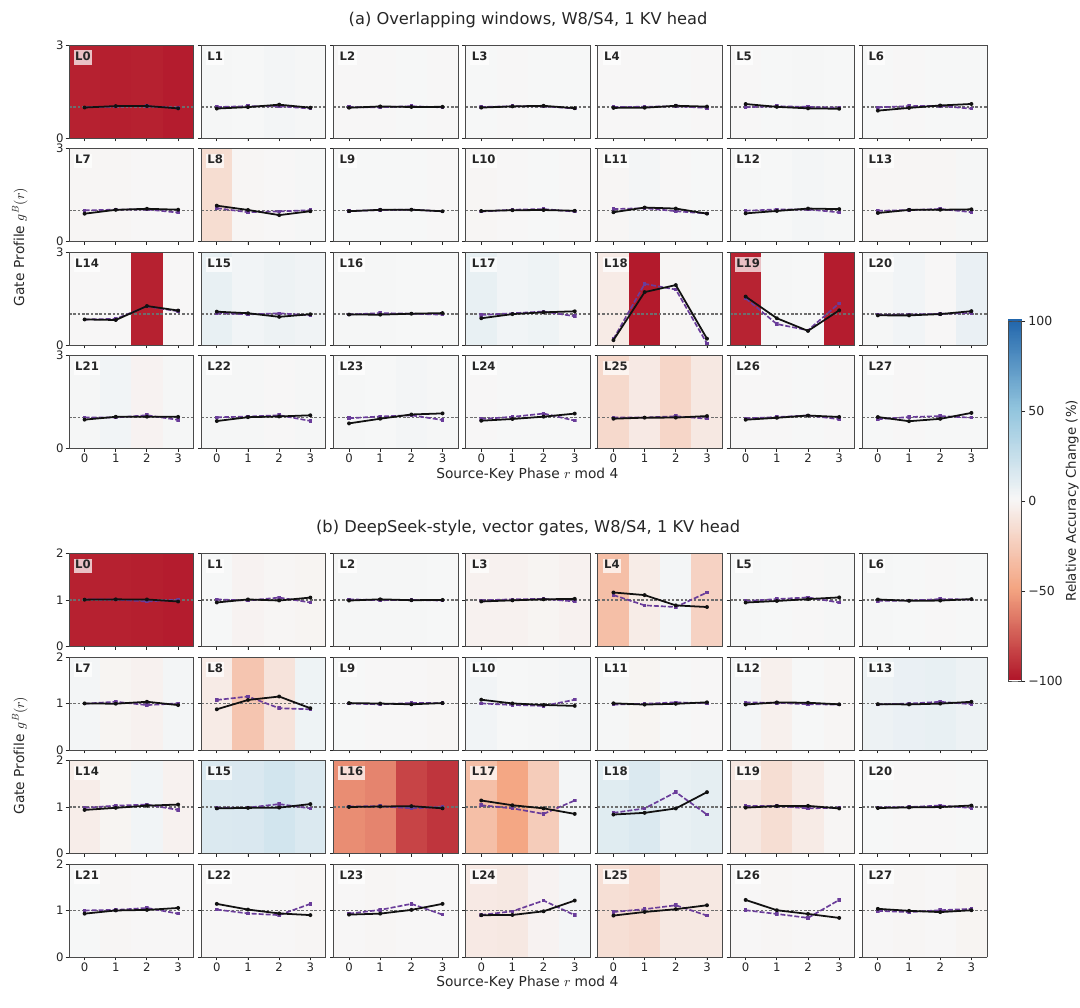}
\caption{\textbf{Gate preferences and knockout effects with overlapping windows.} (a) Overlapping windows (W8/S4, one KV head). (b) DeepSeek-style kernel with vector gates (W8/S4, one KV head). Each phase receives two of the eight window offsets, and the curves add their scores, rescaled so that a uniform gate still equals one. In (b), K and V share one gate, so the purple curve is the shared gate read one token later, and vector-gate scores are averaged over channels. No hollow markers appear because $W\neq S$. Encoding, protocol, gate inputs, and checkpoint step follow \Cref{fig:gate-knockout-overlays-seeds}.}
\label{fig:gate-knockout-overlays-overlap}
\end{figure}
\FloatBarrier
\clearpage

%% file: Appendix/Sections/gate_cycling.tex
\section{Redirecting Gate Preferences}\label{app:phase-cycling}

In \Cref{sec:qwen3-spec}, we found a few gates with sharp offset preferences that persist across windows and inputs, and knocking out three of the corresponding heads in the reference model costs the most accuracy near the phases their gates favor. If such preferences help decide which phases are weak, editing them after training may move the weak phases with them. We test this by cycling each gate's preference by a fixed number of offsets and checking whether the accuracy pattern across phases shifts by as many phases.

\begin{figure}[tb]
\centering
\includegraphics[width=0.94\textwidth,height=0.8\textheight,keepaspectratio]{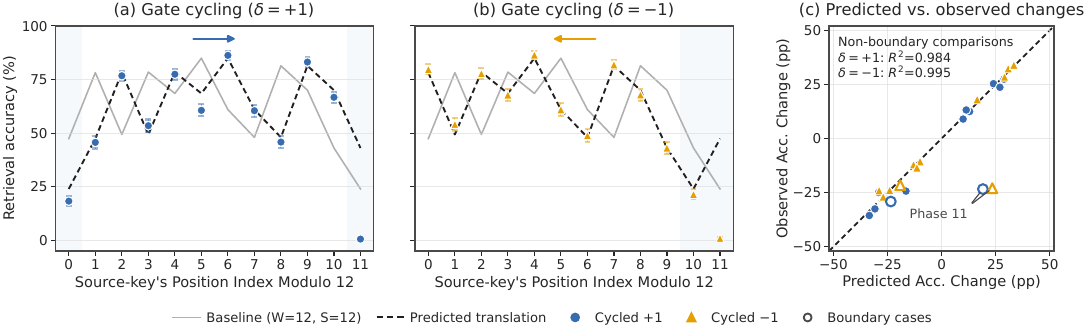}
\caption{\textbf{Gate cycling approximately shifts retrieval accuracy.} (a,b) Accuracy by source-key phase in the W12/S12 model after cycling by $\delta=+1$ and $\delta=-1$ (markers), with the uncycled model (gray) and the prediction (dashed). (c) Predicted versus observed changes and the identity line (dashed). Shaded and hollow points involve the boundary phase and are excluded from $R^2$.}
\label{fig:phase-interventions}
\end{figure}

\begin{figure}[tb]
\centering
\includegraphics[width=\textwidth,height=0.8\textheight,keepaspectratio]{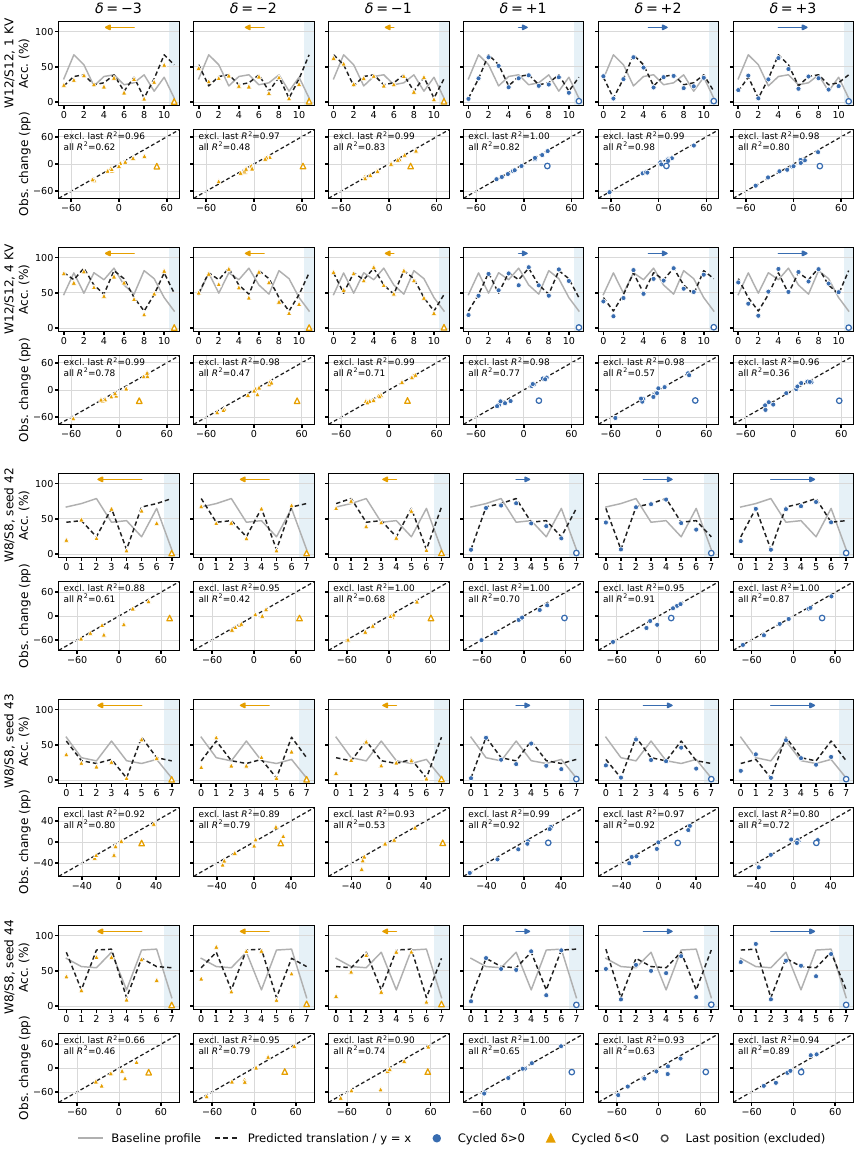}
\caption{\textbf{Gate cycling shifts retrieval accuracy across models and cycle sizes.} Each cell follows \Cref{fig:phase-interventions} for one model and one cycle $\delta$, with accuracy by source-key phase above and predicted versus observed accuracy changes below. Rows show the two W12/S12 models, the W8/S8 reference model (seed 42), and two of its seed replications (seeds 43 and 44), each W8/S8 model with one KV head. Columns show $\delta$ from $-3$ to $+3$, and arrows give its direction and size. Error bars are 95\% bootstrap intervals. Each cell reports $R^2$ without the boundary phase $S-1$ (``excl.\ last'') and over all phases (``all''). Unlike in \Cref{fig:phase-interventions}, ``excl.\ last'' excludes only comparisons whose observed phase is $S-1$.}
\label{fig:phase-cycling-models}
\end{figure}

\subsection{Gate Cycling}
\label{app:gate-cycling-method}
In models with non-overlapping windows ($W=S$), window offset $i$ is also phase $i$. A gate's preference for offset $i$ comes from its gate map $\mZ_i^B$ and positional gate bias $\vb_i^B$ (\Cref{sec:setup}). Cycling by $\delta$ moves both forward by $\delta$ offsets, while the payload maps stay fixed:
\begin{equation}\label{eq:gate-cycling}
\widetilde{\mZ}_i^B=\mZ^B_{(i-\delta)\bmod W},\qquad \widetilde{\vb}_i^B=\vb^B_{(i-\delta)\bmod W},\qquad B\in\{K,V\}.
\end{equation}
The gate parameters of offset $i$ are thus applied at offset $(i+\delta)\bmod W$. We move maps and biases together, not the biases alone, because both enter the gate logits. The hidden states stay at their original offsets, so this does not move the realized gate scores exactly, and whether the preferences move with the parameters is an empirical question. If they do, and retrieval follows the gates, the accuracy pattern should shift by $\delta$ phases,
\begin{equation}\label{eq:gate-cycling-prediction}
A_\delta(r)\approx A_0\big((r-\delta)\bmod S\big),
\end{equation}
where $A_0(r)$ and $A_\delta(r)$ are the accuracies at source-key phase $r$ before and after cycling. The prediction concerns associations within one window. Each value immediately follows its key, so the two share a window except at the boundary phase $r=S-1$, which we exclude from $R^2$ as specified for each experiment.

In each experiment, we cycle the key and value gates of every head in layers 3 to 23 (zero-based) at step 47,518 and measure full-vocabulary top-1 accuracy on 1,000 logical examples of the retrieval task in \Cref{app:small-model-niah} with Prefix Padding, folding the 24 source-key residues modulo $S$. $R^2$, the squared Pearson correlation between predicted and observed accuracy changes across phases, measures linear agreement rather than agreement with the identity line, which the figures draw. Intervals are bootstrapped over logical examples.

\subsection{One-Offset Cycles in a W12/S12 Model}
We first cycle a separate W12/S12 model with four KV heads per layer by one offset in each direction. Away from the boundary phase, both cycles shift the accuracy pattern approximately as predicted (\Cref{fig:phase-interventions}), with $R^2$ of 0.984 for $\delta=+1$ and 0.995 for $\delta=-1$ over comparisons in which phase 11 is neither the observed nor the source phase. Intervals use 2,000 bootstrap resamples. This supports a causal contribution of gate preferences to within-window retrieval in this model. Phase 11 departs from the prediction under both cycles, which limits a purely within-window account but does not by itself identify a separate retrieval circuit.

\subsection{Replication across Models and Cycle Sizes}
One model and one-offset cycles leave open whether the shift is specific to this configuration. We therefore repeat the intervention on five models: the four-KV-head W12/S12 model above, a W12/S12 model with one KV head, the W8/S8 reference model of \Cref{sec:qwen3-spec} (seed 42), and two of its seed replications (seeds 43 and 44), each W8/S8 model with one KV head. We apply each cycle $\delta\in\{\pm1,\pm2,\pm3\}$ to the original parameters. Unlike in the one-offset experiment, $R^2$ here excludes only the observed phase $S-1$, and we also report it over all phases. Intervals again use 2,000 bootstrap resamples. For the four-KV-head W12/S12 model, the cycles $\delta=\pm1$ reproduce the accuracies of \Cref{fig:phase-interventions}.

Away from the boundary phase, the observed accuracy patterns largely follow the prediction for every model and cycle (\Cref{fig:phase-cycling-models}). Excluding phase $S-1$, $R^2$ ranges from 0.96 to 1.00 for both W12/S12 models, from 0.88 to 1.00 for the W8/S8 reference model, and from 0.66 to 1.00 for the two seed replications. Editing the gates alone thus largely shifts the weak within-window phases in all five models, most closely in the W12/S12 models. It does not restore the boundary phase, whose accuracy stays at or below 2.9\% under every cycle, compared with 2.7\% to 23.9\% in the uncycled models, even where the prediction is high. Including this phase lowers $R^2$ to between 0.36 and 0.98. Cycling also lowers overall full-vocabulary accuracy by 3.3 to 18.4 percentage points, so it redirects the weak within-window phases rather than repairing the model.

Beyond \Cref{fig:phase-cycling-models}, a fourth W8/S8 seed (45) behaves like the others, with $R^2$ excluding phase $S-1$ between 0.68 and 0.99, but agreement weakens as W8/S8 models gain KV heads. The minimum over cycles is 0.75, 0.43, and 0.36 with 2, 4, and 8 KV heads, and with 8 KV heads no cycle exceeds 0.86. Because every head in layers 3 to 23 is cycled together, these results do not show whether individual heads in these models lack preferences that would shift accuracy when cycled. The layer range was carried over from the one-offset experiment rather than selected for each model.

\FloatBarrier

%% file: Appendix/Sections/extended_related_work.tex
\section{Extended Related Work}\label{app:related}
\paragraph{Learned summary memories.}
Compressive Transformer down-samples old activations into a FIFO compressed
memory using stride-matched pooling or convolution
\citep{rae2020compressive}. Gist tokens and Landmark Attention use special
tokens to condense prompts or route random access to blocks
\citep{mu2023gist,mohtashami2023landmark}. LoMA adds an explicit
reconstruction objective \citep{wang2024loma}. Activation Beacon inserts
regularly spaced learned beacon tokens whose KVs replace raw activations, and
MELODI builds a recurrent short-term memory across multiple layers and a long-term memory in a single middle layer
\citep{zhang2025activation,chen2025melodi}. CAT is particularly close at the
architectural level: later chunks attend to compressed prior-chunk vectors,
and its appendix reports high loss at the first token of each new chunk and
recall degradation as chunks grow \citep{prakash2025cat}. None of these works
examines retrieval as a function of source phase within compressed memory.

\paragraph{Cross-chunk failure modes.}
\citet{deng2025silver} systematically compare recurrent, coarse-KV, and fine-grained-KV gist compression. Their reconstruction probe shows a severe information bottleneck as compression rises, and their ``lost by the boundary'' diagnostic finds a segment-scale sawtooth: with 2K segments, generation is weakest immediately after a segment boundary, when predictions depend most strongly on compressed prior segments, and improves as raw tokens accumulate in the current segment. They also document semantically surprising facts being dropped and long exact strings being lost during generation. KSA's chunk-aligned sliding window addresses a separate case in which a partially visible chunk is neither locally complete nor yet represented by a summary \citep{chu2026ksa}. HMT uses the phrase ``periodical recall pattern'' for successful tracking of an artificially periodic topic-switch sequence, not an architecture-induced failure \citep{he2025hmt}. Among these works, the sawtooth of \citet{deng2025silver} is the direct precedent for recurring cross-chunk degradation, but its phase variable is the \emph{query/generation position} within the current 2K segment. Our phase variable is the \emph{source position} modulo the much smaller compression stride inside already-compressed memory.

\paragraph{Sparse and hybrid routing.}
Sparse Transformer explicitly defines a strided head with modulo connectivity
and notes that the pattern may route poorly on nonperiodic text
\citep{child2019sparse}. It is thus a precedent for residue-class visibility as a general
attention concept. NSA forms overlapping compressed block
representations for coarse selection and retains an exact fine-grained branch
\citep{yuan2025nsa}. MoBA routes queries to selected raw-token blocks
\citep{lu2025moba}, and the Sparse Frontier compares the quality--efficiency trade
space across such sparse patterns \citep{nawrot2025sparsefrontier}. DeepSeek-V4
selects sparsely among compressed KV entries and adds an uncompressed local
window \citep{deepseek2026v4}. In hybrids with an exact branch over distant
tokens, such as NSA, that branch could mask a failure of the compressed branch
alone. DeepSeek-V4 keeps exact tokens only in its local window, and in
\Cref{sec:modsens}, we find phase sensitivity in the full models.

\paragraph{Cache allocation and specialized retrieval components.}
H2O and SnapKV evict or cluster tokens using estimated importance
\citep{zhang2023h2o,li2024snapkv}. SqueezeAttention allocates different cache
budgets across layers \citep{wang2024squeeze}, while RazorAttention and
DuoAttention retain long-range caches for retrieval heads and use compressed or
streaming policies elsewhere \citep{tang2024razor,xiao2024duo}.
\citet{wu2025retrieval} causally identify a small subset of heads that dominate
long-context factual retrieval. Generic heterogeneity across heads and layers is
therefore already established. Our empirical claim is more specific: KV-head
knockouts in compressed models have phase-dependent effects that, in the
examined heads, match the phases their gates prefer. A supplementary
gate-cycling intervention on five models with non-overlapping windows complements this association (\Cref{app:phase-cycling}).

\paragraph{Retrieval diagnostics and position bias.}
Zoology's multi-query associative recall isolates a major capability gap
between attention and efficient sequence models \citep{arora2023zoology}. The
copying results of \citet{jelassi2024repeat} and induction-circuit studies
\citep{olsson2022induction,bietti2023birth} motivate our synthetic retrieval task.
Lost in the Middle establishes broad U-shaped sensitivity to absolute target
position \citep{liu2024lost}, and RULER shows that easy needle tests can
overstate usable context length \citep{hsieh2024ruler}. The Pitfalls of KV
Cache Compression further shows that aggregate scores hide instruction- and
method-specific failures \citep{chen2026pitfalls}. We therefore match
the mean and variance of the target's absolute position across residue groups
and report retrieval by phase rather than only mean accuracy.

\paragraph{Optimization and specialization.}
Max-margin analyses show that gradient descent can turn softmax attention into
hard token selectors \citep{tarzanagh2023svm,tarzanagh2023maxmargin}, although entropy
concentration can also be pathological \citep{zhai2023entropy}.
\citet{chen2024dynamics} prove staged multi-head task allocation in in-context
linear regression, and \citet{yang2025twomixture} analyze population gradient
flow in a symmetric two-headed classification transformer. Most closely related,
\citet{yuksel2026incremental} analyze near-symmetric heads that first compete
for a dominant sparse pattern and later specialize through competition and
cooperation. High-dimensional SGD likewise exhibits sequential head
specialization \citep{sagitova2026specialization}, while generic softmax
factorization can bias gradient flow toward low-entropy outputs
\citep{varre2026polarizes}. We therefore do not claim the first theory of
attention hardening, symmetry breaking, or division of labor. Our theory
(\Cref{sec:theory}) concerns an idealized induction model of retrieval from
compressed memory. In this model, optimal compressors select sharply, gradient
flow from a small initialization reaches, with high probability, a selection that
persists across inputs, and nothing guarantees
that heads choose distinct positions.